\newcommand{\expect}{\operatorname{\mathbb{E}}\expectarg}
\DeclarePairedDelimiterX{\expectarg}[1]{[}{]}{%
  \ifnum\currentgrouptype=16 \else\begingroup\fi
  \activatebar#1
  \ifnum\currentgrouptype=16 \else\endgroup\fi
}
\newcommand{\LinesNumbered}{
  \setboolean{algocf@linesnumbered}{true}%
  \renewcommand{\algocf@linesnumbered}{\everypar={\nl}}}%
\let\oldnl\nl
\newcommand{\nonl}{\renewcommand{\nl}{\let\nl\oldnl}}
\newcommand{\innermid}{\nonscript\;\delimsize\vert\nonscript\;}
\newcommand\tab[1][1cm]{\hspace*{#1}}
\newcommand{\quotes}[1]{``#1''}
\newcommand{\activatebar}{%
  \begingroup\lccode`\~=`\|
  \lowercase{\endgroup\let~}\innermid 
  \mathcode`|=\string"8000
}
\algnewcommand{\Inputs}[1]{%
  \State \textbf{Inputs:}
  \Statex \hspace*{\algorithmicindent}\parbox[t]{.8\linewidth}{\raggedright #1}
}
\algnewcommand{\Initialize}[1]{%
  \State \textbf{Initialize:}
  \Statex \hspace*{\algorithmicindent}\parbox[t]{.8\linewidth}{\raggedright #1}
}
\algnewcommand{\Data}[1]{%
  \State \textbf{Data:}
  \Statex \hspace*{\algorithmicindent}\parbox[t]{.8\linewidth}{\raggedright #1}
}
\DeclareMathOperator{\E}{\mathbb{E}}
\DeclareMathOperator{\arcsinh}{arcsinh}
\newtheorem{lem}{Lemma}
\newtheorem{pro}{Proposition}
\newtheorem{claim}{Claim}
\begin{document}
\title{Stochastic Large-scale Machine Learning Algorithms with Distributed Features and Observations}

\author{\name Biyi Fang \email biyifang2021@u.northwestern.edu \\
    \addr Department of Engineering Science and Applied Mathematics\\
    Northwestern University\\
    Evanston, IL 60208, USA
    \AND
    \name  Diego Klabjan \email d-klabjan@northwestern.edu \\
    \addr Department of Industrial Engineering and Management Sciences\\
    Northwestern University\\
    Evanston, IL 60208, USA}
    
\editor{}

\maketitle



\begin{abstract}
\noindent As the size of modern datasets exceeds the disk and memory capacities of a single computer, machine learning practitioners have resorted to parallel and distributed computing. Given that optimization is one of the pillars of machine learning and predictive modeling, distributed optimization methods have recently garnered ample attention, in particular when either observations or features are distributed, but not both.  
We propose a general stochastic algorithm where observations, features, and gradient components can be sampled in a double distributed setting, i.e., with both features and observations distributed. Very technical analyses establish convergence properties of the algorithm under different conditions on the learning rate (diminishing to zero or constant). Computational experiments in Spark demonstrate a superior performance of our algorithm versus a benchmark in early iterations of the algorithm, which is due to the stochastic components of the algorithm.\\
\noindent \textbf{Keywords.} optimization, machine learning, stochasticity, convexity, large scale
\end{abstract}

\section{Introduction}
\tab As technology advances, collecting and analyzing large-scale and real time data has become widely used in a variety of fields. Large-scale machine learning can not only present a useful summary of  a dataset but it can also make predictions. In the era of big data, large scale datasets have become more accessible. \quotes{Large} usually refers to both the number of observations and high feature dimension. For example, an English Wikipedia dataset can have 11 million documents (observations) and over several hundred of thousands unique word types (features). This demands a sophisticated large-scale machine learning system able to take advantages of all available information from such datasets and not only random samples. On the other hand, as large scale data is becoming more accessible, storing the whole dataset on a single server is often impossible due to the inadequate disk and memory capacities. Consequently, it is considerable to store and analyze them distributively. Often the data collection process by design distributes observations and features. There is a large amount of literature dealing with optimization problems subject to a dataset with either distributed observations or distributed features. Nevertheless, very limited contribution has been made to the case where both the observations and features are in a distributed environment. 

In this paper, we propose an algorithm,  namely SODDA (StOchastic Doubly Distributed Algorithm), designed for a doubly distributed dataset and inspired by the work  \cite{harikandeh2015stopwasting} and \cite{nathan2017optimization}. The algorithm is aimed to solve a series of optimization problems which can be formulated as the minimization of a finite sum of convex functions plus a convex regularization term if necessary. SODDA is a primal method building on the  previous RAndom Distributed Stochastic Algorithm (RADiSA) \cite{nathan2017optimization}. SODDA  first further splits the partitions (a partition is a set of features and observations stored locally) with respect to features into sub-partitions with no overlap; then in each iteration, randomly chooses sub-partitions associated with different blocks of features; lastly, similar to stochastic gradient descent (SGD), updates in parallel each sub-block of the current local solution by using observations from the randomly selected sub-partition of local observations and the local sub-block of features, coupled with the Stochastic Variance-Reduced Gradient (SVRG). One generalization of SVRG utilized in SODDA is that SODDA does not require a full solution update; instead, it allows each sub-block of the current local solution to be updated individually and assembled at the end of each iteration. Although, we might amplify the error by approximately computing the gradient, SODDA reduces the communication cost significantly. Another technique aiming to cut down the communication cost is estimating the full gradient needed as part of the SVRG component, which is a big distinction between SODDA and RADiSA. RADiSA requires the full gradient in each outer iteration, which is computationally demanding, especially when the solution is far from an optimal solution. SODDA has three stochastic components: the first two are that it randomly selects blocks of local features and subsets of local observations to execute the estimated gradient, and the third component that randomly chooses further sub-blocks of local features to record the approximated gradient, which contributes to a reduction of the number of gradient coordinate computations required in early iterations. In other words, only random coordinates of the gradient are computed.

In this paper, we not only propose a more computationally efficient method, SODDA, when compared to RADiSA \cite{nathan2017optimization}, but also present a complete technical proof of convergence. For a smooth and strongly convex function, we prove that SODDA enjoys at least a sublinear convergence rate and a linear convergence rate for a  diminishing learning rate and a constant learning rate, respectively. Furthermore, we prove that SODDA iterates converge to an optimal solution when using a constant learning rate selected from a certain interval. Moreover, the convergence property of RADiSA, which is not provided in \cite{nathan2017optimization}, is implied directly from SODDA. In summary, we make the following five contributions.
\begin{itemize}
\item We provide a better scalable  stochastic doubly distributed method, i.e. SODDA, for doubly distributed datasets. This algorithm  does not require the calculation of a full gradient, thus it is a less computationally intensive methodology for doubly distributed setting problems.
\item We provide a proof of a sublinear convergence result for smooth and strongly-convex loss functions when using a sequence of decreasing learning rates. 
\item We show that SODDA iterates converge with linear rate to a neighborhood of an optimal solution  when using an arbitrary constant learning rate.
\item  We further argue that SODDA iterates converge to an optimal solution in the strongly-convex case when using any constant learning rate in a specified interval.
\item We present numerical results showing that SODDA outperforms RADiSA-avg, which is the best doubly distributed optimization algorithm in \cite{nathan2017optimization}, on all instances considered in early iterations. More precisely, SODDA finds good quality solutions faster than RADiSA-avg. 
\end{itemize}
 
The paper is organized as follow. In the next section, we review several related works in distributed optimization. In Section 3, we state the formal optimization problem and standard assumptions underlying our analyses, followed by the exposition of the SODDA algorithm. In Section 4, we show the convergence analyses of SODDA with respect to both a decreasing learning rate and a constant learning rate. In Section 5, we present experimental results comparing SODDA with RADiSA-avg. 

\section{Related Work}
There are a large number of extensions of the plain stochastic gradient descent algorithm related to distributed datasets, however, a full retrospection of this immense literature exceeds the scope of this work. In this section, we state several approaches which are most related to our new method and interpret the relationships among them.

In plain SGD, the gradient of the aggregate function is approximated by one randomly picked function \cite{robbins1951stochastic}. It saves a heavy load of computation when compared with gradient descent, whereas  more often than not, the convergence happens to be slow. Recently, a large variety of approaches have been proposed targeted on accelerating the convergence rate and dealing with observations in the distributed setting. \vspace{0.5cm}\\
\textit{\textbf{SGD for distributed observations:}} One attempt that works for datasets with distributed observations is parallelizing it by means of mini-batch SGD. Both the synchronous version \cite{chen2016revisiting} and the asynchronous version \cite{tsitsiklis1986distributed}  basically work in the following way: the parameter server performs parameter updates after all worker nodes send their own gradients based on local information in parallel, and then broadcasts the updated parameters to all worker nodes afterwards. In the synchronous approach, the master node needs to wait until all gradients are collected but in the asynchronous approach, the master node performs updates whenever it is needed. An alternative method which introduces the concept of variance reduced is CentralVR \cite{de2016efficient}, where the master node not only needs to spread parameters but also the full gradient after every certain number of iterations, and each worker node would involve the full gradient as a corrector when computing their own gradients. As a consequence, the variance in the estimation of the  stochastic gradient could be reduced and a larger learning rate is allowed to accomplish faster convergence and higher accuracy. \vspace{0.5cm}\\
\textit{\textbf{SGD for distributed features:}} Another attempt for distributed features is parallelization via features. Block successive upper bound minimization (BSUM)\cite{hong2015unified} is one of the methods working for datasets with distributed features, where the master node spreads all parameters and each worker node conducts parameter updates on a randomly chosen and non-overlapping subset of the feature vector. Distributed Block Coordinate Descent  \cite{marevcek2015distributed} is another approach designed for datasets with distributed features. The parameters associated with these feature blocks are partitioned accordingly. In the algorithm, each processor randomly chooses a certain number of blocks out of those stored locally, performs the corresponding parameter updates in parallel, and then transmits to other processors. However, it is impossible to avoid communication when computing the gradient of all parameters unless there are extra assumptions on the objective loss function, which does not usually hold. An alternative approach is Communication-Efficient Distributed Dual Coordinate Ascent
(CoCoA) \cite{jaggi2014communication}, which is a primal-dual method also working for data with distributed features. By exploiting the fact that the associated blocks of dual variables work in different processors without overlap, the algorithm aggregates the parallel updates efficiently from the different processors without much conflict and reduces the necessary communication dramatically. A faster converging method  extended from the aforementioned approach is CoCoA$^+$  \cite{ma2015adding}, which allows a larger learning rate for parameter updates by introducing a more generalized local CoCoA subproblem at each processor. \vspace{0.5cm}\\
\textit{\textbf{SGD for distributed observations and features:}} Given datasets with distributed observations and features, all the  methods mentioned so far are not applicable. One of the algorithms fitting the bill is a block distributed ADMM \cite{parikh2014block}, which is the block splitting variant of ADMM. Nonetheless, the convergence rate of ADMM-based methods is slow. Random parallel stochastic algorithm (RAPSA)\cite{mokhtari2016class} is another algorithm which utilizes multiple parallel units to operate on a randomly chosen subset of blocks of the feature vector. It needs to access the whole feature vector to perform a parameter update while SODDA only needs a subset of the feature. Decentralized double stochastic averaging gradient algorithm (DSA) \cite{mokhtari2016dsa} is an alternative method designed for doubly distributed datasets, whereas the global cost function that DSA optimizes is a linear combination of the local objective functions which only contain local parameters, compared to the loss function of SODDA which contains global parameters. A faster converging and more pertinent algorithm is RADiSA \cite{nathan2017optimization}, which is also focusing on settings where both the observations and features of the problem at hand are stored in a distributed fashion. RADiSA conducts parameter updates based on stochastic partial gradients, which are calculated from randomly selected local observations and a randomly assigned sub-block of local features in parallel. RADiSA is a special case of SODDA, since the full gradient required by it is replaced by an approximated gradient which only uses partial observations and features. Consequently, SODDA provides a faster convergence than RADiSA without sacrificing too much accuracy. Meanwhile, we present technical convergence analyses for SODDA under different types of learning rates which imply the convergence of RADiSA.

\section{Algorithm}
We consider the problem of optimizing a finite but large sum of smooth functions, i.e., given a training set $\left\{(x_i,y_i)\right\}_{i=1}^N$, where each $x_i^T\in\mathbb{R}^d$ is associated with a corresponding label $y_i$,
\begin{align}
\label{obj}
\min_{\omega\in\mathbb{R}^d}F(\omega):=\frac{1}{N}\sum_{i=1}^N \bar{f}(x_i\omega,y_i)=\frac{1}{N}\sum_{i=1}^N f_i(x_i\omega).
\end{align}
Several machine learning loss functions fit this model, e.g. least square, logistic regression, and hinge loss. 

In SODDA, we assume that the training set $\left\{(x_i,y_i)\right\}_{i=1}^N$ is distributed across both observations and features. More specifically, the features and observations are split into $Q$ and $P$ partitions, respectively. We denote the matrix corresponding to the $p$'th observation partition and its $q$'th feature partition as $x^{p,q}\in\mathbb{R}^{\frac{N}{P}\times\frac{d}{Q}}$. Note that the partitions consisting of same features share the common block of parameters $\omega_{[q]}$. In Figure \hyperref[fig:1]{1}, there are 12 partitions. Parameters $\omega_{[1]}$ correspond to all parameters under $q=1$. In order to efficiently parallelize the computation, optimization of each partition can be done concurrently by considering only local observations and features. This strategy poses a big challenge in how to  combine the parameters. For example, $\omega_{[1]}$ are modified by all processors working on $x^{1,1}$, $x^{2,1}$, $x^{3,1}$, $x^{4,1}$. It is unclear how to combine them (averaging them is a possible strategy however this would not yield convergence, see e.g. \cite{weimer2010convenient}, \cite{zinkevich2010parallelized}). To circumvent this, we further artificially subdivide the features.  

To this end, we define a function $\pi_q(p):\left\{1,2,\cdots,P\right\}\to\left\{1,2,\cdots,P\right\}$ in the corresponding $q$'th feature partition, where $P$ is the number of partitions for observations. A sub-matrix of the training set from the partition $x^{p,q}$  corresponding to block $\omega_{q,\pi_q(p)}$ is denoted by $x^{p,q,\pi_q(p)}$, see Figure \hyperref[fig:1]{1}. Each processor operates on random observations from partition $x^{p,q}$ and all feature in $x^{p,q,\pi_q(p)}$. Note that $\omega_{[q]}=\left(\omega_{q,\pi_q(p)}\right)_{p=1}^P$. Let us define $n=N/P$, $m=d/Q$, $\tilde{m}=d/QP$, and $j_{q,\pi_q(p)}$ be randomly chosen from $\left\{ 1,2,\cdots,n \right\}$ associated with sub-block $x^{p,q,\pi_q(p)}$. In Figure \hyperref[fig:1]{1}, where $P=3$ and $Q=4$, $x_{j_{12}}^{3,1,\pi_1(3)=2}\in\mathrm{R}^{\tilde{m}}$ represents a random observation $j_{12}$ with a subset of features selected from the 2nd sub-block of the block corresponding to the observation partition $3$ and feature partition $1$ given $\pi_1(3)=2$. Similarly, $x_{j_{23}}^{4,2,\pi_2(4)=3}$ symbolizes a random observation $j_{23}$ with a subset of features selected from the 3rd sub-block of the block $x^{4,2}$.
\begin{figure}
\centering
\resizebox{\textwidth}{!}{
\begin{tabular}{rllllllllllll}
\cline{2-13}
&\multicolumn{1}{|l:}{$\omega_{11}$} & \multicolumn{1}{l:}{$\omega_{12}$} & \multicolumn{1}{l:}{$\omega_{13}$ } & \multicolumn{1}{l|}{$\omega_{14}$} & \multicolumn{1}{l:}{$\omega_{21}$} & \multicolumn{1}{l:}{$\omega_{22}$ } & \multicolumn{1}{l:}{$\omega_{23}$} &\multicolumn{1}{l|}{$\omega_{24}$} & \multicolumn{1}{l:}{$\omega_{31}$}& \multicolumn{1}{l:}{$\omega_{32}$} & \multicolumn{1}{l:}{$\omega_{33}$} & \multicolumn{1}{l|}{$\omega_{34}$} 
\\ \cline{2-13}
&\multicolumn{4}{c}{$\underbrace{\quad\quad\quad\quad\quad\quad\quad\quad\quad\quad\quad\quad\quad\quad\quad\quad\quad\quad\quad\quad\quad}_{\omega_{[1]}}$}&\multicolumn{4}{c}{$\underbrace{\quad\quad\quad\quad\quad\quad\quad\quad\quad\quad\quad}_{\omega_{2}}$}&\multicolumn{4}{c}{$\underbrace{\quad\quad\quad\quad\quad\quad\quad\quad\quad\quad\quad\quad}_{\omega_{3}}$}\\
& && & & &&&&&&&                      \\ 
\cline{2-13}
\multicolumn{1}{r}{$p=1\left \{  \right.$}&\multicolumn{1}{|l:}{$x_{j_{11}}^{1,1,\pi_1(1)=1}$} & \multicolumn{1}{l:}{} & \multicolumn{1}{l:}{ } & \multicolumn{1}{l|}{} & \multicolumn{1}{l:}{} & \multicolumn{1}{l:}{ } & \multicolumn{1}{l:}{} &\multicolumn{1}{l|}{} & \multicolumn{1}{l:}{}& \multicolumn{1}{l:}{} & \multicolumn{1}{l:}{} & \multicolumn{1}{l|}{} \\\cline{2-13}
\multicolumn{1}{r}{$p=2\left \{  \right.$}&\multicolumn{1}{|l:}{} & \multicolumn{1}{l:}{} & \multicolumn{1}{l:}{ } & \multicolumn{1}{l|}{$x_{j_{14}}^{2,1,\pi_1(2)=4}$} & \multicolumn{1}{l:}{} & \multicolumn{1}{l:}{ } & \multicolumn{1}{l:}{} &\multicolumn{1}{l|}{} & \multicolumn{1}{l:}{}& \multicolumn{1}{l:}{} & \multicolumn{1}{l:}{} & \multicolumn{1}{l|}{$x_{j_{34}}^{2,3,\pi_3(2)=4}$} \\ \cline{2-13}
\multicolumn{1}{r}{$p=3\left \{  \right.$}&\multicolumn{1}{|l:}{} & \multicolumn{1}{l:}{$x_{j_{12}}^{3,1,\pi_1(3)=2}$} & \multicolumn{1}{l:}{ } & \multicolumn{1}{l|}{} & \multicolumn{1}{l:}{} & \multicolumn{1}{l:}{ } & \multicolumn{1}{l:}{} &\multicolumn{1}{l|}{} & \multicolumn{1}{l:}{}& \multicolumn{1}{l:}{} & \multicolumn{1}{l:}{} & \multicolumn{1}{l|}{} \\ \cline{2-13}
\multicolumn{1}{r}{$p=4\left \{  \right.$}&\multicolumn{1}{|l:}{} & \multicolumn{1}{l:}{} & \multicolumn{1}{l:}{$x_{j_{13}}^{4,1,\pi_1(4)=3}$ } & \multicolumn{1}{l|}{} & \multicolumn{1}{l:}{} & \multicolumn{1}{l:}{ } & \multicolumn{1}{l:}{$x_{j_{23}}^{4,2,\pi_2(4)=3}$} &\multicolumn{1}{l|}{} & \multicolumn{1}{l:}{}& \multicolumn{1}{l:}{} & \multicolumn{1}{l:}{} & \multicolumn{1}{l|}{} \\ \cline{2-13}
&\multicolumn{4}{c}{$\underbrace{\quad\quad\quad\quad\quad\quad\quad\quad\quad\quad\quad\quad\quad\quad\quad\quad\quad\quad\quad\quad\quad}_{q=1}$}&\multicolumn{4}{c}{$\underbrace{\quad\quad\quad\quad\quad\quad\quad\quad\quad\quad\quad}_{q=2}$}&\multicolumn{4}{c}{$\underbrace{\quad\quad\quad\quad\quad\quad\quad\quad\quad\quad\quad\quad}_{q=3}$}\\
\end{tabular}}
\label{fig:1}
\caption{$Q=3$, $P=4$}
\end{figure}

Next, we introduce notation for the partial gradient. For any $\mathcal{C}\subseteq \left\{1,\cdots,d\right\}$ and any $j$, let us denote $\bar{\triangledown}_{w_{\mathcal{C}}}f_j(\cdot)\in \mathbb{R}^{d}$ as the vector defined by
\begin{align*}
\left(\bar{\triangledown}_{\omega_{\mathcal{C}}}f_j\left(\cdot\right)\right)_k=
\left\{\begin{matrix}
0,\quad &k \notin\mathcal{C}\\ 
\left(\triangledown f_j\left(\cdot\right)\right)_k,\quad &k\in\mathcal{C}. 
\end{matrix}\right.
\end{align*} 
We need this notation since we sample gradient components. The loss function using this notation becomes
\begin{align*}
F(\omega)=\frac{1}{N}\sum_{k=1}^P\sum_{j=1}^n f_j^k\left( \sum_{q=1}^Q\sum_{p=1}^P x_j^{k,q,\pi_q(p)}\omega_{q,\pi_q(p)}\right),
\end{align*}
where $f_j^k$ is $\bar{f}$ associated with observation $j$ in observation partition $k$.

Given the fact that the data is doubly distributed, SODDA further divides the features $x^{\cdot,q}$ into $P$ subsets along all observations, i.e. $x^{\cdot,q}=[x^{\cdot,q,1},\cdots,x^{\cdot,q,P}]$. In each iteration, SODDA first computes an approximation of the full gradient at the current parameter vector. Then, after randomly choosing a sub-matrix from each matrix $x^{q,p}$ as long as there is no overlap with respect to $\omega$, each processor is assigned a sub-matrix of the local dataset and updates its local parameter by employing generalized SVRG. In the end of each iteration, SODDA concatenates all partial parameters which becomes the incumbent parameter vector for the next iteration.

The entire algorithm is exhibited in Algorithm \ref{alg:1}. Steps~\ref{init1}-~\ref{var_init} initiate all the parameters. Steps~\ref{b_t}-\ref{d_t} give the subsets of features and observations used to compute the partial gradient of the current iterate $\tilde{\omega}$ in step~\ref{grad}. Since the dataset is doubly distributed, the algorithm computes an estimate of the exact full gradient so as to reduce the communication cost in step~\ref{grad}. Additionally, we use the term no feature sampling to address the case when $\mathcal{b}^t = d$; in other words, the whole feature vector is employed to compute the gradient $\mu^t$. To this end, we have three random components. The first one is the common one to sample observations. The second one is to compute only a random subset of subgradient coordinates, and the third one is to evaluate these components not at the exact $x\omega$ but only on the subset of the underlying inner product summation terms. Step~\ref{11} determines how sub-blocks are selected in each block of the dataset associated with $\omega_{[q]}$, for each $q$. The definition of $\left(\pi_q\right)_{q=1}^{Q}$ guarantees that one, and only one sub-block is selected with respect to $\omega_{q,\pi_q(p)}$, i.e. $x^{p,q,\pi_q(p)}$. Then, for each sub-block $x^{p,q,\pi_q(p)}$, after randomly picking an observation $x^{p,q,\pi_q(p)}_{j_{q,\pi_q(p)}}$ in the selected sub-block in step~\ref{15}, each block of parameter updates is given in step~\ref{update}. Instead of using the full vector, we estimate the stochastic gradient by using local features and narrow down the variance by involving the approximated full gradient. Finally, at the end of each iteration, after each processor finishes its own task, step~\ref{agg} aggregates all the updated partial solutions, i.e. $\omega=\left [ \omega_{[1]},\omega_{[2]},\cdots,\omega_{[Q]} \right ]$, where partial parameters $\omega_{[q]}$ represent the concatenation of the local parameters $\omega_{q,\pi_q(p)}$, for $p=1,2,\cdots,P$. 

\begin{algorithm}[H]
  \caption{SODDA}
  \scriptsize
  \label{alg:1}
  \begin{algorithmic}[1]
    \Inputs{batch size B, learning rate $\gamma_t$, sequence $\left\{\mathcal{b}^t,\mathcal{c}^t,\mathcal{d}^t \right\}_{t=0}^{\infty}$ where $\mathcal{c}^t\leq\mathcal{b}^t\leq d$, $\mathcal{d}^t\leq N$ for every $t$ }\label{init1}
    \Data{$x^{p,q,k}\in \mathrm{R}^{n\times \tilde{m}}$ for $p,k=1,\cdots,P$ and $q=1,\cdots,Q$, $\omega_{q,\pi_q(p)}\in \mathrm{R}^{\tilde{m}}$}\label{data_init}
    \Initialize{$w^0 \gets 0$}\label{var_init}
    \For{t = $0,1,2,\cdots$ }
      \State $\mathcal{B}^t$=$\mathcal{b}^t$ elements uniformly at random sampled without replacement from all features\label{b_t}
      \State $\mathcal{C}^t$=$\mathcal{c}^t$ elements uniformly at random sampled without replacement from $\mathcal{B}^t$\;\label{c_t}
      \State $\mathcal{D}^t$=$\mathcal{d}^t$ elements uniformly at random sampled without replacement from all observations\label{d_t}
      \State $\mu^t=\frac{1}{\mathcal{d}^t}\sum_{j\in \mathcal{D}^t}\bar{\triangledown}_{\omega_{\mathcal{C}^t}} f_j(x_{j}^{\mathcal{B}^t}\omega^t_{\mathcal{B}^t})$\label{grad}
        \For{$q=1,2,\cdots,Q$,} \label{inner_begin}
        \State select function $\left(\pi_q\right)_{q=1}^Q$\label{11}
        \EndFor 
          \State \textbf{end for}
        \For{$p=1,2,\cdots,P$ and $q=1,2,\cdots,Q$,} \textbf{in parallel}\label{inner_start}
        \State $\bar{\omega}^{(0)}_{q,\pi_q(p)}=\omega^t_{q,\pi_q(p)}$\label{transfer}
          \For{$i=0,\cdots,B-1$}\label{for}
          \State randomly pick $j_{q,\pi_q(p)}\in \left\lbrace 1,\cdots,n\right\rbrace$ \label{15}
          \State  $\bar{\omega}^{(i+1)}_{q,\pi_q(p)}=\bar{\omega}^{(i)}_{q,\pi_q(p)}-\gamma_{t+1} \left [ \triangledown_{\omega_{q,\pi_q(p)}} f_{j_{q,\pi_q(p)}}^{\pi_q(p)}(x_{j_{q,\pi_q(p)}}^{p,q,\pi_q(p)}\bar{\omega}^{(i)}_{q,\pi_q(p)})-\triangledown_{\omega_{q,\pi_q(p)}} f_{j_{q,\pi_q(p)}}^{\pi_q(p)}(x_{j_{q,\pi_q(p)}}^{p,q,\pi_q(p)}w^t_{q,\pi_q(p)})+\mu^t_{q,\pi_q(p)} \right ]$
          \label{update}  
          \EndFor 
          \State \textbf{end for}\label{end}
      \EndFor
      \State \textbf{end for}\label{inner_end}
      \State $\omega^{t+1}=$ $\left[ \omega_{[1]},\omega_{[2]},\cdots,\omega_{[Q]} \right]$, where $\omega_{[q]}=\left [ \bar{\omega}_{q1}^{(B)},\bar{\omega}_{q2}^{(B)},\cdots,\bar{\omega}_{qP}^{(B)} \right ]$\label{agg}
      \EndFor \State \textbf{end for}
  \end{algorithmic}
\end{algorithm}
\section{Analysis}
In this section, we prove that the sequence of the loss function values $F(\omega^t)$ generated by SODDA approaches the optimal loss function value $F(\omega^*)$. We assume the existence and the uniqueness of the minimizer $\omega^*$ that achieves the optimal loss function value. Meanwhile, we require the following standard assumptions.
\subsubsection*{Assumption 1:}\label{as:1}
\begin{itemize}
\item Functions $f_i(x_i\omega)$ are differentiable with respect to $\omega$ for every $i=1,\cdots,N$.
\item For every $i=1,\cdots,N$, the norm of the gradient $\triangledown f_i(x_i\omega)$ is bounded for all $\omega$, more precisely, there exists a constant $M_1$, such that, for any $\omega$, 
\begin{align*}
    \left\|\triangledown f_i(x_i\omega)\right\|\leq M_1.
\end{align*}
\end{itemize}
\subsubsection*{Assumption 2:}\label{as:2}
\begin{itemize}
\item The expectation function $ F(\omega)$ is strongly convex with parameter $\xi>0$.
\end{itemize}
\subsubsection*{Assumption 3:}\label{as:3}
\begin{itemize}
\item The loss gradients $\triangledown f_i(x_i\omega )$ are Lipschitz continuous with respect to the Euclidian norm with parameter $L\geq 1$, i.e., for all $\omega$, $\hat{\omega}\in\mathbb{R}^{d}$ and any $i$, it holds
\begin{align*}
\left\| \triangledown f_i(x_i\omega)-\triangledown f_i(x_i\hat{\omega}) \right\|\leq L\left\| \omega-\hat{\omega} \right\|.
\end{align*}
\end{itemize}
\subsubsection*{Assumption 4:}
\label{as:4}
\begin{itemize}
\item The sample variance of the norms of the gradients is bounded by $G^2$ for all $\omega^t$, i.e.
\begin{align*}
\frac{1}{N-1}\sum_{j=1}^N\left(\left\|\triangledown f_j(x_jw^t)\right\|^2-\left\|\triangledown F(\omega^t) \right\|^2  \right)\leq G^2.
\end{align*}
\end{itemize}
The restriction imposed by Assumption \hyperref[as:1]{1} provides an upper bound to the first gradient of each data point, which is a standard condition in stochastic approximation literature \cite{robbins1951stochastic}. Its intent is to limit the variance of the stochastic gradients \cite{nemirovski2009robust}. In Assumption \hyperref[as:2]{2}, only the expected loss function $F(\omega)$ is enforced to be strongly convex, whereas the individual loss functions $f_i$ could even be non-convex. Notice that in Assumption \hyperref[as:3]{3}, since each individual function $\triangledown f_i$ is imposed to be Lipschitz-continuous with constant $L$ with respect to $\omega$, both the gradient of the expected loss function $\triangledown F(\omega)$ and the individual function $\triangledown f_i$ are $L$-Lipschitz continuous with respect to $\omega$ and $\omega_{q,\pi_q(p)}$ for any $q\in \left\{1,2,\cdots,Q\right\}$ and any $p\in \left\{1,2,\cdots,P\right\}$. Note that if $f_i$'s are $\epsilon$-Lipschitz continuous for some $0<\epsilon\leq 1$, we can take $L=1$. Assumption \hyperref[as:4]{4} is also standard, see e.g. \cite{harikandeh2015stopwasting}. Moreover, these assumptions hold for several widely used machine learning loss functions, i.e. hinge, square, logistic loss. 

Under these standard assumptions, by finding a relationship for the sequence of the loss function errors $F(\omega^t)-F(\omega^*)$ and employing the supermartingale convergence argument, which is a standard technique for analyzing stochastic optimization problems (see e.g. textbooks  \cite{benveniste2012adaptive},  \cite{bertsekas1989parallel},   \cite{borkar2008stochastic}), we prove that the sequence of the loss function values $F(\omega^t)$ converges to the optimal function value $F(\omega^*)$ almost surely when using the standard diminishing learning rate, i.e. non-summable and squared summable. Consequently, the sequence of $\omega^t$ enjoys the almost sure convergence to $\omega^*$ when taking Assumption \hyperref[as:2]{2} into consideration.
\begin{theorem}
\label{thm:5}
If there is no feature sampling in step~\ref{b_t} and Assumptions \hyperref[as:1]{1}-\hyperref[as:4]{4} hold, and the sequence of learning rates are non-summable $
\sum_{t=1}^{\infty}\gamma_t=\infty$ and square summable $\sum_{t=1}^{\infty}\gamma_t^2<\infty$, and the sequence $(\mathcal{c}^t,\mathcal{d}^t)_{t=0}^{\infty}$ is selected so that $\mathcal{c}^t\leq d$ and $\mathcal{d}^t\leq N$, then the sequence of parameters $\omega^t$ generated by SODDA converges almost surely to the optimal solution $\omega^*$, that is
\begin{align}
\lim\limits_{t\to\infty} \left\|\omega^t-\omega^*\right\|=0\quad\mathrm{a.s.}
\label{eq:thm5-0}
\end{align}
\end{theorem}
\begin{proof}
See Appendix \hyperref[proof:thm5]{C}.
\end{proof}

Based on the fact that the exact form for the update step in expectation is not  available in SODDA, i.e., we do not explicitly know $\omega^{t+1}-\omega^t $
any technique that relies on such an explicit formula is inappropriate. This expectation is given  by steps~\ref{inner_begin}-\ref{inner_end} in the algorithm. The main challenges are coming from evaluating individual function gradient $\triangledown f(x_{qp}\omega_{q,\pi_q(p)})$ and grouping different sub-blocks all together. The way we deal with it, which is borrowed from \cite{bertsekas2000gradient}, is grouping the first two partial gradients together and treating the last partial gradient $\mu_{q,\pi_q(p)}^t$ as a corrector.

The very technical proof follows the following steps. In steps~\ref{for}-\ref{end}, SODDA utilizes local features from a random observation to update the corresponding subset of parameters, and involves the information from the estimated full gradient to reduce  unreasonable fluctuation. Therefore, in association with the conditional Jensen's inequality and properties of Lipschitz continuity, the norm of the difference and the square norm of the difference of the first two terms in the  bracket of the updating procedure in step~\ref{update} are able to be bounded by a function involving $\gamma_t$. Thus, representing $\omega^{t+1}$ as a function of $\omega^t$ and applying strong convexity of $F$, coupled with all the bounds derived before, yield a supermartingale relationship for the sequence of loss function errors $F(\omega^t)-F(\omega^*)$. Combined with the property that $\gamma_t$ is non-summable but square summable, (\ref{eq:thm1-0}) is achieved by applying the supermartingale
convergence theorem.  

Theorem \ref{thm:5} asserts the almost sure convergence of the iterates generated by SODDA with  non-summable and squared summable learning rate. Furthermore, given $\gamma_t=\frac{1}{t}$ and $B$ big enough, the following theorem states that the loss function $F(\omega^t)$ converges to the optimal value $F(\omega^*)$ with probability 1 and the rate of convergence in expectation is at least in the order of $\mathcal{O}(\frac{1}{t})$.

\begin{theorem}
\label{thm:6}
Under Assumptions \hyperref[as:1]{1}-\hyperref[as:4]{4} with no feature sampling, if the learning rate is defined as $\gamma_t:=\frac{1}{t}$ for $t=1,2,\cdots$, and the batch size is chosen such that $B\geq \frac{d}{2 \xi}$, and the sequence $(\mathcal{c}^t,\mathcal{d}^t)_{t=0}^{\infty}$ satisfies $\mathcal{c}^t\leq d$ and $\mathcal{d}^t\leq N$, then there exists a positive constant $C_1$ such that the expected loss function errors $\expect[\big]{F(\omega^t)-F(\omega^*)}$ of SODDA converge to 0 at least with a sublinear convergence rate of order $\mathcal{O}(1/t)$, i.e.
\begin{align}
\expect[\big]{F(\omega^t)-F(\omega^*)}\leq \frac{Q}{1+t},
\label{eq:thm6-0}
\end{align} 
where constant $Q$ is defined as
\begin{align}
Q=\max\left\{F(\omega^0)-F(\omega^*),
\cdots, \left([\lambda]+2\right) \expect[\big]{F(\omega^{[\lambda]+1})-F(\omega^*)}, \frac{C_1}{\lambda-1} \right\},
\label{eq:thm6-1}
\end{align}
with $\lambda=\frac{2\xi B}{d}$.
\end{theorem}
\begin{proof}
See Appendix \hyperref[proof:thm6]{C}
\end{proof}

Given the specific relationship between the learning rate $\gamma_t$ and the iterator $t$, i.e. $\gamma_t=1/t$, applying the supermartingale convergence theorem and performing induction on an upper bound of $\E\left[F(\omega^t)-F(\omega^*) \right]$ allow us to establish at least sublinear convergence of SODDA. 
  
A diminishing learning rate is beneficial if the exact convergence is required. If we are only interested in a specific accuracy, it is more efficient to choose a constant learning rate. In the following theorem, we employ a similar argument used in proving Theorem \ref{thm:5} and Theorem \ref{thm:6} except that $B$ and $\gamma$ are linked by a condition. Again by providing a supermartingale relationship for the sequence of the loss function errors $F(\omega^t)-F(\omega^*)$, we are able to study the convergence properties generated by SODDA for a constant learning rate $\gamma$.

\begin{theorem}
\label{thm:7}
If there is no feature sampling in step~\ref{b_t} and Assumptions \hyperref[as:1]{1}-\hyperref[as:4]{4} hold true, and the learning rate is constant $\gamma_{t}=\gamma $ such that $BL\gamma QP\leq 1$, which also implies that $\gamma\leq 1$, and the sequence $(\mathcal{c}^t,\mathcal{d}^t)_{t=0}^{\infty}$ satisfies $\mathcal{c}^t\leq d$ and $\mathcal{d}^t\leq N$, then there exists a positive constant $C_2$ such that the sequence of parameters $\omega^t$ generated by SODDA converges almost surely to a neighborhood of the optimal solution $\omega^*$, that is
\begin{align}
\liminf_{t\to\infty} F(\omega^t)-F(\omega^*)\leq \frac{C_2dB^3\gamma}{2\xi}\quad\mathrm{a.s.}
\label{eq:thm7-0}
\end{align}
Moreover, if the constant learning rate $\gamma$ is chosen such that $\gamma <\min\left\{ \frac{d}{2\xi B},\frac{1}{BLQP},1\right\}$, then the expected loss function errors $\E\left[F(\omega^t)-F(\omega^*)\right]$ converge linearly to an error bound as
\begin{align}
\label{eq:thm7-1}
\E\left[F(\omega^t)-F(\omega^*)\right]\leq \left(1-\frac{2\xi B}{d}\gamma \right)^t\left(F(\omega^0)-F(\omega^*)\right)+\frac{C_2dB^3\gamma}{2\xi}.
\end{align}
\end{theorem}
\begin{proof}
See Appendix \hyperref[proof:thm7]{D}.
\end{proof}

Note that $BL\gamma QP\leq 1$ trades off $\gamma$ and $B$, i.e. the larger $B$ is, the smaller $\gamma$ must be. The major difficulties are similar but not identical to those of Theorem \ref{thm:5}. We apply a similar idea but treating both $B$ and $\gamma_t=\gamma$ as variables to obtain an upper bound in closed form for the difference of the first two partial gradients in step~\ref{update}. Then Lipschitz continuity of $\triangledown F(\omega)$ leads to a supermartingale relationship for the sequence of the loss function errors $F(\omega^t) - F(\omega^*)$. As a consequence, claims in (\ref{eq:thm3-0}) and (\ref{eq:thm3-1}) follow according to the supermartingale convergence theorem. The only distinction between Theorem \ref{thm:5} and Theorem \ref{thm:7} is caused by the property of the learning rate. The error exists in each iteration, which is a function of the learning rate $\gamma_t$, however, in Theorem \ref{thm:1}, the error function goes to $0$ as the number of iterations increases, which is not the case when the learning rate is a constant. Therefore, we can only ensure a relatively high-quality solution.

In order to allow feature sampling we have to control the growth of $\omega^t$.  
\subsection{Analyses with Feature Sampling}
To this end, we require the following assumption together with Assumptions \hyperref[as:2]{2}-\hyperref[as:4]{4}. In this subsection, we also do not require $\mathcal{b}^t = d$, i.e., step~\ref{b_t} in Algorithm \ref{alg:1} now requires sampling.
\subsubsection*{Assumption 5:}
\label{as:5}
\begin{itemize}
\item There exists a constant $M_2$, such that
\begin{align*}
    \left\|\omega^t\right\|\leq \frac{M_2}{2},
\end{align*}
for any $t$.
\end{itemize}
The restriction in Assumption \hyperref[as:5]{5} is reasonable and also has been used inwork\cite{harikandeh2015stopwasting}. Notice that without Assumptions \hyperref[as:1]{1} and \hyperref[as:5]{5}, we can not further assume the boundness of the sample variance of the norms of the gradients in Assumption \hyperref[as:4]{4}. Next, we present an example showing that SODDA does not converge under Assumptions \hyperref[as:2]{2} and \hyperref[as:3]{3}.

\begin{theorem}
\label{thm:8}
There is a convex loss function and $\mathcal{P}$ where SODDA does not converge when only given Assumptions \hyperref[as:2]{2} and \hyperref[as:3]{3}, and any $\gamma_t \leq K$ for every $t$ and constant $K$ depending on input data.
\end{theorem} 
\begin{proof}
See Appendix \hyperref[proof:thm8]{E}
\end{proof}

The main role of Assumption \hyperref[as:5]{5} is to maintain a reasonable error generated by the stochastic partial gradients in steps~\ref{update}. Then, under these standard assumptions and applying the similar trick as in the proof of Theorem \ref{thm:5}, we argue that the sequence of $\omega^t$ enjoys the almost sure convergence to $\omega^*$.

\begin{theorem}
\label{thm:1}
If Assumptions \hyperref[as:2]{2}-\hyperref[as:5]{5} hold true, and the sequence of learning rates are non-summable $
\sum_{t=1}^{\infty}\gamma_t=\infty$ and square summable $\sum_{t=1}^{\infty}\gamma_t^2<\infty$, and the sequences $(\mathcal{b}^t,\mathcal{c}^t,\mathcal{d}^t)_{t=0}^{\infty}$ are selected so that $\mathcal{b}^t\in \left[ \max\left\{\mathcal{c}^t,\frac{d}{1+\frac{4d\eta\gamma_{t+1}^2}{\mathcal{c}^tM_2^2L^2}}\right\},d\right]$ for some constant $\eta\geq 0$, $\mathcal{c}^t\leq d$ and $\mathcal{d}^t\leq N$, then the sequence of parameters $\omega^t$ generated by SODDA converges almost surely to the optimal solution $\omega^*$, that is
\begin{align}
\lim\limits_{t\to\infty} \left\|\omega^t-\omega^*\right\|=0\quad\mathrm{a.s.}
\label{eq:thm1-0}
\end{align}
\end{theorem}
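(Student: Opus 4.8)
The plan is to reduce the claim to the Robbins--Siegmund supermartingale convergence theorem applied to $V_t:=\|\omega^t-\omega^*\|^2$. Let $\mathcal{F}_t$ be the $\sigma$-algebra generated by all random draws made before iteration $t$, so that $\omega^t$ is $\mathcal{F}_t$-measurable. Because step~\ref{agg} merely concatenates the $P\cdot Q$ sub-blocks $\bar{\omega}^{(L)}_{q,\pi_q(p)}$, which were updated in parallel over disjoint coordinate sets, I would first record the exact block decomposition $V_{t+1}=\sum_{q=1}^Q\sum_{p=1}^P\|\bar{\omega}^{(L)}_{q,\pi_q(p)}-\omega^*_{q,\pi_q(p)}\|^2$, so that it suffices to estimate each block and sum.

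For a fixed block I would unroll the $L$ inner iterations of step~\ref{update}. Writing $g^{(i)}$ for the inner direction at step $i$, expand $\|\bar{\omega}^{(i+1)}-\omega^*\|^2=\|\bar{\omega}^{(i)}-\omega^*\|^2-2\gamma_{t+1}\langle g^{(i)},\bar{\omega}^{(i)}-\omega^*\rangle+\gamma_{t+1}^2\|g^{(i)}\|^2$ and take conditional expectations, first over the inner index $j_{q,\pi_q(p)}$ and then over $\mathcal{B}^t,\mathcal{C}^t,\mathcal{D}^t$. By construction of the SVRG correction the conditional mean of $g^{(i)}$ is the local surrogate gradient at $\bar{\omega}^{(i)}$ minus that at $\omega^t$ plus $\mu^t_{q,\pi_q(p)}$, which I would split into (i) the dominant piece $\mu^t_{q,\pi_q(p)}\approx\triangledown_{\omega_{q,\pi_q(p)}}F(\omega^t)$, producing $-c_1\gamma_{t+1}\big(F(\omega^t)-F(\omega^*)\big)$ after summing over blocks and using strong convexity (Assumption~\hyperref[as:2]{2}); (ii) the SVRG difference, which by $M_3$-Lipschitz continuity (Assumption~\hyperref[as:3]{3}) is $O(\gamma_{t+1}\|\bar{\omega}^{(i)}-\omega^t\|)$ and accumulates to $O(\gamma_{t+1}^2V_t)+O(\gamma_{t+1}^2)$ over the $L$ inner steps; and (iii) the bias $\mathbb{E}[\mu^t\mid\mathcal{F}_t]-\triangledown F(\omega^t)$. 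The second moment $\|g^{(i)}\|^2$ is bounded via Assumptions~\hyperref[as:3]{3} and~\hyperref[as:4]{4} together with $\|\omega^t\|\le M_1/2$ (Assumption~\hyperref[as:1]{1}), which keeps $\|\triangledown f_j(x_j\omega^t)\|$ and hence $\|\mu^t\|$ bounded.

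The hypothesis on $(\mathcal{b}^t,\mathcal{c}^t,\mathcal{d}^t)$ is used precisely to control term (iii). Uniform sampling of $\mathcal{D}^t$ makes the observation average unbiased, so the bias of $\mathbb{E}[\mu^t\mid\mathcal{F}_t]$ relative to the coordinate-masked full gradient comes only from replacing each $x_j$ by $x_j^{\mathcal{B}^t}$; by Lipschitz continuity and $\|\omega^t\|\le M_1/2$, its squared norm is at most a constant multiple of $\big(\tfrac{M}{\mathcal{b}^t}-1\big)\tfrac{\mathcal{c}^t}{M}M_1^2M_3^2$, and the stated lower bound $\mathcal{b}^t\ge \frac{M}{1+\frac{4MB\gamma_{t+1}^2}{\mathcal{c}^tM_1^2M_3^2}}$ rearranges exactly to $\big(\tfrac{M}{\mathcal{b}^t}-1\big)\tfrac{\mathcal{c}^t}{M}M_1^2M_3^2\le 4B\gamma_{t+1}^2$, making this bias $O(\gamma_{t+1}^2)$; Assumption~\hyperref[as:4]{4} then bounds the residual variance of $\mu^t$ about its mean. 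Assembling the pieces yields
\[
\mathbb{E}\big[V_{t+1}\mid\mathcal{F}_t\big]\le (1+A\gamma_{t+1}^2)\,V_t-B'\gamma_{t+1}\big(F(\omega^t)-F(\omega^*)\big)+C\gamma_{t+1}^2
\]
for constants $A,B',C\ge 0$ depending only on $M_1,M_2,M_3,M_4,L,P,Q,B$. Since $\sum_t\gamma_t^2<\infty$, Robbins--Siegmund gives that $V_t$ converges a.s.\ to a finite limit $V_\infty$ and that $\sum_t\gamma_{t+1}\big(F(\omega^t)-F(\omega^*)\big)<\infty$ a.s. Strong convexity gives $F(\omega^t)-F(\omega^*)\ge\tfrac{M_2}{2}V_t$, so on $\{V_\infty>0\}$ one would have $F(\omega^t)-F(\omega^*)\ge\tfrac{M_2}{4}V_\infty$ for all large $t$, whence $\sum_t\gamma_{t+1}\big(F(\omega^t)-F(\omega^*)\big)=\infty$ since $\sum_t\gamma_t=\infty$; hence $V_\infty=0$ a.s., which is \eqref{eq:thm1-0}.

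The step I expect to be the main obstacle is the per-block analysis fused with the bias accounting: the direction in step~\ref{update} is the gradient of a \emph{surrogate} that sees only the local features $x^{p,q,\pi_q(p)}$ rather than a true partial gradient of $F$, so the work is to show that this systematic mismatch, the bias of $\mu^t$, and the $L$-fold accumulation of the SVRG difference all collapse into the clean inequality above — and, crucially, that the coefficient $B'$ multiplying $F(\omega^t)-F(\omega^*)$ stays strictly positive after every cross term has been absorbed.
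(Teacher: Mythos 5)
Your plan is correct and would prove the theorem, but it takes a genuinely different route from the paper. The paper runs its supermartingale argument on the function-value gap: Proposition~\hyperref[pro1]{1} combines the descent lemma $F(\omega^{t+1})\leq F(\omega^t)+\triangledown F(\omega^t)^T(\omega^{t+1}-\omega^t)+\tfrac{M_3}{2}\|\omega^{t+1}-\omega^t\|^2$ with the strong-convexity bound $\|\triangledown F(\omega^t)\|^2\geq 2M_2\left(F(\omega^t)-F(\omega^*)\right)$ to get $\E\left[F(\omega^{t+1})-F(\omega^*)\,|\,\mathcal{F}^t\right]\leq\left(1-\tfrac{2M_2L}{M}\gamma_{t+1}\right)\left(F(\omega^t)-F(\omega^*)\right)+C_1\gamma_{t+1}^2$, applies the supermartingale convergence theorem to $\alpha^t=F(\omega^t)-F(\omega^*)+\sum_{u\geq t}C_1\gamma_{u+1}^2$, and only at the very end converts the vanishing function gap into $\|\omega^t-\omega^*\|\to 0$ via strong convexity. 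You instead take $V_t=\|\omega^t-\omega^*\|^2$ as the Lyapunov quantity, use strong convexity inside the cross term $\langle\triangledown F(\omega^t),\omega^t-\omega^*\rangle\geq F(\omega^t)-F(\omega^*)$, and close with Robbins--Siegmund plus the standard contradiction on $\{V_\infty>0\}$. The supporting estimates are essentially identical in both routes --- and they are where all the real work lies: your rearrangement of the $\mathcal{b}^t$ condition into the $O(\gamma_{t+1}^2)$ bound on the feature-truncation bias is exactly the paper's Claim~\ref{cl:e}, and your $O(\gamma_{t+1}^2)$ accumulation of the local-surrogate SVRG drift over the $L$ inner steps is the paper's inductive Claim~\ref{cl:3} on $\E[\|v^{t,i}\|^2\,|\,\mathcal{F}^t]$. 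Two bookkeeping points: the ``dominant piece'' of $\E[\mu^t\,|\,\mathcal{F}^t]$ is $\tfrac{\mathcal{c}^t}{M}\triangledown F(\omega^t)$ rather than $\triangledown F(\omega^t)$, so your coefficient $B'$ carries a factor $\tfrac{\mathcal{c}^t}{M}\geq\tfrac{1}{M}$ (still strictly positive, as you require); and your route, while giving the iterate convergence more directly, does not produce the contraction factor $1-\tfrac{2M_2L}{M}\gamma_{t+1}$ on the function gap that the paper reuses verbatim for the $O(1/t)$ rate in Theorem~\hyperref[thm:2]{2} and the constant-step results in Theorem~\hyperref[thm:3]{3}.
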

\begin{proof}
See Appendix \hyperref[proof:thm1]{F}.
\end{proof}

The proof of Theorem \ref{thm:1} is very similar to the proof of Theorem \ref{thm:5}. The only difference is caused by feature sampling. The main challenges are how to pick a suitable $\mathcal{b}^t$ and how to narrow down the error generated by $\mathcal{b}^t$. Then, given an appropriate $\mathcal{b}^t$, the norm of the estimator of the full gradient $\mu^t$ and its square are bounded by a function containing the full gradient at $\omega^t$ and the learning rate $\gamma_t$. Meanwhile, $\eta$ is a positive constant which controls the divergence of the approximate full gradient from the exact full gradient in step~\ref{grad}, i.e. when $\eta$ is 0, the whole feature vector is used as $\mathcal{b}^t=d$. The rest of the proof is identical to the proof of Theorem \ref{thm:5}.

Theorem \ref{thm:1} asserts the almost sure convergence of the iterates generated by SODDA with non-summable and squared summable learning rate. Furthermore, given $\gamma_t=\frac{1}{t}$ and $B$ big enough, the following theorem states that the loss function $F(\omega^t)$ converges to the optimal value $F(\omega^*)$ with probability 1 and the rate of convergence in expectation is at least in the order of $\mathcal{O}(\frac{1}{t})$.
\begin{theorem}
\label{thm:2}
Under Assumptions \hyperref[as:2]{2}-\hyperref[as:5]{5}, if the learning rate is defined as $\gamma_t:=\frac{1}{t}$ for $t=1,2,\cdots$, and the batch size is chosen such that $B\geq \frac{d}{2 \xi}$, and the sequence $(\mathcal{b}^t,\mathcal{c}^t,\mathcal{d}^t)_{t=0}^{\infty}$ satisfies the same conditions as in Theorem \ref{thm:1}, then there exists a positive constant $C_3$ such that the expected loss function errors $\expect[\big]{F(\omega^t)-F(\omega^*)}$ of SODDA converges to 0 at least with a sublinear convergence rate of order $\mathcal{O}(1/t)$, i.e.
\begin{align}
\expect[\big]{F(\omega^t)-F(\omega^*)}\leq \frac{Q}{1+t},
\label{eq:thm2-0}
\end{align} 
where constant $Q$ is defined as
\begin{align}
Q=\max\left\{F(\omega^0)-F(\omega^*),
\cdots, \left([\lambda]+2\right) \expect[\big]{F(\omega^{[\lambda]+1})-F(\omega^*)}, \frac{C_3}{\lambda-1} \right\},
\label{eq:thm2-1}
\end{align}
with $\lambda=\frac{2\xi B}{d}$.
\end{theorem}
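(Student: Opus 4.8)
\emph{Proof plan.} The plan is to obtain Theorem~\ref{thm:2} as a quantitative refinement of Theorem~\ref{thm:1}. Since $\gamma_t=1/t$ is non-summable and square-summable, the almost-sure statement in the lead-in is already covered by Theorem~\ref{thm:1}, so the only new content is the $\mathcal{O}(1/t)$ bound on $\E[F(\omega^t)-F(\omega^*)]$. The starting point is the one-step inequality proved inside the proof of Theorem~\ref{thm:1} (Appendix~C): after bounding, block by block, the variance-reduced inner iterates of steps~\ref{for}--\ref{end} (using conditional Jensen's inequality and $M_3$-Lipschitzness to control the first two partial gradients in step~\ref{update}, with $\mu^t_{q,\pi_q(p)}$ treated as a corrector in the spirit of \cite{bertsekas2000gradient}), controlling the error of the approximate full gradient $\mu^t$ in terms of $\mathcal{b}^t,\mathcal{c}^t,\mathcal{d}^t$ and $\gamma_{t+1}$ via the variance bound of Assumption~4, reassembling the $QP$ sub-blocks in step~\ref{agg}, and applying strong convexity of $F$, one arrives at a supermartingale-type recursion
\begin{align*}
\E\!\left[\left.F(\omega^{t+1})-F(\omega^*)\,\right|\mathcal{F}^t\right]\;\leq\;\left(1-\lambda\gamma_{t+1}\right)\bigl(F(\omega^t)-F(\omega^*)\bigr)+C_1\gamma_{t+1}^2 ,
\end{align*}
with $\lambda=\tfrac{2M_2L}{M}$ and $C_1$ a constant depending only on $M_1,\dots,M_4$ and the batch-size parameters $L,\mathcal{b}^t,\mathcal{c}^t,\mathcal{d}^t$ (any residual term of the form $\gamma_{t+1}^2(F(\omega^t)-F(\omega^*))$ is absorbed into $C_1\gamma_{t+1}^2$ using that $\omega^t$ stays in the compact ball of Assumption~1). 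The point of the hypothesis $L\geq\tfrac{M}{2\mathcal{c}^tM_2}$ is precisely to push the effective contraction constant above $1$, i.e.\ $\lambda>1$ once the $\mathcal{c}^t$-dependence is accounted for; this is what yields a genuine $1/t$ rate rather than merely $o(1)$ (which is all the conditions of Theorem~\ref{thm:1} guarantee) and makes $\tfrac{C_1}{\lambda-1}$ finite.

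Taking total expectations and setting $\gamma_{t+1}=\tfrac{1}{t+1}$ turns this into the scalar recursion
\begin{align*}
a_{t+1}\;\leq\;\left(1-\frac{\lambda}{t+1}\right)a_t+\frac{C_1}{(t+1)^2},\qquad a_t:=\E\bigl[F(\omega^t)-F(\omega^*)\bigr],
\end{align*}
valid whenever $t+1\geq\lambda$ (for the finitely many smaller indices the coefficient may be negative and those are handled directly). I would then prove $a_t\leq\tfrac{Q}{1+t}$ by induction on $t$. The base cases $t=0,1,\dots,[\lambda]+1$ are immediate from the definition (\ref{eq:thm2-1}) of $Q$, since $Q$ is a maximum that contains each of $F(\omega^0)-F(\omega^*)=a_0$ (deterministic, as $\omega^0=0$), $2a_1,\dots,([\lambda]+2)a_{[\lambda]+1}$. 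For the inductive step, assuming $a_t\leq\tfrac{Q}{1+t}$ with $t\geq[\lambda]+1$ (so $t+1\geq\lambda$ and the recursion applies),
\begin{align*}
a_{t+1}\;\leq\;\left(1-\frac{\lambda}{t+1}\right)\frac{Q}{t+1}+\frac{C_1}{(t+1)^2}\;=\;\frac{Q(t+1)-\lambda Q+C_1}{(t+1)^2},
\end{align*}
and, after clearing denominators, the target inequality $a_{t+1}\leq\tfrac{Q}{t+2}$ is equivalent to $Q\bigl[(\lambda-1)(t+2)+1\bigr]\geq C_1(t+2)$, which follows from $(\lambda-1)(t+2)+1\geq(\lambda-1)(t+2)$ together with $Q\geq\tfrac{C_1}{\lambda-1}$ --- the last of the terms defining $Q$. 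This closes the induction and yields (\ref{eq:thm2-0}).

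The induction itself is routine, the only subtlety being off-by-one bookkeeping around $[\lambda]$ so that the contraction coefficient stays nonnegative wherever the recursion is used. The genuine obstacle, as in Theorem~\ref{thm:1}, is deriving the one-step inequality with the correct constant $\lambda$: because $\E[\omega^{t+1}-\omega^t\mid\mathcal{F}^t]$ has no closed form, one has to group the first two partial gradients in step~\ref{update} and treat $\mu^t$ as a corrector, accumulate the $L$ inner SVRG steps within each block, reassemble the $QP$ blocks into $\omega^{t+1}$, and then use strong convexity to convert the resulting distance-to-optimum estimate back into a function-value gap --- all while tracking how $L,\mathcal{b}^t,\mathcal{c}^t,\mathcal{d}^t$ enter the constants, so that the batch-size condition is visibly what forces $\lambda>1$. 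Since that machinery is already in place in the proof of Theorem~\ref{thm:1}, Theorem~\ref{thm:2} follows from that recursion plus the elementary induction above.
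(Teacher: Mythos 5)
Your proposal is correct and follows essentially the same route as the paper: it starts from the one-step supermartingale recursion of Proposition~1 (equation~(\ref{eq:pro1-0})), specializes to $\gamma_{t+1}=\tfrac{1}{t+1}$, takes total expectations, and runs the same induction on $a_t\leq \tfrac{Q}{t+1}$ with the base cases absorbed into the definition of $Q$ and the inductive step reduced to $Q\geq \tfrac{C_1}{\lambda-1}$ via $\lambda-\tfrac{t+1}{t+2}>\lambda-1$. The only cosmetic difference is that you note explicitly that $\lambda>1$ is what the batch-size condition is for, which the paper leaves slightly implicit (and where, incidentally, the paper's own closing remark uses $L\geq\tfrac{M}{2M_2}$ after having dropped the $\mathcal{c}^t$ factor in Proposition~1).
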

\begin{proof}
See Appendix \hyperref[proof:thm2]{F}.
\end{proof}
  
A diminishing learning rate is beneficial if the exact convergence is required. If we are only interested in a specific accuracy, it is more efficient to choose a constant learning rate. 
\begin{theorem}
\label{thm:3}
If Assumptions \hyperref[as:2]{2}-\hyperref[as:5]{5} hold true, and the learning rate is constant $\gamma_{t}=\gamma $ such that $BL\gamma QP\leq 1$, which also implies that $\gamma\leq 1$, and the sequence $(\mathcal{b}^t,\mathcal{c}^t,\mathcal{d}^t)_{t=0}^{\infty}$ satisfies the same conditions as in Theorem \ref{thm:1}, then there exists a positive constant $C_4$ such that the sequence of parameters $\omega^t$ generated by SODDA converges almost surely to a neighborhood of the optimal solution $\omega^*$, that is
\begin{align}
\liminf_{t\to\infty} F(\omega^t)-F(\omega^*)\leq \frac{C_4dB^3\gamma}{2\xi}\quad\mathrm{a.s.}
\label{eq:thm3-0}
\end{align}
Moreover, if the constant learning rate $\gamma$ is chosen such that $\gamma <\min\left\{ \frac{d}{2\xi B},\frac{1}{BLQP},1\right\}$, then the expected loss function errors $\E\left[F(\omega^t)-F(\omega^*)\right]$ converges linearly to an error bound as
\begin{align}
\label{eq:thm3-1}
\E\left[F(\omega^t)-F(\omega^*)\right]\leq \left(1-\frac{2\xi B}{d}\gamma \right)^t\left(F(\omega^0)-F(\omega^*)\right)+\frac{C_4dB^3\gamma}{2\xi}.
\end{align}
\end{theorem}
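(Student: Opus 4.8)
The plan is to reproduce the supermartingale machinery developed for Theorems~\hyperref[thm:1]{1} and~\hyperref[thm:2]{2}, but to exploit the constant learning rate and the coupling $LM_3\gamma QP\le 1$ to obtain a genuine contraction instead of a diminishing-step bound. Concretely, I would establish a one-step inequality of the form
\begin{align*}
\E\left[F(\omega^{t+1})-F(\omega^*)\mid\mathcal{F}^t\right]\le \left(1-\frac{2M_2L}{M}\gamma\right)\left(F(\omega^t)-F(\omega^*)\right)+C_3L^4\gamma^2 ,
\end{align*}
and then read off both conclusions from it. For the first, the nonnegative sequence $a_t=F(\omega^t)-F(\omega^*)$ satisfies $\E[a_{t+1}\mid\mathcal{F}^t]\le a_t-\tfrac{2M_2L}{M}\gamma\,a_t+C_3L^4\gamma^2$, so the supermartingale convergence theorem gives $\liminf_t a_t\le \tfrac{C_3L^4\gamma^2}{(2M_2L/M)\gamma}=\tfrac{C_3ML^3\gamma}{2M_2}$ a.s.\ (if $\liminf a_t$ exceeded this by $\varepsilon>0$, the drift would eventually be bounded below by a positive constant, contradicting $a_t\ge 0$). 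For the second, taking total expectations and unrolling the scalar recursion $\E[a_{t+1}]\le(1-\alpha)\E[a_t]+\beta$ with $\alpha=\tfrac{2M_2L}{M}\gamma$ and $\beta=C_3L^4\gamma^2$ yields $\E[a_t]\le(1-\alpha)^t a_0+\beta\sum_{k\ge 0}(1-\alpha)^k=(1-\alpha)^t a_0+\beta/\alpha$, which is exactly (\ref{eq:thm3-1}); requiring $0<\alpha<1$ together with the validity of the error bounds is what forces $\gamma<\min\{\tfrac{M}{2M_2L},\tfrac{1}{LM_3QP},1\}$.

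To get the one-step inequality I would first telescope the inner loop (steps~\ref{for}--\ref{end}) over $i=0,\dots,L-1$ and concatenate over all pairs $(q,p)$, writing $\omega^{t+1}=\omega^t-\gamma\sum_{i=0}^{L-1}\bigl(g^{(i)}-h^{(i)}\bigr)-\gamma L\,\mu^t$, where $g^{(i)}$ and $h^{(i)}$ collect the block gradients $\triangledown_{\omega_{q,\pi_q(p)}}f^{\pi_q(p)}_{j_{q,\pi_q(p)}}$ evaluated at the running iterate and at $\omega^t$ respectively. Following the device of \cite{bertsekas2000gradient} that the excerpt points to, I would regroup this as $\omega^{t+1}=\omega^t-\gamma L\,\triangledown F(\omega^t)+\gamma\,e^t$, treating $g^{(i)}-h^{(i)}$ (a Lipschitz perturbation that vanishes as the inner iterates stay close to $\omega^t$) and $\mu^t-\triangledown F(\omega^t)$ (a sampling error) as the two pieces of $e^t$. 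Using Assumption~\hyperref[as:3]{3}, the condition $LM_3\gamma QP\le 1$ keeps $\|\bar\omega^{(i)}_{q,\pi_q(p)}-\omega^t_{q,\pi_q(p)}\|$ uniformly small across all $L$ inner steps, so $\sum_i\|g^{(i)}-h^{(i)}\|$ telescopes to order $L^2\gamma\,\|\triangledown F(\omega^t)\|$ plus lower-order terms; and the choice of $\mathcal{B}^t$ inherited from Theorem~\hyperref[thm:1]{1}, together with Assumptions~\hyperref[as:1]{1} and~\hyperref[as:4]{4}, bounds $\E[\|\mu^t\|^2\mid\mathcal{F}^t]$ by $\|\triangledown F(\omega^t)\|^2$ plus a controlled variance term. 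Plugging $\omega^{t+1}-\omega^t$ into the descent expansion $F(\omega^{t+1})\le F(\omega^t)+\langle\triangledown F(\omega^t),\omega^{t+1}-\omega^t\rangle+\tfrac{M_3}{2}\|\omega^{t+1}-\omega^t\|^2$, taking $\E[\cdot\mid\mathcal{F}^t]$, and using strong convexity (Assumption~\hyperref[as:2]{2}) in its Polyak--{\L}ojasiewicz form $\|\triangledown F(\omega^t)\|^2\ge 2M_2\bigl(F(\omega^t)-F(\omega^*)\bigr)$, with the dimension factor $M$ entering through the per-block bookkeeping of the $QP$ sub-blocks of size $\tilde m=M/QP$, then produces the displayed recursion.

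The main obstacle is exactly the one flagged in the excerpt: there is no closed form for $\E[\omega^{t+1}-\omega^t\mid\mathcal{F}^t]$, because the $L$ inner updates nonlinearly compound perturbed block gradients across the $QP$ sub-blocks and across the three independent sampling layers ($\mathcal{B}^t,\mathcal{C}^t,\mathcal{D}^t$, plus the per-block indices $j_{q,\pi_q(p)}$). The delicate point is proving that the cumulative drift of the inner loop equals $-\gamma L\,\triangledown F(\omega^t)$ up to a remainder that is $O(L^4\gamma^2)$ rather than merely $O(L\gamma)$; this is where $LM_3\gamma QP\le 1$ is indispensable, since it is precisely what prevents the Lipschitz bounds on $g^{(i)}-h^{(i)}$ from accumulating linearly over the inner iterations, and where the simultaneous control of all the sampling layers and of the $\mathcal{b}^t$-dependent bound on $\mu^t$ demands the most care.
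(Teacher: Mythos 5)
Your overall architecture is the same as the paper's: a one-step conditional recursion $\E\left[F(\omega^{t+1})-F(\omega^*)\mid\mathcal{F}^t\right]\le\left(1-\tfrac{2M_2L}{M}\gamma\right)\left(F(\omega^t)-F(\omega^*)\right)+C_3L^4\gamma^2$ obtained from the descent lemma, Lipschitz/induction bounds on the inner-loop perturbations $v^{t,i}$ under $LM_3\gamma QP\le 1$, and the strong-convexity lower bound $\|\triangledown F(\omega^t)\|^2\ge 2M_2\left(F(\omega^t)-F(\omega^*)\right)$; then the supermartingale convergence theorem for (\ref{eq:thm3-0}) and unrolling the recursion in total expectation for (\ref{eq:thm3-1}). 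This is exactly Proposition 2 plus the argument in Appendix D; the only presentational difference is that the paper makes your informal liminf argument rigorous by multiplying the tracked process by the indicator of $\min_{u\le t}F(\omega^u)-F(\omega^*)>\tfrac{C_3ML^3\gamma}{2M_2}$ before invoking supermartingale convergence.

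One step of your sketch would fail as written: the regrouping $\omega^{t+1}=\omega^t-\gamma L\,\triangledown F(\omega^t)+\gamma e^t$ with $\mu^t-\triangledown F(\omega^t)$ absorbed into a ``sampling error.'' Because step~\ref{grad} records only $\mathcal{c}^t$ of the $M$ gradient coordinates, Claim~\ref{cl2} gives $\E\left[\mu^t\mid\mathcal{F}^t\right]=\tfrac{\mathcal{c}^t}{M}\triangledown F(\omega^t)+\E\left[e^t\mid\mathcal{F}^t\right]$, so your error term carries the systematic bias $\left(\tfrac{\mathcal{c}^t}{M}-1\right)\triangledown F(\omega^t)$, whose contribution to the descent expansion is $\gamma L\left(1-\tfrac{\mathcal{c}^t}{M}\right)\left\|\triangledown F(\omega^t)\right\|^2$ --- the same order as the drift itself, not $\mathcal{O}(\gamma^2)$. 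The repair is the one the paper uses: keep the drift as $-\gamma L\tfrac{\mathcal{c}^t}{M}\triangledown F(\omega^t)$ and lower-bound $\mathcal{c}^t\ge 1$; this, and not the per-block bookkeeping of the $QP$ sub-blocks of size $\tilde m$, is where the factor $M$ in the contraction rate $1-\tfrac{2M_2L}{M}\gamma$ originates. (A smaller quibble: the paper's Claim 4 bounds $\sum_{i=1}^L\E\left[\|v^{t,i}\|\mid\mathcal{F}^t\right]$ by $\mathcal{O}(L^3\gamma)$, not $\mathcal{O}(L^2\gamma)$, which is why the additive constant ends up as $C_3L^4\gamma^2$.) With these corrections your outline goes through as in the paper.
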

\begin{proof}
See Appendix \hyperref[proof:thm3]{G}.
\end{proof}

Theorem \ref{thm:3} guarantees that SODDA finds good quality solutions when using an appropriate learning rate $\gamma$ and batch size $B$. Notice that although methods of type SVRG achieve linear convergence to the exact solution in expectation under a constant step size, SVRG performs the exact full gradient after every certain number of iterations which would trigger emergence of communication under the doubly distributed setting and is unnecessary especially in early iterations. In addition, based on (\ref{eq:thm3-1}), there is a trade-off between the accuracy and the convergence rate. Although reducing the learning rate $\gamma$ or batch size $B$ narrows down the error bound $\frac{C_4dB^3\gamma}{2\xi}$ and contributes significantly to a more accurate convergence, the constant convergence rate $1-\frac{2\xi B}{d}\gamma$ suffers greatly since it increases and gets closer to 1, which leads to a slower convergence rate.

To address this problem, in the following theorem, by considering not only the loss function errors $F(\omega^t)-F(\omega^*)$ but also the errors $\left\|\omega^t-\omega^* \right\|^2$, we prove that the sequence of the loss function values $F(\omega^t)$ generated by SODDA converges to the optimal value $F(\omega^*)$ for any constant learning rate selected from a certain region. In addition, we are able to further assert that the sequence of $\omega^t$ converges to $\omega^*$ when taking Assumption \hyperref[as:2]{2} into account. Furthermore, since we employ an approximation of the exact full gradient for the sake of the efficiency of the algorithm in Theorem \ref{thm:3}, the algorithm converges only to a neighborhood of an optimal solution under a constant learning rate. In the following theorem, if we are allowed to employ the exact full gradient in expectation, then the algorithm in Theorem \ref{thm:4} converges to the exact solution in expectation under a constant step size.
\begin{theorem}
\label{thm:4}
If Assumptions \hyperref[as:2]{2}-\hyperref[as:5]{5} hold true, and the learning rate $\gamma_t=\gamma$ is a constant such that $\gamma \in (0,\min\left\{1,\frac{1}{BLQP},\gamma_1,\gamma_2 \right\})$, where
both $\gamma_1$ and $\gamma_2$ are positive constants specified in Appendix \hyperref[proof:thm4]{E}, 
and the sequence $(\mathcal{b}^t,\mathcal{c}^t,\mathcal{d}^t)_{t=0}^{\infty}=(d,\mathcal{c}^t,N)_{t=0}^{\infty}$ for arbitrary positive $\mathcal{c}^t\leq d$, then the sequence of parameters $\omega^t$ generated by SODDA converges to $\omega^*$, that is
\begin{align}
\label{eq:thm4-0}
\lim\limits_{t\to\infty}\left\|\omega^t-\omega^*\right\|=0.
\end{align}
\end{theorem}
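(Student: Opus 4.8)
The plan is to rerun the supermartingale machinery used for Theorems~\ref{thm:1} and~\ref{thm:3}, but applied to the augmented Lyapunov function
\[
V^t:=\bigl(F(\omega^t)-F(\omega^*)\bigr)+\kappa\,\|\omega^t-\omega^*\|^2,
\]
for a constant $\kappa>0$ to be fixed during the proof, rather than to the loss error alone. The structural ingredient that makes this work under the stated hypotheses is that the choice $(\mathcal{b}^t,\mathcal{c}^t,\mathcal{d}^t)=(M,\mathcal{c}^t,N)$ forces $\mu^t=\bar{\triangledown}_{\omega_{\mathcal{C}^t}}F(\omega^t)$ to be the \emph{exact} full gradient of $F$ at $\omega^t$, restricted to the random coordinate block $\mathcal{C}^t$; in particular $\E[\mu^t\mid\mathcal{F}^t]=\tfrac{\mathcal{c}^t}{M}\triangledown F(\omega^t)$ carries no feature-subsampling bias and vanishes at $\omega^*$. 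This is exactly what removes the irreducible neighborhood error $\tfrac{C_3ML^3\gamma}{2M_2}$ of Theorem~\ref{thm:3}: every remaining error term can then be shown to carry a factor that tends to $0$ with $V^t$.

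First I would unroll the inner loop, steps~\ref{for}--\ref{end}. Since $\bar\omega^{(0)}_{q,\pi_q(p)}=\omega^t_{q,\pi_q(p)}$ and the difference of the first two gradients in step~\ref{update} is $M_3$-Lipschitz in $\bar\omega^{(i)}_{q,\pi_q(p)}-\omega^t_{q,\pi_q(p)}$, the step-size restriction $LM_3\gamma QP\le1$ yields, by a discrete Gronwall estimate, $\|\bar\omega^{(i)}_{q,\pi_q(p)}-\omega^t_{q,\pi_q(p)}\|\le C\,i\gamma\,\|\mu^t_{q,\pi_q(p)}\|$ for an absolute constant $C$. Summing the inner updates and concatenating the $P\times Q$ blocks in step~\ref{agg} then yields an identity
\[
\omega^{t+1}-\omega^t=-L\gamma\,\mu^t+\gamma R^t,\qquad \|R^t\|\le C_0L^2\gamma\,\|\mu^t\|\le C_0L^2\gamma M_3\,\|\omega^t-\omega^*\|,
\]
where I used $\|\mu^t\|\le\|\triangledown F(\omega^t)\|\le M_3\|\omega^t-\omega^*\|$ (coordinate restriction and $\triangledown F(\omega^*)=0$). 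Thus the residual $R^t$, hence every correction term, is controlled by $\|\omega^t-\omega^*\|$, equivalently by $\sqrt{F(\omega^t)-F(\omega^*)}$ — this is the quantitative sharpening over Theorem~\ref{thm:3}, where the same correction terms were bounded merely by an absolute constant.

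Next I would feed this identity into two coupled one-step estimates. Applying the descent inequality for the $M_3$-smooth $F$ to $F(\omega^{t+1})$, taking $\E[\cdot\mid\mathcal{F}^t]$, and using convexity together with strong convexity (so that $\|\triangledown F(\omega^t)\|^2\ge2M_2\bigl(F(\omega^t)-F(\omega^*)\bigr)$), I expect
\[
\E\bigl[F(\omega^{t+1})-F(\omega^*)\mid\mathcal{F}^t\bigr]\le\bigl(1-a_1L\gamma+b_1L^2\gamma^2\bigr)\bigl(F(\omega^t)-F(\omega^*)\bigr)+c_1L^2\gamma^2\,\|\omega^t-\omega^*\|^2;
\]
expanding $\|\omega^{t+1}-\omega^*\|^2=\|\omega^t-\omega^*\|^2-2L\gamma\langle\mu^t,\omega^t-\omega^*\rangle+\|{-}L\gamma\mu^t+\gamma R^t\|^2$, taking $\E[\cdot\mid\mathcal{F}^t]$, and again invoking strong convexity gives the companion bound
\[
\E\bigl[\|\omega^{t+1}-\omega^*\|^2\mid\mathcal{F}^t\bigr]\le\bigl(1-a_2L\gamma+b_2L^2\gamma^2\bigr)\|\omega^t-\omega^*\|^2+c_2L^2\gamma^2\bigl(F(\omega^t)-F(\omega^*)\bigr),
\]
with explicit positive constants $a_i,b_i,c_i$ depending only on $M_2,M_3,Q,P$. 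Choosing $\kappa$ so that the two cross-terms are absorbed, the combination $V^t$ then satisfies $\E[V^{t+1}\mid\mathcal{F}^t]\le(1-\rho\gamma)V^t$ for some $\rho>0$, provided $\gamma$ lies below two explicit thresholds, which are precisely the constants $\gamma_1$ (forcing $b_1L^2\gamma\le\tfrac12a_1$, so the first recursion is a genuine contraction) and $\gamma_2$ (its analogue for the second), on top of the already-used $\gamma\le1$ and $\gamma\le1/(LM_3QP)$. Since $V^t\ge0$ and $\E[V^{t+1}\mid\mathcal{F}^t]\le V^t$, the supermartingale convergence theorem shows $V^t$ converges almost surely and $\sum_t\gamma\rho V^t<\infty$ almost surely, forcing $V^t\to0$ and hence $\|\omega^t-\omega^*\|\to0$, which is~(\ref{eq:thm4-0}).

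The step I expect to be the real obstacle is the one displayed in the second paragraph: showing that the inner-loop residual $R^t$ is bounded by $\|\mu^t\|$ — equivalently by $\|\triangledown F(\omega^t)\|$, by $\sqrt{F(\omega^t)-F(\omega^*)}$, or by $\|\omega^t-\omega^*\|$ — times a power of $L\gamma$, rather than merely by an absolute constant. In Theorem~\ref{thm:3} these correction terms were bounded crudely, which is what produced the $L^3\gamma$ neighborhood; here one must instead exploit that the entire update vanishes when $\omega^t=\omega^*$, so that every term of $R^t$ inherits a factor going to $0$ with $V^t$, and then carry this through both the composition of the $L$ inner steps and the aggregation of the $P\times Q$ parallel blocks with the correct $L$ and $QP$ dependence. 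A secondary difficulty is the bookkeeping over the random partition maps $(\pi_q)_{q=1}^{Q}$ and the parallel blocks in step~\ref{agg}, together with the choice of $\kappa$ turning the two coupled inequalities into a single contraction; this is precisely where tracking both $F(\omega^t)-F(\omega^*)$ and $\|\omega^t-\omega^*\|^2$, as announced before the theorem, becomes indispensable.
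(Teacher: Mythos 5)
Your proposal is correct in its essentials, and you have isolated exactly the right key fact, but you combine the ingredients differently from the paper. The shared core is your second paragraph: with $(\mathcal{b}^t,\mathcal{d}^t)=(M,N)$ the corrector $\mu^t$ is the exact coordinate restriction of $\triangledown F(\omega^t)$, so $\E\left[\left.\left\|\mu^t\right\|^2\right|\mathcal{F}^t\right]=\frac{\mathcal{c}^t}{M}\left\|\triangledown F(\omega^t)\right\|^2$ exactly, and the accumulated inner-loop residual is bounded by $\left\|\mu^t\right\|$ times powers of $L\gamma$ — the paper proves precisely this in (\ref{eq:thm4-5}) and (\ref{eq:thm4-7}) by the recursive (Gronwall-type) estimates of Claim \ref{cl:4}, so the step you flag as the main obstacle is the one the paper actually carries out; your only imprecision is stating the bound pathwise rather than in conditional expectation on $\mathcal{F}^t$ (the residual depends on the inner random indices), which is cosmetic since the blockwise $M_3$-Lipschitz property gives the same recursion either way. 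Where you diverge is in assembling the two one-step estimates. The paper explicitly declines the Lyapunov route: it never forms $V^t=\Delta_t+\kappa\left\|\omega^t-\omega^*\right\|^2$. Instead it (i) imposes $A(t)\leq 0$ in (\ref{eq:thm4-12}) so that $\E\left[\left.\left\|\omega^{t+1}-\omega^*\right\|^2\right|\mathcal{F}^t\right]\leq\left\|\omega^t-\omega^*\right\|^2$, using this only to get $\Delta_t\leq\left\|\omega^0-\omega^*\right\|\left\|\triangledown F(\omega^t)\right\|$, and then (ii) feeds that into (\ref{eq:thm4-14}) with $B(t)<0$ to obtain $\E\left[\bigtriangleup_{t+1}\right]\leq\E\left[\bigtriangleup_t\right]+\frac{\min_tB(t)}{\left\|\omega^0-\omega^*\right\|^2}\E\left[\bigtriangleup_t^2\right]$ and runs the reciprocal recursion on $1/\E\left[\bigtriangleup_t\right]$, concluding $\E\left[\bigtriangleup_t\right]=\mathcal{O}(1/t)\to 0$; the thresholds $\gamma_1,\gamma_2$ then come out as roots of the cubic inequalities (\ref{eq:thm4-21})--(\ref{eq:thm4-22}) (cubic, not quadratic as in your sketch, because of the $\gamma^4$-order terms), solved in closed form via $\arcsinh$. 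Your coupled-contraction version, if the constants are tracked, actually buys more: a geometric rate and almost-sure convergence via the supermartingale theorem, versus the paper's sublinear bound on $\E\left[\bigtriangleup_t\right]$ only; moreover, since strong convexity lets each of the two recursions close on itself (both $\left\|\triangledown F(\omega^t)\right\|^2\geq 2M_2\bigtriangleup_t$ and $\left\|\triangledown F(\omega^t)\right\|\geq M_2\left\|\omega^t-\omega^*\right\|$ are available), the coupling constant $\kappa$ is not even strictly necessary. The one caution is that the paper asserts in Section 4 that "the Lyapunov analysis fails" for this algorithm — that remark pertains to the subsampled regime of Theorems \ref{thm:1}--\ref{thm:3}, where the error terms do not vanish at $\omega^*$; in the exact-$\mu^t$ regime of Theorem \ref{thm:4} your route is viable and is, if anything, the more standard one.
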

\begin{proof}
See Appendix \hyperref[proof:thm4]{H}.
\end{proof}
The Lyapunov analysis, which is a common strategy to deal with a constant learning rate (see e.g. \cite{schmidt2017minimizing}), fails for our algorithm due to the analogous reasons as those for Theorem \ref{thm:1}. The success of the Lyapunov analysis heavily relies on the number of negative terms available when computing the loss function errors $F(\omega^t)-F(\omega^*)$ and the errors $\omega^t-\omega^*$. Unfortunately, the doubly distributed data setting results in lack of information in each iteration in step~\ref{update}, which leads to a  scarcity of negative terms to ensure the decrease of the loss function value.

Our steps to study the convergence analysis are as follows. We first establish either exact forms or upper bounds for all terms involving gradients. Then, from the update rule,  we find a criteria for the constant learning rate $\gamma$ so as to make the errors $\omega^{t}-\omega^*$ at least not increase as the number of iterations increases. In addition, given Lipschitz continuity of $\triangledown F(\omega)$, we find a recursive formula regarding the loss function error, which provides another constraint for $\gamma$ such that the loss function error vanishes as $t$ increases. Finally, the convergence of SODDA and the existence of $\gamma$ are guaranteed by two cubic inequality constraints aforementioned. 
\section{Numerical Study}
In this section, we compare the SODDA method with RADiSA-avg \cite{nathan2017optimization}, which is the best known optimization algorithm for solving problem \hyperref[obj]{(1)} with doubly distributed data. All the algorithms are implemented in Scala with Spark 2.0. The experiments are conducted in a Hadoop cluster with 4 nodes, each containing 8 Intel Xeon 2.2GHz cores. We conduct experiments
on three different-size synthetic datasets that are larger than the datasets in  \cite{nathan2017optimization} and two datasets used in \cite{wongchaisuwat2018truth} extracted from SemMed Database. For all of these datasets, we train one of the most popular classification models: binary classification hinge loss support vector machines (SVM), and set the learning rate $\gamma_t=\frac{1}{(1+\sqrt{t-1})}$, which is also employed in \cite{nathan2017optimization}. Furthermore, we set the feature partition number $Q=3$ and observation partition number $P=5$, which is also one of the cases studied in \cite{nathan2017optimization}. We do not compare different learning rates and $Q,P$ since these have been extensively studied in \cite{nathan2017optimization}.
\subsection{SVM with Synthetic data}
We first compare SODDA with RADiSA-avg \cite{nathan2017optimization} using synthetic data. The datasets for these experiments are generated based on a standard procedure introduced in \cite{zhang2012efficient}, which is also  used in \cite{nathan2017optimization}: the $x_i$'s and $z$ are sampled from the uniform distribution in $[-1,1]$, and $y_i:=\mathrm{sgn}(x_iz)$ with probability 0.01 of flipping the sign. In addition, all the data is in the dense format and the features are standardized to have unit variance. The size of each partition from the small-size dataset is $50,000\times 6,000$, the one from the mid-size data is $60,000\times 7,000$ and the one from the large-size data is $60,000\times 9,000$. The information about these three datasets is listed in Table \hyperref[table:1]{1}. 

\begin{table}[H]
\centering
\begin{tabular}{|l|l|l|l|}
\hline
data size & small  & medium &  large \\ \hline
 $P\times Q$ & $5\times 3$ &$5\times 3$  &$5\times 3$  \\ \hline 
 size of each partition & $50,000\times 6,000$  & $60,000\times 7,000$ & $60,000\times 9,000$ \\ \hline
Number of Spark executors used &18 &25 &25 \\ \hline
\end{tabular}
\caption{Synthetic datasets for numerical experiments }
\label{table:1}
\end{table}

First, we conduct $\mathcal{b}^t,\mathcal{c}^t,\mathcal{d}^t$ subsequence related experiments. We justify the value of $(\mathcal{b}^t,\mathcal{c}^t,\mathcal{d}^t)$ from the small-size dataset, since the other two datasets would take more computational time. We study the impact of $(\mathcal{b}^t,\mathcal{c}^t,\mathcal{d}^t)$ to the performance of SODDA by varying one of the three parameters $(\mathcal{b}^t,\mathcal{c}^t,\mathcal{d}^t)$ while keeping the other two parameters fixed. 

The most important results are presented in Figure \hyperref[fig:2]{2}. In Figure \hyperref[fig1]{2(a)}, we study the cases where the number of total observations used to estimate the full gradient in step~\ref{grad} varies from $60\%$ to $90\%$ with $\mathcal{b}^t=\mathcal{c}^t=100\%$. In Figure \hyperref[fig2]{2(b)}, we consider the cases when $\mathcal{c}^t$ varies from $40\%$ to $80\%$ given that every feature is involved to compute the approximated full gradient, i.e. $\mathcal{b}^t=100\%$. Figure \hyperref[fig3]{2(c)} represents the cases where only partial features are used in step~\ref{grad} but everything available is fully used, i.e. $\mathcal{b}^t=\mathcal{c}^t$. In Figures \hyperref[fig4]{2(d)}-\hyperref[fig6]{(f)}, we study three different $\mathcal{b}^t$ choices and for each one we vary $\mathcal{c}^t$. Figure \hyperref[fig7]{2(g)} is an extension of Figure \hyperref[fig4]{2(d)} showing the long-time performance under the corresponding set of parameters. 

In these plots, we observe that every set of parameters $(\mathcal{b}^t,\mathcal{c}^t,\mathcal{d}^t)$ with the small-size dataset outperforms RADiSA-avg in early iterations, however, the 
benefits peak at certain points. More precisely, from Figure \hyperref[fig1]{2(a)}, we discover that the marginal benefit grows up dramatically when $\mathcal{d}^t$ increases from $60\%$ to $80\%$ and slows down from $80\%$ to $90\%$, thus, $\mathcal{d}^t=85\%$ seems to be most beneficial. When it comes to $\mathcal{c}^t$, we observe that although the value of $\mathcal{c}^t$ does not influence the accuracy of the solution, a higher value of $\mathcal{c}^t$ leads to a faster convergence speed to a good quality solution in Figure \hyperref[fig2]{2(b)}. Thus, we set $\mathcal{c}^t=80\%$ as a good value. From Figures \hyperref[fig3]{2(c)}-\hyperref[fig7]{(g)}, we observe that the value of $\mathcal{b}^t$ affects the accuracy of the solution significantly, therefore, we set $\mathcal{b}^t=85\%$ after taking both the accuracy of the solution and the computational time into consideration.          
 
In these figures, we observe that SODDA always outperforms RADiSA-avg in early iterations on the small-size dataset, and there is a trade-off between the accuracy of the loss function value and the sampling sizes used in the algorithm. More precisely, using less data leads to a faster convergence speed but a less accurate solution, while using more data contributes to a more accurate solution but requires more time. 
\begin{figure}[H]
\minipage{0.33\textwidth}
\includegraphics[width=\linewidth]{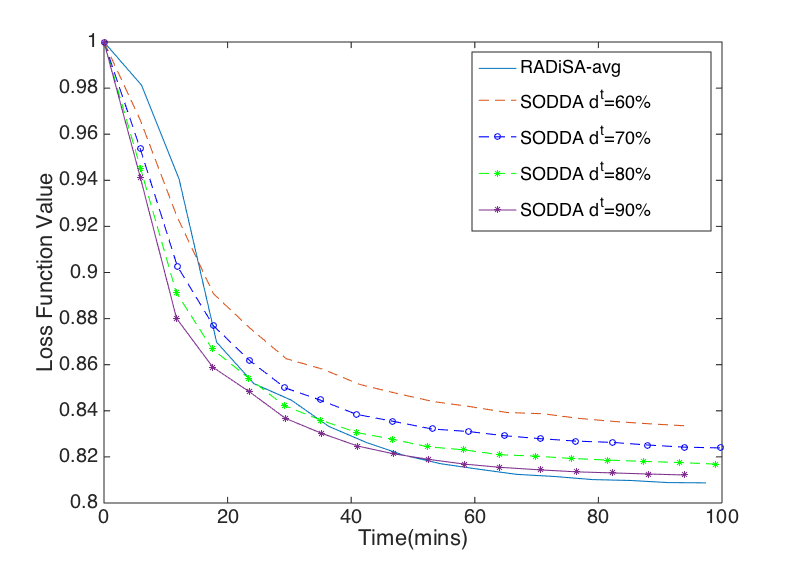}
\caption*{(a)}
  \label{fig1}
\endminipage\hfill
\minipage{0.33\textwidth}
\includegraphics[width=\linewidth]{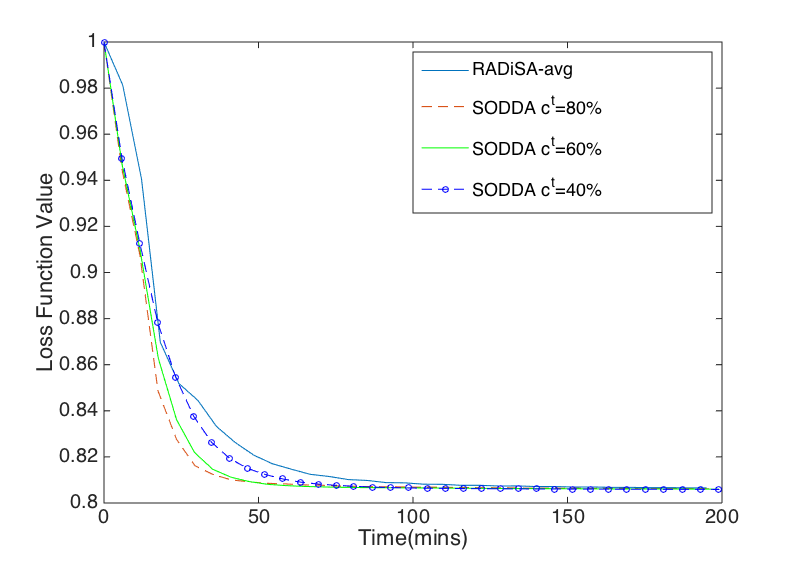}
\caption*{(b)}
\label{fig2}
\endminipage\hfill
\minipage{0.33\textwidth}
\includegraphics[width=\linewidth]{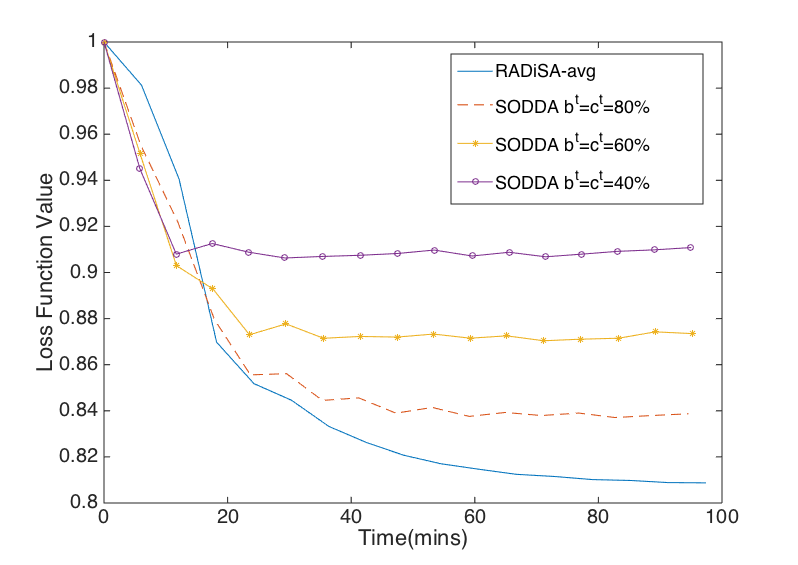}
\caption*{(c)}
 \label{fig3}
\endminipage\hfill
\minipage{0.33\textwidth}
\includegraphics[width=\linewidth]{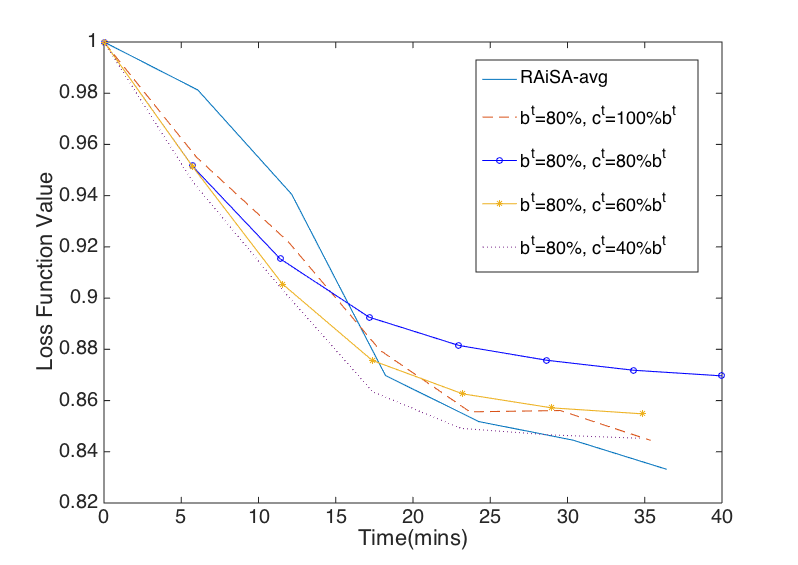}
\caption*{(d)}
\label{fig4}
\endminipage\hfill
\minipage{0.33\textwidth}
\includegraphics[width=\linewidth]{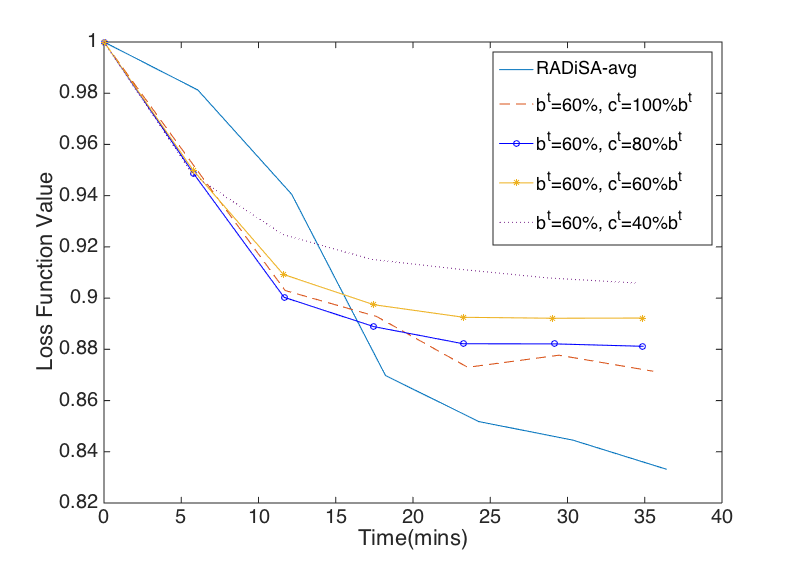}
\caption*{(e)}
  \label{fig5}
\endminipage\hfill
\minipage{0.33\textwidth}
\includegraphics[width=\linewidth]{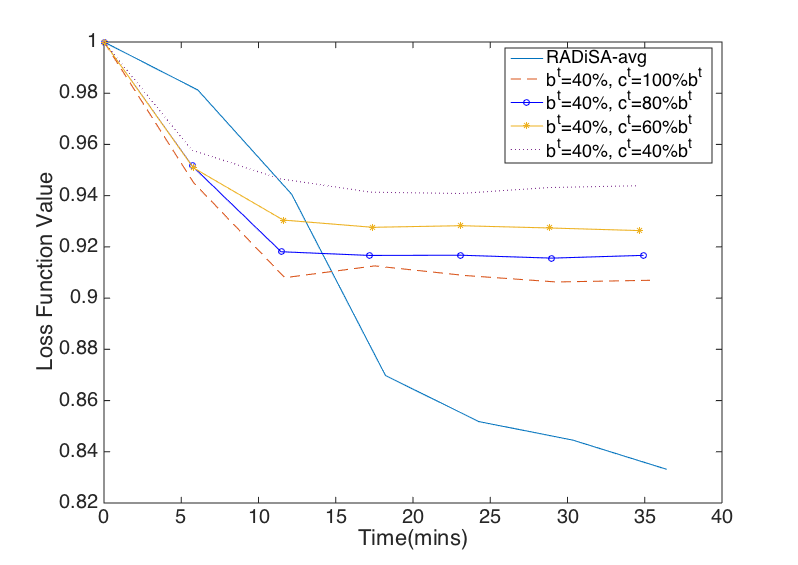}
\caption*{(f)}
\label{fig6}
\endminipage\hfill
\minipage{0.33\textwidth}
\includegraphics[width=\linewidth]{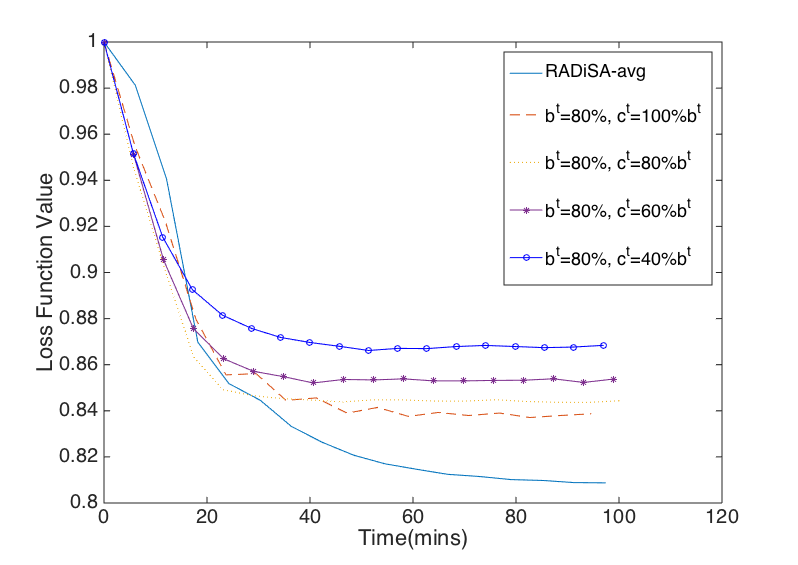}
\caption*{(g)}
\label{fig7}
\endminipage\hfill
\caption{Comparison of SODDA and RADiSA-avg on small-size dataset}
\label{fig:2}
\end{figure}
After specifying the values of $\mathcal{b}^t$, $\mathcal{c}^t$ and $\mathcal{d}^t$ to be $(85\%,80\%,85\%)$, we test both SODDA and RADiSA-avg on the mid- and large-size datasets with three different seeds. The results are presented in Figure \hyperref[fig8]{3}. As we can observe, SODDA always exhibits a stronger and faster convergence than RADiSA-avg. It is interesting that as the size of the dataset increases, the intersection time of SODDA and RADiSA-avg comes later, which gives SODDA more advantages over RADiSA-avg when dealing with large datasets. 
\begin{figure}[h]
\minipage{0.5\textwidth}
\includegraphics[width=\linewidth]{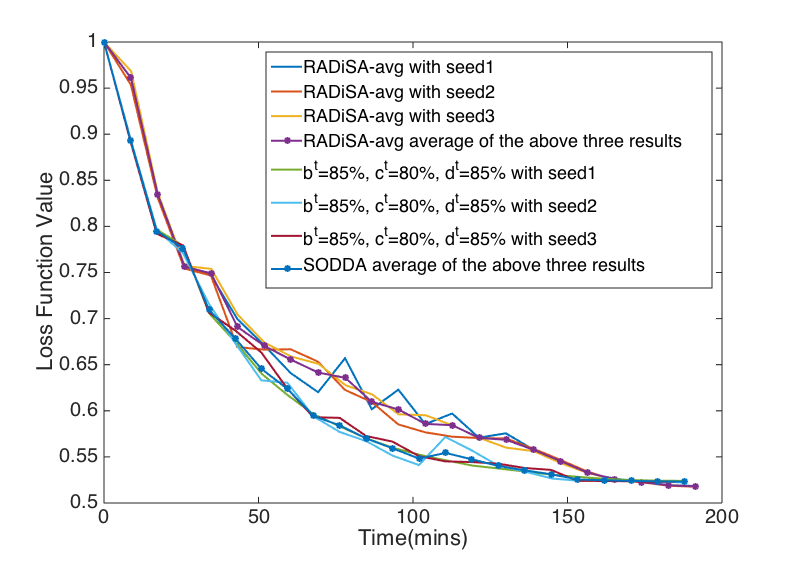}
\caption*{(a)}
  \label{fig8}
\endminipage\hfill
\minipage{0.5\textwidth}
\includegraphics[width=\linewidth]{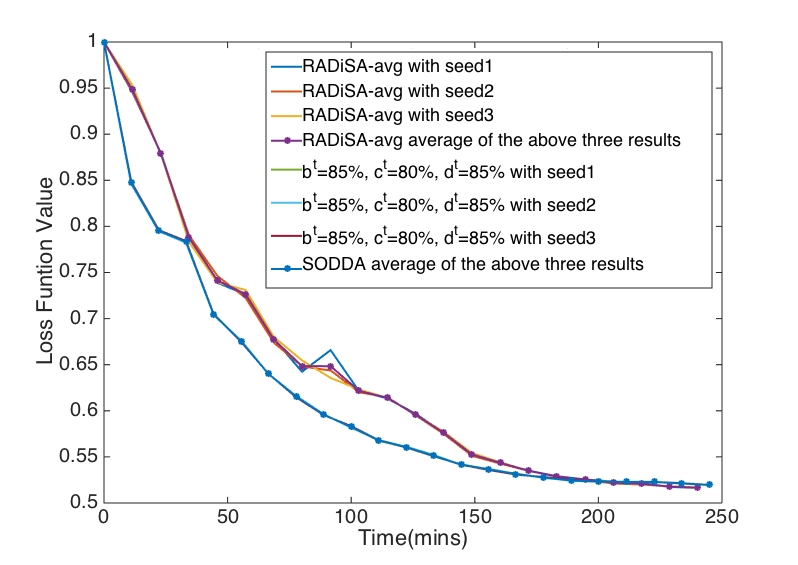}
\caption*{(b)}
\label{fig9}
\endminipage\hfill
\caption{Comparison of SODDA and RADiSA-avg for three different seeds on the mid- and large-size datasets}
\end{figure}

In the SODDA algorithm, we randomly choose a subset of observations and a block of features to estimate the full gradient in step~\ref{grad}. Moreover, both SODDA and RADiSA-avg utilize an observation randomly selected from a randomly chosen sub-matrix in the update step, where SODDA employs a sub-block of the approximated full gradient as a corrector but RADiSA-avg employs the exact full gradient. In order to eliminate the uncertainty about the choice of seeds, we conduct experiments on the large-size dataset under the same set of parameter $(\mathcal{b}^t,\mathcal{c}^t,\mathcal{d}^t)$ with different seeds. Table \hyperref[tab:2]{2} summarizes the influence of the change of the seed on the large-size dataset. For 10 different seeds, we run 40 iterations for each. The first two columns present the average of the difference of the maximum objective value and the average function value across the 10 seeds, and the average of the difference of the average function value and the minimum objective value across the 10 seeds, respectively. Similarly, the remaining terms are defined as the maximum of the difference of the maximum objective value and the average function value, and the maximum of the difference of the average function value and the minimum objective value. As we can see in Table  \hyperref[tab:2]{2}, the perturbation caused by the change of the seed is negligible especially when compared to the objective function value, which is a positive characteristic. Thus, in the remaining experiments, we no longer need to consider the impact of the randomness caused by either SODDA or RADiSA-avg.

\begin{table}[H]
\centering
\label{tab:2}
\begin{tabular}{|l|l|l|l|l|}
\hline
 & avg(max-avg) &avg(avg-min)  &max(max-avg)  &max(avg-min)  \\ \hline
SODDA & $0.4600\times 10^{-4}$ &$0.0251\times 10^{-4}$  & $0.2500\times 10^{-3}$ & $3.0000\times 10^{-3}$ \\ \hline
RADiSA-avg & $1.6373\times 10^{-4}$ &  $1.2606\times 10^{-4}$ & $1.8000\times 10^{-3}$ & $2.3500\times 10^{-3}$ \\ \hline
\end{tabular}
\caption{Variation of SODDA and RADiSA-avg by using different seeds}
\end{table}

\subsection{SVM with SemMed Database}
In the last set of experiments, we study the performances of SODDA with the $\left(\mathcal{b}^t,\mathcal{c}^t,\mathcal{d}^t\right)$ selected in the previous section and RADiSA-avg on the Semantic MEDLINE Database \cite{kilicoglu2012semmeddb} with SemRep, a semantic interpreter of biomedical text \cite{rindflesch2003interaction} as an extraction tool to construct the knowledge graph (KG). Like the preprocessing done in \cite{wongchaisuwat2018truth}, we apply the inference method, which is called the Path Ranking Algorithm (PRA) \cite{lao2010relational}, to KG constructed from SemRep. The model under consideration is still linear SVM, and all the datasets considered are in the sparse format. The first dataset DIAG-neg10 is based on relationship \quotes{DIAGNOSES,} while LOC-neg5 is created in a similar manner based on \quotes{LOCATION OF.} The data is summarized in Table \hyperref[table:3]{3}. 
 
Figure \hyperref[fig10]{4} illustrates the convergence paths of the objective loss function $F(\omega)$ generated by SODDA and RADiSA-avg versus time. We observe that using SODDA is much better than RADiSA with respect to not only the running time but also the loss reduction in early iterations. Comparing Figure \hyperref[fig11]{4(a)} with Figure \hyperref[fig10]{4(b)}, we discover that the superior behavior of RADiSA over RADiSA-avg is more apparent and robust when applied to larger datasets, which is expected since  it is more beneficial for datasets with larger size to perform partial computation instead of full computation of gradients in step~\ref{grad}.

\begin{table}[H]
\centering
\begin{tabular}{|l|r|r|r|}
\hline
Dataset& Observations ($N$) & Features ($d$) & Size of each partition ($n\times m$) \\ \hline
DIAG-neg10& 425,185 & 26,946 & $85,037\times 8,982$  \\ \hline
LOC-neg5& 5,638,696 & 26,966 & $1,127,740\times 8,989$  \\ \hline
\end{tabular}
\caption{Datasets extracted from SemMed database }
\label{table:3}
\end{table}

\begin{figure}[H]
\minipage{0.5\textwidth}
\includegraphics[width=\linewidth]{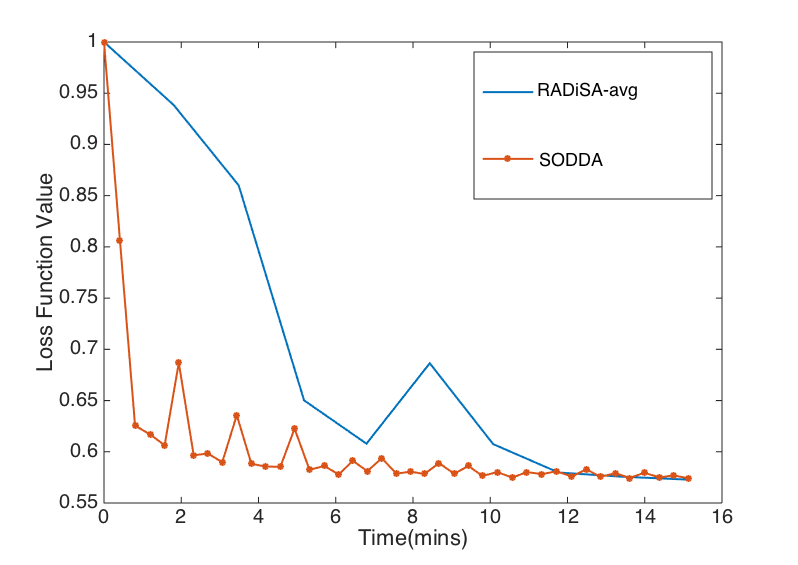}
\caption*{(a) DIAG-neg10}
  \label{fig10}
\endminipage\hfill
\minipage{0.5\textwidth}
\includegraphics[width=\linewidth]{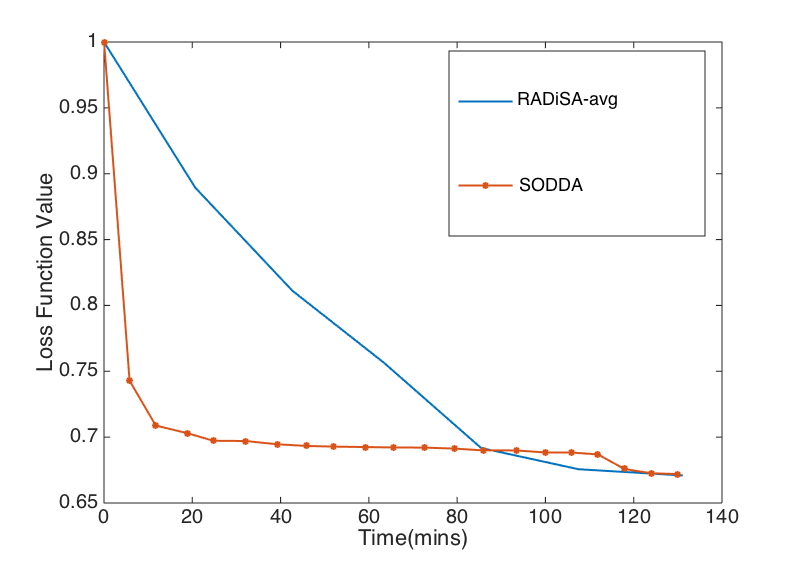}
\caption*{(b) LOC-neg5}
\label{fig11}
\endminipage\hfill
\caption{Comparison of SODDA and RADiSA-avg on SemMed database}
\end{figure}

\subsection{Key Findings}
From the first set of experiments conducted on different synthetic datasets in the dense format, we justify a good set of parameters $(\mathcal{b}^t,\mathcal{c}^t,\mathcal{d}^t)=(85\%,80\%,85\%)$ and eliminate the potential impact of the randomness involved in SODDA and RADiSA to the performance of the convergence. Furthermore, we discover that SODDA always exhibits a stronger and faster convergence than RADiSA-avg for every dataset considered and parameter values chosen. In the second set of experiments, we observe the same dominance of SODDA when compared to RADiSA-avg on sparse datasets.

In conclusion, SODDA provides a faster, stronger and more robust convergence than RADiSA-avg for both dense and sparse datasets.

\bibliographystyle{apa}
\bibliography{references}

\newpage
\section{Appendix}
\subsection*{A\tab Problem Set-up}
We study the optimization problem of minimizing
\begin{align*}
\min_{\omega\in\mathbb{R}^{d}}F(\omega):=\frac{1}{N}\sum_{i=1}^N f_i(x_i\omega)=\frac{1}{N}\sum_{k=1}^P\sum_{j=1}^n f_j^k\left( \sum_{q=1}^Q\sum_{p=1}^P x_j^{p,q,k}\omega_{q,\pi_q(p)}\right),
\end{align*}  
where the features and the observations of the data $\left\{(x_i,y_i)\right\}_{i=1}^N$ are split into $Q$ and $P$ partitions respectively, and each feature partition is further separated into $P$ smaller divisions. We have
\begin{align*}
n&=N/P,\quad m=d/Q,\quad \tilde{m}=d/QP,\\
\omega&=\left( \omega_{11},\omega_{12},\cdots,\omega_{1P},\omega_{21},\cdots,\omega_{2P},\cdots,\omega_{QP} \right).
\end{align*} 
\subsection*{B\tab Notation}
Recall that  in steps~\ref{inner_begin}-~\ref{inner_end}, the inner loop of SODDA performs iterations on each parameter subset $\omega_{q,\pi_q(p)}$ (for $i\geq 0$):
\begin{dmath*}
\tiny
\bar{\omega}^{(i+1)}_{q,\pi_q(p)}=\bar{\omega}^{(i)}_{q,\pi_q(p)}-\gamma_{t+1} \left [ \triangledown_{\omega_{q,\pi_q(p)}} f_{j_{q,\pi_q(p)}}^{p}(x_{j_{q,\pi_q(p)}}^{p,q,\pi_q(p)}\bar{\omega}^{(i)}_{q,\pi_q(p)})-\triangledown_{\omega_{q,\pi_q(p)}} f_{j_{q,\pi_q(p)}}^{p}(x_{j_{q,\pi_q(p)}}^{p,q,\pi_q(p)}\tilde{w}_{q,\pi_q(p)})+\mu^t_{q,\pi_q(p)} \right ], 
\end{dmath*}
where $j_{q,\pi_q(p)}$ is a randomly selected observation in sub-block $x^{p,q,\pi_q(p)}$. It is convenient to use the notation
\begin{align*}
v^{t,i}&=
\begin{bmatrix}
\triangledown_{\omega_{11}} f_{j_{11}}^{\pi_1^{-1}(1)}\left( x_{j_{11}}^{\pi_1^{-1}(1),1,1}\bar{\omega}_{11}^{t,i-1} \right)-\triangledown_{\omega_{11}} f_{j_{11}}^{\pi_1^{-1}(1)}\left( x_{j_{11}}^{\pi_1^{-1}(1),1,1}\tilde{\omega}_{11} \right) \\ 
\triangledown_{\omega_{12}} f_{j_{12}}^{\pi_1^{-1}(2)}\left( x_{j_{12}}^{\pi_1^{-1}(2),1,2}\bar{\omega}_{12}^{t,i-1} \right)-\triangledown_{\omega_{12}} f_{\pi_1^{-1}(2)}^{\pi_1(2)}\left( x_{j_{12}}^{\pi_1^{-1}(2),1,2}\tilde{\omega}_{12} \right) \\ 
\vdots\\ 
\triangledown_{\omega_{1P}} f_{j_{1P}}^{\pi_1^{-1}(P)}\left( x_{j_{1P}}^{\pi_1^{-1}(P),1,P}\bar{\omega}_{1P}^{t,i-1} \right)-\triangledown_{\omega_{1P}} f_{j_{1P}}^{\pi_1^{-1}(P)}\left( x_{j_{1P}}^{\pi_1^{-1}(P),1,P}\tilde{\omega}_{1P} \right)\\ 
\triangledown_{\omega_{21}} f_{j_{21}}^{\pi_2^{-1}(1)}\left( x_{j_{21}}^{\pi_2^{-1}(1),2,1}\bar{\omega}_{21}^{t,i-1} \right)-\triangledown_{\omega_{21}} f_{j_{21}}^{\pi_2^{-1}(1)}\left( x_{j_{21}}^{\pi_2^{-1}(1),2,1}\tilde{\omega}_{21} \right)\\ 
\vdots\\ 
\triangledown_{\omega_{2P}} f_{j_{2P}}^{\pi_2^{-1}(P)}\left( x_{j_{2P}}^{\pi_2^{-1}(P),2,P}\bar{\omega}_{2P}^{t,i-1} \right)-\triangledown_{\omega_{2P}} f_{j_{2P}}^{\pi_2^{-1}(P)}\left( x_{j_{2P}}^{\pi_2^{-1}(P),2,P}\tilde{\omega}_{2P} \right)\\ 
\vdots\\ 
\triangledown_{\omega_{QP}} f_{j_{QP}}^{\pi_Q^{-1}(P)}\left( x_{j_{QP}}^{\pi_Q^{-1}(P),Q,P}\bar{\omega}_{QP}^{t,i-1} \right)-\triangledown_{\omega_{QP}} f_{j_{QP}}^{\pi_Q^{-1}(P)}\left( x_{j_{QP}}^{\pi_Q^{-1}(P),Q,P}\tilde{\omega}_{QP} \right)
\end{bmatrix}
\in \mathbb{R}^{d},
\end{align*}
where $\pi_q^{-1}(p) \in\left\{1,2,\cdots,P \right\}$ is the inverse function of $\pi_q(p)$, for all $q$ and $p$. With this notation, we can integrate all subsets $\omega_{q,\pi_q(p)}$ and simplify the inner loop of the SODDA as follows
\begin{center}
\begin{tabular}{ll}
\multicolumn{2}{l}{$\omega^{t}$}     \\
\multicolumn{1}{c}{} & $\bar{\omega}^{t,0}=\omega^{t}$   \\
                     & $\bar{\omega}^{t,1}=\bar{\omega}^{t,0}-\gamma_{t+1} \left( \mu^t+v^{t,1} \right)$   \\
                     & $\bar{\omega}^{t,2}=\bar{\omega}^{t,1}-\gamma_{t+1} \left( \mu^t+v^{t,2} \right)$   \\
                     & $\vdots$\\
                     &$\bar{\omega}^{t,B}=\bar{\omega}^{t,B-1}-\gamma_{t+1} \left( \mu^t+v^{t,B} \right)$\\
\multicolumn{2}{l}{$\omega^{t+1}=\bar{\omega}^{t,B}$} . 
\end{tabular}
\end{center}

In what follows Assumptions \hyperref[as:2]{2}-\hyperref[as:5]{5} hold. Lastly, we define $\mathcal{F}^t$ as the sigma algebra that measures the history of the algorithm up until iteration $t$.

We also introduce $f\in\hat{\mathcal{O}}(g)$ if there exists a constant $C>0$ such that $f(x)\leq C\cdot g(x)$ for every $x\geq 0$.

\subsection*{C\tab Diminishing Learning Rate Convergence without Feature Sampling}
\begin{lem}
\label{lem:1}
Let $\Phi=\left\{\phi_1,\cdots,\phi_{R}\right\}$ be a set of random vectors measurable with respect to $\sigma-$algebra $\mathcal{H}$, let $g:\Phi\to\mathbb{R}^k$ be a measurable function, and let $\mathcal{b}$ be an integer such that $1\leq \mathcal{b}\leq R$.
Let $\mathcal{B}$ be a set of size $\mathcal{b}$ uniformly and randomly selected vectors from $\Phi$ without replacement. Given two constants $w_1$ and $w_2$, we have
\begin{align*}
\E\left[\left.w_1\sum_{i\in\mathcal{B}}g(\phi_i)+w_2\sum_{i\notin \mathcal{B}}g(\phi_i)\right|\mathcal{H} \right]=\left(\frac{\mathcal{b}}{R}w_1+\frac{R-\mathcal{b}}{R}w_2\right)\sum_{i=1}^R g(\phi_i).
\end{align*} 
\end{lem}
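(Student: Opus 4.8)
The plan is to reduce the claim to the elementary fact that, under uniform sampling without replacement, each fixed element of $\Phi$ lies in $\mathcal{B}$ with probability $\mathcal{b}/R$, and then to exploit that the draw of $\mathcal{B}$ is independent of the $\sigma$-algebra $\mathcal{H}$.

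First I would rewrite both sums via indicator variables: letting $\chi_i$ denote the indicator of the event $\{\phi_i\in\mathcal{B}\}$, we have $\sum_{i\in\mathcal{B}}g(\phi_i)=\sum_{i=1}^R \chi_i\,g(\phi_i)$ and $\sum_{i\notin\mathcal{B}}g(\phi_i)=\sum_{i=1}^R (1-\chi_i)\,g(\phi_i)$. By linearity of conditional expectation it then suffices to evaluate $\E[\chi_i\,g(\phi_i)\mid\mathcal{H}]$ for each $i$.

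Next, since each $\phi_i$, and hence $g(\phi_i)$, is $\mathcal{H}$-measurable while the set $\mathcal{B}$ is drawn independently of $\mathcal{H}$, I would factor $g(\phi_i)$ out of the conditional expectation and replace $\E[\chi_i\mid\mathcal{H}]$ by the unconditional probability, obtaining $\E[\chi_i\,g(\phi_i)\mid\mathcal{H}]=g(\phi_i)\,\mathbb{P}(\phi_i\in\mathcal{B})$. A one-line counting argument --- there are $\binom{R}{\mathcal{b}}$ equiprobable subsets of size $\mathcal{b}$, of which $\binom{R-1}{\mathcal{b}-1}$ contain a prescribed element --- gives $\mathbb{P}(\phi_i\in\mathcal{B})=\mathcal{b}/R$ for every $i$. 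Summing over $i$ yields $\E\big[\sum_{i\in\mathcal{B}}g(\phi_i)\mid\mathcal{H}\big]=\tfrac{\mathcal{b}}{R}\sum_{i=1}^R g(\phi_i)$, and likewise $\E\big[\sum_{i\notin\mathcal{B}}g(\phi_i)\mid\mathcal{H}\big]=\tfrac{R-\mathcal{b}}{R}\sum_{i=1}^R g(\phi_i)$; forming the $w_1,w_2$ combination gives the stated identity.

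The argument is essentially bookkeeping, so there is no real obstacle; the only points deserving a moment's care are that conditioning on $\mathcal{H}$ does not change the distribution of $\mathcal{B}$ (independence of the sampling) and that $g(\phi_i)$ is $\mathcal{H}$-measurable (so that it factors out), both of which are guaranteed by the hypotheses since $\Phi$ is $\mathcal{H}$-measurable and $\mathcal{B}$ is a fresh uniform draw. If one prefers to avoid invoking independence as such, an equivalent route is to work conditionally on $\mathcal{H}$ throughout: conditionally the values $g(\phi_i)$ are constants and $\mathcal{B}$ is still uniform over size-$\mathcal{b}$ subsets, so the same computation applies verbatim.
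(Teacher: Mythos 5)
Your proof is correct and follows essentially the same route as the paper's: the paper averages over all $\binom{R}{\mathcal{b}}$ equiprobable subsets and invokes the fact that each index is selected with probability $\mathcal{b}/R$, which is exactly the inclusion-probability computation you carry out via indicator variables. Your version is slightly more explicit about the measurability and independence bookkeeping, but the substance is identical.
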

\begin{proof}
Using the definition of the expectation we obtain
\begin{align*}
\E\left[\left.w_1\sum_{i\in\mathcal{B}}g(\phi_i)+w_2\sum_{i\notin \mathcal{B}}g(\phi_i)\right|\mathcal{H} \right]=\sum_{\mathcal{B}}\frac{1}{\binom{R}{\mathcal{b}}}\left[w_1\sum_{i\in\mathcal{B}}g(\phi_i)+w_2\sum_{i\notin \mathcal{B}}g(\phi_i)\right],
\end{align*}
where the first summation indicates summation over all subsets of $\mathcal{B}$ of cardinality $\mathcal{b}$. Thus, the expected value of $w_1\sum_{i\in\mathcal{B}}g(\phi_i)+w_2\sum_{i\notin \mathcal{B}}g(\phi_i)$ with respect to $\mathcal{B}$ and conditioning on $\mathcal{H}$ is
\begin{align*}
\E\left[\left.w_1\sum_{i\in\mathcal{B}}g(\phi_i)+w_2\sum_{i\notin \mathcal{B}}g(\phi_i)\right|\mathcal{H} \right]&=\left(\frac{\mathcal{b}}{R}w_1+\left(1-\frac{\mathcal{b}}{R}\right)w_2\right)\sum_{i=1}^R g(\phi_i)\\
&=\left(\frac{\mathcal{b}}{R}w_1+\frac{R-\mathcal{b}}{R}w_2\right)\sum_{i=1}^R g(\phi_i),
\end{align*}
since each $i$ is selected with probability $\frac{\mathcal{b}}{R}$.
\end{proof}

\begin{lem}
\label{lem:2}
If Assumption \hyperref[as:1]{1} holds, then $\left\|\triangledown F(\omega^t)\right\|$ and $\sum_{j=1}^N\left\|\triangledown f_j(x_j\omega^t)\right\|^2$ for any $t$ satisfy
\begin{align}
\label{eq:lem2-0}
\left\|\triangledown F(\omega^t)\right\|&\leq M_1,\\
\label{eq:lem2-1}
\sum_{j=1}^N\left\|\triangledown f_j(x_j\omega^t)\right\|^2&\leq NM_1^2.
\end{align}
\end{lem}
\begin{proof}
Using Assumptions \hyperref[as:1]{1} we obtain
\begin{align*}
    \left\|\bigtriangledown F(\omega^t)\right\|= \left\|\frac{1}{N}\sum_{i=1}^N\bigtriangledown f_j(x_j\omega^t) \right\|\leq \frac{1}{N}\sum_{i=1}^N\left\|\bigtriangledown f_j(x_j\omega^t) \right\|\leq \frac{1}{N}\sum_{i=1}^NM_1=M_1.
\end{align*}
Similarly, for any $\omega^t$ we have
\begin{align*}
\sum_{j=1}^N\left\|\triangledown f_j(x_jw^t)\right\|^2\leq \sum_{i=1}^NM_1^2=NM_1^2.
\end{align*}
This completes the proof of the lemma.
\end{proof}

We assume that $\omega^{*}$ is the unique optimal solution to \hyperref[obj]{(1)}. Under these standard assumptions and the previous results, our first proposition argues a supermartingale relationship for the sequence of the loss function errors $F(\omega^t)-F(\omega^*)$.
\begin{pro}
\label{pro1}
If Assumptions \hyperref[as:1]{1}-\hyperref[as:4]{4} hold true, and the sequence of learning rates satisfies $\gamma_t\leq 1$ for all $t$, and the sequences $(\mathcal{c}^t,\mathcal{d}^t)_{t=0}^{\infty}$ are selected so that $\mathcal{c}^t\leq d$ and $\mathcal{d}^t\leq N$, then the loss function error sequence $F(\omega^t)-F(\omega^*)$ generated by SODDA satisfies
\begin{align}
\E\left[F(\omega^{t+1})-F(\omega^*)|\mathcal{F}^t\right]\leq (1-\frac{2\xi B}{d}\gamma_{t+1})[F(\omega^t)-F(\omega^*)]+C_1\gamma_{t+1}^2,
\label{eq:pro1-0}
\end{align}
where $C_1$ is a positive constant.
\end{pro}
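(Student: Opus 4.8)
The plan is to derive the supermartingale inequality (\ref{eq:pro1-0}) by tracking how one outer iteration of SODDA transforms $\omega^t$ into $\omega^{t+1}=\bar\omega^{t,L}$, using the compact notation $\bar\omega^{t,i+1}=\bar\omega^{t,i}-\gamma_{t+1}(\mu^t+v^{t,i+1})$ from Appendix B. First I would unroll the inner loop to write $\omega^{t+1}-\omega^t=-\gamma_{t+1}\sum_{i=1}^{L}(\mu^t+v^{t,i})=-\gamma_{t+1}(L\mu^t+\sum_{i=1}^L v^{t,i})$, and then invoke $M_3$-Lipschitz continuity of $\triangledown F$ (Assumption 3), which gives the standard descent-type bound
\begin{align*}
F(\omega^{t+1})-F(\omega^t)\leq \langle \triangledown F(\omega^t),\omega^{t+1}-\omega^t\rangle+\frac{M_3}{2}\left\|\omega^{t+1}-\omega^t\right\|^2.
\end{align*}
Taking conditional expectation given $\mathcal{F}^t$, the cross term splits into a contribution from $L\mu^t$ and one from $\sum_i v^{t,i}$. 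For the $\mu^t$ part, I would use the decomposition $\mu^t=\frac{1}{\mathcal{d}^t}\sum_{j\in\mathcal{D}^t}\bar\triangledown_{\omega_{\mathcal{C}^t}}f_j(x_j\omega^t)+e^t$; Lemma \ref{lem:1} applied to the sampling of $\mathcal{D}^t$ and $\mathcal{C}^t$ shows that the conditional expectation of the first summand is a scalar multiple (by $\frac{\mathcal{c}^t}{M}$, roughly) of $\triangledown F(\omega^t)$, producing the crucial negative term $-\frac{L\mathcal{c}^t}{M}\gamma_{t+1}\left\|\triangledown F(\omega^t)\right\|^2$, and the $e^t$ contribution is controlled by Claim \ref{cl:e} together with $\left\|\triangledown F(\omega^t)\right\|\leq M_1M_3$ from Lemma \ref{lem:2}, costing only $O(\gamma_{t+1}^2)$.

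Next I would bound the terms involving $v^{t,i}$. Each coordinate block of $v^{t,i}$ is a difference $\triangledown_{\omega_{q,\pi_q(p)}} f^{p}_{j}(x^{p,q,\pi_q(p)}_{j}\bar\omega^{t,i-1}_{q,\pi_q(p)})-\triangledown_{\omega_{q,\pi_q(p)}} f^{p}_{j}(x^{p,q,\pi_q(p)}_{j}\tilde\omega_{q,\pi_q(p)})$, so by Assumption 3 its norm is at most $M_3\|\bar\omega^{t,i-1}_{q,\pi_q(p)}-\omega^t_{q,\pi_q(p)}\|$, and summing over blocks gives $\|v^{t,i}\|\leq M_3\|\bar\omega^{t,i-1}-\omega^t\|$. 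Since $\bar\omega^{t,i-1}-\omega^t=-\gamma_{t+1}\sum_{\ell=1}^{i-1}(\mu^t+v^{t,\ell})$, an induction on $i$ (using $\gamma_{t+1}\leq 1$ and the bound on $\|\mu^t\|$ coming from $\|\triangledown F(\omega^t)\|\leq M_1M_3$, Lemma \ref{lem:2}, and Claim \ref{cl:e}) shows $\|v^{t,i}\|$ and $\|\bar\omega^{t,i-1}-\omega^t\|$ are all bounded by a constant times $\gamma_{t+1}$ times quantities that are $O(1)$ by the a priori bounds (\ref{eq:bd w}), (\ref{eq:bd w2}). Hence the cross term $-\gamma_{t+1}\langle\triangledown F(\omega^t),\sum_i v^{t,i}\rangle$ is bounded in absolute value by $C\gamma_{t+1}^2$, and the quadratic term $\frac{M_3}{2}\gamma_{t+1}^2\|L\mu^t+\sum_i v^{t,i}\|^2$ is likewise $O(\gamma_{t+1}^2)$. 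Collecting everything yields
\begin{align*}
\E\left[F(\omega^{t+1})-F(\omega^t)|\mathcal{F}^t\right]\leq -\frac{L\mathcal{c}^t}{M}\gamma_{t+1}\left\|\triangledown F(\omega^t)\right\|^2+C_1\gamma_{t+1}^2,
\end{align*}
and then strong convexity of $F$ (Assumption 2), in the Polyak–Łojasiewicz form $\left\|\triangledown F(\omega^t)\right\|^2\geq 2M_2(F(\omega^t)-F(\omega^*))$, converts the negative gradient term into $-\frac{2M_2L\mathcal{c}^t}{M}\gamma_{t+1}(F(\omega^t)-F(\omega^*))$; absorbing $\mathcal{c}^t\geq 1$ (or merely keeping the factor $\mathcal{c}^t$, which is $\geq$ some positive constant) gives (\ref{eq:pro1-0}).

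The main obstacle is the $v^{t,i}$ analysis: unlike ordinary SVRG, there is no clean formula for $\E[\omega^{t+1}-\omega^t\mid\mathcal{F}^t]$ because the $v^{t,i}$ are evaluated at the moving iterates $\bar\omega^{t,i-1}$ along the inner loop rather than at $\omega^t$, and they are coupled across the $QP$ blocks. The key realization — as flagged in the paper's discussion borrowed from \cite{bertsekas2000gradient} — is that one should not try to compute this expectation exactly, but instead pair the first two partial gradients inside the bracket of step~\ref{update} and treat $\mu^t_{q,\pi_q(p)}$ as a corrector whose expectation supplies the negative term, while bounding the paired difference crudely by Lipschitz continuity times a distance that is itself $O(\gamma_{t+1})$ because only $L$ inner steps of size $\gamma_{t+1}$ have been taken. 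Getting the induction to close — i.e. showing the inner-loop drift stays $O(\gamma_{t+1})$ with a constant independent of $L$ for the $C_1\gamma_{t+1}^2$ term, and only mildly dependent on $L$ elsewhere — is the delicate bookkeeping, and is presumably where the condition $\gamma_t\leq 1$ and the precise interplay with $L$ enter.
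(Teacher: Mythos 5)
Your proposal is correct and follows essentially the same route as the paper's proof: unroll the inner loop to get $\omega^{t+1}-\omega^t=-\gamma_{t+1}(L\mu^t+\sum_i v^{t,i})$, apply the $M_3$-descent lemma, use Lemma \ref{lem:1} (sampling without replacement) to extract the negative term $-\gamma_{t+1}\frac{\mathcal{c}^tL}{M}\|\triangledown F(\omega^t)\|^2$, control $e^t$ via Claim \ref{cl:e} and the $v^{t,i}$ via an induction showing the inner-loop drift is $\mathcal{O}(\gamma_{t+1})$, and finish with the Polyak--{\L}ojasiewicz consequence of strong convexity and $\mathcal{c}^t\geq 1$. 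The only cosmetic difference is that you bound $\|v^{t,i}\|$ deterministically blockwise before taking expectations, whereas the paper carries the conditional expectations over the sampled indices throughout; the substance is identical.
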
 
\begin{proof}
We write $\triangledown F(\omega^{t})=\left(\triangledown F(\omega^{t})_{11},\cdots,\triangledown F(\omega^{t})_{1P},\triangledown F(\omega^{t})_{21},\cdots,\triangledown F(\omega^{t})_{QP}\right)$ and $e^t=(e^t_{11},\cdots,e^t_{1P},e^t_{21},\cdots,e^t_{QP})$. In order to simplify the notation, we also denote $\pi=\left(\pi_q\right)_{q=1}^Q$ and $j_{q,\pi_q(p)}^{(i)}$, the index drawn in step~\ref{11} of the algorithm for given $\pi_q(p)$, where everything computed is at iteration $t$.
\begin{claim}
\label{cl2}
For any $t$ we have
\begin{align}
\label{eq:cl2-0}
&\E\left[\left.\frac{1}{\mathcal{d}^t}\sum_{j\in\mathcal{D}^t}\bar{\triangledown}_{\omega_{\mathcal{C}^t}}f_j(x_j\omega^t)\right|\mathcal{F}^t\right]=\frac{\mathcal{c}^t}{d}\triangledown F(\omega^t),\\
\label{eq:cl2-1}
&\E\left[\left.\left\|\frac{1}{\mathcal{d}^t}\sum_{j\in\mathcal{D}^t}\bar{\triangledown}_{\omega_{\mathcal{C}^t}}f_j(x_j\omega^t)\right\|^2\right|\mathcal{F}^t\right]\leq\frac{\mathcal{c}^tM_1^2}{d}.
\end{align}
\end{claim}
\begin{proof}
Applying Lemma \ref{lem:1} with $w_1=1$, $w_2=0$, $\Phi=\left\{ \bar{\triangledown}_{\omega_{\mathcal{C}^t}}f_j(x_j\omega^t)\right\}_{j=1}^N$, $g(z)=z$, $\mathcal{H}=\sigma(\mathcal{F}^t,\mathcal{C}^t)$, $\mathcal{B}=\mathcal{D}^t$ and the law of iterated expectation imply
\begin{align}
&\E\left[\left.\E\left[\left.\frac{1}{\mathcal{d}^t}\sum_{j\in\mathcal{D}^t}\bar{\triangledown}_{\omega_{\mathcal{C}^t}}f_j(x_j\omega^t)\right|\mathcal{F}^t,\mathcal{C}^t\right]\right|\mathcal{F}^t\right]=\frac{1}{\mathcal{d}^t}\cdot\frac{\mathcal{d}^t}{N}\sum_{j=1}^N\E\left[\left.\bar{\triangledown}_{\omega_{\mathcal{C}^t}}f_j(x_j\omega^t)\right|\mathcal{F}^t\right]\nonumber.
\end{align}
For each $j$, we in turn have 
\begin{align*}
&\E\left[\left.\bar{\triangledown}_{\omega_{\mathcal{C}^t}}f_j(x_j\omega^t)\right|\mathcal{F}^t\right]=\frac{1}{\binom{d}{\mathcal{c}^t}}\sum_{\mathcal{C}^t}\bar{\triangledown}_{\omega_{\mathcal{C}^t}}f_j(x_j\omega^t)\\
&\quad=\frac{1}{\binom{d}{\mathcal{c}^t}}\cdot\binom{d-1}{\mathcal{c}^t-1}\triangledown f_j(x_j\omega^t)=\frac{\mathcal{c}^t}{d}\triangledown f_j(x_j\omega^t).
\end{align*}
This yields
\begin{align}
&\E\left[\left.\frac{1}{\mathcal{d}^t}\sum_{j\in\mathcal{D}^t}\bar{\triangledown}_{\omega_{\mathcal{C}^t}}f_j(x_j\omega^t)\right|\mathcal{F}^t\right]=\frac{1}{N}\sum_{j=1}^N\frac{\mathcal{c}^t}{d}\triangledown f_j(x_j\omega^t)=\frac{\mathcal{c}^t}{Nd}\sum_{j=1}^N\triangledown f_j(x_j\omega^t).
\label{eq:cl2-2}
\end{align}
By substituting the definition of $\triangledown F(\omega^t)$ into (\ref{eq:cl2-2}) claim (\ref{eq:cl2-0}) follows.

Let us proceed to find an upper bound for the expected value of $\left\|\frac{1}{\mathcal{d}^t}\sum_{j\in\mathcal{D}^t}\bar{\triangledown}_{\omega_{\mathcal{C}^t}}f_j(x_j\omega^t)\right\|^2$ given $\mathcal{F}^t$. Applying the law of iterated expectation and Lemma \ref{lem:1} with $w_1=1$, $w_2=0$, $\Phi=\left\{ \bar{\triangledown}_{\omega_{\mathcal{C}^t}}f_j(x_j\omega^t)\right\}_{j=1}^N$, $g(z)=\left\|z\right\|^2$, $\mathcal{H}=\sigma(\mathcal{F}^t,\mathcal{c}^t)$ and $\mathcal{B}=\mathcal{D}^t$ give
\begin{align}
&\E\left[\left.\left\|\frac{1}{\mathcal{d}^t}\sum_{j\in\mathcal{D}^t}\bar{\triangledown}_{\omega_{\mathcal{C}^t}}f_j(x_j\omega^t)\right\|^2\right|\mathcal{F}^t\right]\leq \frac{1}{\mathcal{d}^t}\E\left[\sum_{j\in\mathcal{D}^t}\left.\left\|\bar{\triangledown}_{\omega_{\mathcal{C}^t}}f_j(x_j\omega^t)\right\|^2\right|\mathcal{F}^t  \right]\\
&\quad=\frac{1}{\mathcal{d}^t}\E\left[\left.\E\left[\left.\sum_{j\in\mathcal{D}^t}\left\|\bar{\triangledown}_{\omega_{\mathcal{C}^t}}f_j(x_j\omega^t)\right\|^2\right|\mathcal{F}^t,\mathcal{C}^t\right]\right|\mathcal{F}^t  \right]
=\frac{1}{\mathcal{d}^t}\cdot \frac{\mathcal{d}^t}{N}\E\left[\sum_{j=1}^N\left.\left\|\bar{\triangledown}_{\omega_{\mathcal{C}^t}}f_j(x_j\omega^t)\right\|^2\right|\mathcal{F}^t  \right]\nonumber\\
&\quad=\frac{1}{N}\sum_{j=1}^N\E\left[\left.\left\|\bar{\triangledown}_{\omega_{\mathcal{C}^t}}f_j(x_j\omega^t)\right\|^2\right|\mathcal{F}^t  \right]\nonumber,
\end{align}
which in turn yields
\begin{align}
\E\left[\left.\left\|\frac{1}{\mathcal{d}^t}\sum_{j\in\mathcal{D}^t}\bar{\triangledown}_{\omega_{\mathcal{C}^t}}f_j(x_j\omega^t)\right\|^2\right|\mathcal{F}^t\right]&\leq\frac{1}{N}\sum_{j=1}^N\frac{\mathcal{c}^t}{d}\E\left[\left.\left\|\triangledown f_j(x_j\omega^t)\right\|^2\right|\mathcal{F}^t  \right]\nonumber\\
&=\frac{\mathcal{c}^t}{dN}\sum_{j=1}^N\E\left[\left.\left\|\triangledown f_j(x_j\omega^t)\right\|^2\right|\mathcal{F}^t  \right],
\label{eq:cl2-3}
\end{align}
where we apply Lemma \ref{lem:1} for each $j$ again with  $w_1=1$, $w_2=0$, $\Phi=\left\{\left( \triangledown f_j(x_j\omega^t)\right)_{i}\right\}_{i=1}^{d}$, $g(z)=z^2$, $\mathcal{H}=\mathcal{F}^t$ and $\mathcal{B}=\mathcal{C}^t$. By inserting (\ref{eq:lem2-1}) from  Lemma \ref{lem:2} into (\ref{eq:cl2-3}) the claim in (\ref{eq:cl2-1}) follows.
\end{proof}

For $i=1,2,\cdots,B$, the expected value of $\left\|v^{t,i}\right\|^2$ given all the preceding information is
\begin{align}
&\E\left[\left\|v^{t,1}\right\|^2\vert\mathcal{F}^t,\mathcal{B}^t,\mathcal{C}^t,\mathcal{D}^t,\pi\right]=0\label{eq:pro1-3}\\
&\E\left[\left\|v^{t,i}\right\|^2\vert\mathcal{F}^t,\mathcal{B}^t,\mathcal{C}^t,\mathcal{D}^t,\pi,j_{11}^{(1)},j_{12}^{(1)},\cdots,j_{QP}^{(1)},j_{11}^{(2)},j_{12}^{(2)},\cdots,j_{QP}^{(2)},\cdots,j_{11}^{(i-2)},j_{12}^{(i-2)},\cdots,j_{QP}^{(i-2)}\right]\nonumber\\
&\quad=\sum_{j_{QP}^{(i-1)}=1}^n\cdots\sum_{j_{11}^{(i-1)}=1}^n \frac{1}{n^{QP}} 
\left \|
\begin{pmatrix}
\triangledown_{\omega_{11}}f_{j_{11}^{(i-1)}}^{\pi_1^{-1}(1)}\left( x_{j_{11}^{(i-1)}}^{\pi_1^{-1}(1),1,1}\bar{\omega}_{11}^{t,i-1} \right)-\triangledown_{\omega_{11}}f_{j_{11}^{(i-1)}}^{\pi_1^{-1}(1)}\left( x_{j_{11}^{(i-1)}}^{\pi_1^{-1}(1),1,1}\omega^t_{11} \right)\\
\vdots\\
\triangledown_{\omega_{1P}}f_{j_{1P}^{(i-1)}}^{\pi_1^{-1}(P)}\left( x_{j_{1P}^{(i-1)}}^{\pi_1^{-1}(P),1,P}\bar{\omega}_{1P}^{t,i-1} \right)-\triangledown_{\omega_{1P}}f_{j_{1P}^{(i-1)}}^{\pi_1^{-1}(P)}\left( x_{j_{1P}^{(i-1)}}^{\pi_1^{-1}(P),1,P}\omega^t_{1P} \right)\\
\vdots\\
\triangledown_{\omega_{QP}}f_{j_{QP}^{(i-1)}}^{\pi_Q^{-1}(P)}\left( x_{j_{QP}^{(i-1)}}^{\pi_Q^{-1}(P),Q,P}\bar{\omega}_{QP}^{t,i-1} \right)-\triangledown_{\omega_{QP}}f_{j_{QP}^{(i-1)}}^{\pi_Q^{-1}(P)}\left( x_{j_{QP}^{(i-1)}}^{\pi_Q^{-1}(P),Q,P}\omega^t_{QP} \right)
\end{pmatrix}
\right \|^2,\label{eq:pro1-4}
\end{align}  
for $i=2,3,\cdots,B$.

We prove a bound of the expected value of $\left\|v^{t,i}\right\|^2$ given $\mathcal{F}^t$ by induction for $i\in\left\{1,2,\cdots,B\right\}$.
\begin{claim}
\label{cl:3}
For $i=1,2,\cdots,B$, we have
\begin{align}
\label{eq:cl3-0}
\E\left[\left\|v^{t,i}\right\|^2|\mathcal{F}^t\right] = \hat{\mathcal{O}}(\gamma_{t+1}^2). 
\end{align}
\end{claim}
\begin{proof}
The claim holds for $i=1$ due to (\ref{eq:pro1-3}).

For $i=1,2,\cdots,k-1$, we assume that 
\begin{align}
\label{eq:cl3-1}
&\E\left[\left\|v^{t,i}\right\|^2|\mathcal{F}^t\right] = \hat{\mathcal{O}}(\gamma_{t+1}^2).
\end{align}

Now consider $v^{t,k}$. Let us show that the expected value of $\left\|v^{t,k}\right\|^2$ is bounded. By using (\ref{eq:pro1-4}) we have
\begin{align}
&\E\left[\left\|v^{t,k}\right\|^2|\mathcal{F}^t,\mathcal{B}^t,\mathcal{C}^t,\mathcal{D}^t,\pi,j_{11}^{(1)},j_{12}^{(1)},\cdots,j_{QP}^{(1)},j_{11}^{(2)},j_{12}^{(2)},\cdots,j_{QP}^{(2)},\cdots,j_{11}^{(k-2)},j_{12}^{(k-2)},\cdots,j_{QP}^{(k-2)} \right]\nonumber\\
&\leq\sum_{j_{QP}^{(k-1)}=1}^n\cdots\sum_{j_{11}^{(k-1)}=1}^n\sum_{q=1}^Q\sum_{p=1}^P \frac{1}{n^{QP}}\left\|\triangledown_{\omega_{qp}}f_{j_{qp}^{(k-1)}}^{\pi_q^{-1}(p)}\left( x_{j_{qp}^{(k-1)}}^{\pi_q^{-1}(p),q,p}\bar{\omega}_{qp}^{t,k-1} \right)-\triangledown_{\omega_{qp}}f_{j_{qp}^{(k-1)}}^{\pi_q^{-1}(p)}\left( x_{j_{qp}^{(k-1)}}^{\pi_q^{-1}(p),q,p}\omega^t_{qp} \right)  \right\|^2\nonumber\\
&=\sum_{q=1}^Q\sum_{p=1}^P \left( \frac{1}{n}\sum_{j_{q,\pi_q(p)}^{(k-1)}=1}^n\left\|\triangledown_{\omega_{qp}}f_{j_{qp}^{(k-1)}}^{\pi_q^{-1}(p)}\left( x_{j_{qp}^{(k-1)}}^{\pi_q^{-1}(p),q,p}\bar{\omega}_{qp}^{t,k-1} \right)-\triangledown_{\omega_{qp}}f_{j_{qp}^{(k-1)}}^{\pi_q^{-1}(p)}\left( x_{j_{qp}^{(k-1)}}^{\pi_q^{-1}(p),q,p}\omega^t_{qp} \right)  \right\|^2 \right)\nonumber\\
&\leq \sum_{q=1}^Q\sum_{p=1}^P\left( \frac{1}{n}\cdot nL^2\left\|\bar{\omega}_{qp}^{t,k-1}-\bar{\omega}_{qp}^{t,0} \right\|^2 \right)=\sum_{q=1}^Q\sum_{p=1}^P\left(  L^2\left\|\bar{\omega}_{qp}^{t,k-1}-\bar{\omega}_{qp}^{t,0} \right\|^2 \right)\nonumber\\
& = \sum_{q=1}^Q\sum_{p=1}^PL^2\left\|\gamma_{t+1}\left[ (k-1)\mu^t_{qp}+v_{qp}^{t,1}+\cdots+v_{qp}^{t,k-1} \right] \right\|^2.
\label{eq:cl3-2}
\end{align}
Applying the definition of $\mu^t$ yields
\begin{align}
\E\left[\left\|\mu^t\right\|^2|\mathcal{F}^t  \right]
&= \E\left[\left.\left\|\frac{1}{\mathcal{d}^t}\sum_{j\in\mathcal{D}^t}\bar{\triangledown}_{\omega_{\mathcal{C}^t}}f_j(x_j\omega^t)\right\|^2\right|\mathcal{F}^t\right]\leq \frac{\mathcal{c}^tM_1^2}{d}=\hat{\mathcal{O}}(1).
\label{eq:cl3-02} 
\end{align}
The second inequality holds due to  (\ref{eq:cl2-1}) in Claim \ref{cl2}. By using the law of iterated expectation, (\ref{eq:cl3-1}), (\ref{eq:cl3-2}) and (\ref{eq:cl3-02}) we get
\begin{equation}
\begin{split}
\label{eq:cl3-3}
&\E\left[\left\|v^{t,k}\right\|^2|\mathcal{F}^t\right]=\E\left[\E\left[\left\|v^{t,k}\right\|^2|\mathcal{F}^t,,\mathcal{B}^t,\mathcal{C}^t,\mathcal{D}^t,\pi,j_{11}^{(1)}\cdots,j_{QP}^{(k-2)} \right]|\mathcal{F}^t\right]\\
&\leq \E\left[   \left.\sum_{q=1}^Q\sum_{p=1}^PL^2\left\|\gamma_{t+1}\left[ (k-1)\mu^t_{qp}+\sum_{i=1}^{k-1}v_{qp}^{t,i} \right]  \right\|^2  \right|\mathcal{F}^t\right] \\
&\leq kL^2\gamma_{t+1}^2\sum_{q=1}^Q\sum_{p=1}^P\left( \E\left[\left. \left\| (k-1) \mu^t_{qp} \right\|^2\right|\mathcal{F}^t\right]+ \sum_{i=1}^{k-1}\E\left[ \left\| v_{qp}^{t,i}\right\|^2|\mathcal{F}^t\right]\right)
\\
&\leq kL^2\gamma_{t+1}^2QP\left((k-1)^2\E\left[\left. \left\|  \mu^t \right\|^2\right|\mathcal{F}^t\right]+ \sum_{i=1}^{k-1}\E\left[ \left\| v^{t,i}\right\|^2|\mathcal{F}^t\right]\right)
\\
&=kL^2\gamma_{t+1}^2QP\left[(k-1)^2\hat{\mathcal{O}}(1)+(k-1)\hat{\mathcal{O}}(\gamma_{t+1}^2)\right]=\hat{\mathcal{O}}(\gamma_{t+1}^2).
\end{split}
\end{equation}
This completes the proof of the claim.
\end{proof}
By using the conditional Jensen's inequality and (\ref{eq:cl3-0}) we get
\begin{align}
\expect[\big]{\left\|v^{t,i}\right\||\mathcal{F}^t}=\expect[\big]{\sqrt{\left\|v^{t,i}\right\|^2}|\mathcal{F}^t}\leq\sqrt{\expect[\big]{\left\|v^{t,i}\right\|^2|\mathcal{F}^t}}=\hat{\mathcal{O}}(\gamma_{t+1}).
\label{eq:pro1-5}
\end{align}

By summing up all increments in iteration $t$, we obtain
\begin{align*}
\omega^{t+1}&=\omega^t-\gamma_{t+1} \left[ B\mu^t+v^{t,1}+v^{t,2}+\cdots+v^{t,B} \right].
\end{align*}
Then, the expected value of the difference $\omega^{t+1}-\omega^t$ given $\mathcal{F}^t$ is
\begin{equation}
\begin{split}
\E\left[\omega^{t+1}-\omega^t|\mathcal{F}^t\right]&=-\gamma_{t+1}\E\left[B\mu^t+v^{t,1}+\cdots+v^{t,B}|\mathcal{F}^t\right]\\
&=-\gamma_{t+1}B\frac{\mathcal{c}^t}{d}\triangledown F(\omega^t) -\gamma_{t+1}\sum_{i=1}^{B}\expect[\big]{v^{t,i}|\mathcal{F}^t},
\label{eq:pro1-6}
\end{split}
\end{equation}
by using (\ref{eq:cl2-0}) in Claim \ref{cl2}. Moreover, the expected value of the squared norm $\left\| \omega^{t+1}-\omega^t \right\|^2$ given $\mathcal{F}^t$ is
\begin{equation}
\begin{split}
\label{eq:pro1-7}
&\E\left[ \left\|\omega^{t+1}-\omega^t\right\|^2|\mathcal{F}^t\right]=\E\left[ \left\|\gamma_{t+1}\left[ B\mu^t+v^{t,1}+v^{t,2}+\cdots+v^{t,B} \right] \right\|^2|\mathcal{F}^t\right]\\
&\quad\leq \gamma_{t+1}^2(B+1) \left\{ B^2\E\left[ \left\| \mu^t\right\|^2|\mathcal{F}^t\right]+\sum_{i=1}^{B}\E\left[ \left\| v^{t,i}\right\|^2|\mathcal{F}^t\right]\right\}\\
&\quad=\hat{\mathcal{O}}(\gamma_{t+1}^2)\left\{B^2\cdot\hat{\mathcal{O}}(1) +B\cdot\hat{\mathcal{O}}(\gamma_{t+1}^2)  \right\}=\hat{\mathcal{O}}(\gamma_{t+1}^2),
\end{split}
\end{equation}
due to (\ref{eq:cl2-1}), (\ref{eq:cl3-0}) and (\ref{eq:cl3-02}).
From
\begin{align*}
-\triangledown F(\omega^t)\cdot v^{t,i}\leq \left\|\triangledown F(\omega^t)\right\| \left\|v^{t,i}\right\|,
\end{align*}
for every $\triangledown F(\omega^t)$ and $v^{t,i}$, by using (\ref{eq:lem2-0}) and (\ref{eq:pro1-5}) we obtain
\begin{align}
\label{eq:pro1-8}
-\gamma_{t+1}\triangledown F(\omega^t)\cdot \E\left[ v^{t,i}|\mathcal{F}^t\right]&=\gamma_{t+1}\E\left[-\triangledown F(\omega^t)\cdot v^{t,i}|\mathcal{F}^t\right]\leq \gamma_{t+1}\E\left[\left\|\triangledown F(\omega^t)\right\| \cdot \left\|v^{t,i}\right\||\mathcal{F}^t\right]\nonumber\\
&=\gamma_{t+1}\left\|\triangledown F(\omega^t)\right\|\cdot \E\left[ \left\|v^{t,i}\right\||\mathcal{F}^t\right]=\hat{\mathcal{O}}(\gamma_{t+1}^2),
\end{align}
since $\E\left[XY|\mathcal{H}\right]=X\E\left[Y|\mathcal{H}\right]$ if $X$ is $\mathcal{H}-$measurable.

For convex $F$ we have
\begin{align*}
F(\omega^{t+1})\leq F(\omega^t)+\triangledown F(\omega^t)^T \left(\omega^{t+1}-\omega^t\right)+\frac{L}{2}\left\| \omega^{t+1}-\omega^t\right\|^2,
\end{align*}
which in turn yields
\begin{equation}
\begin{split}
\label{eq:pro1-10}
&\E\left[F(\omega^{t+1})|\mathcal{F}^t\right]\leq F(\omega^t)+\triangledown F(\omega^t)^T\E\left[\left(\omega^{t+1}-\omega^t\right)|\mathcal{F}^t\right]+\frac{L}{2}\E \left[\left\| \omega^{t+1}-\omega^t\right\|^2|\mathcal{F}^t\right]\\
&=F(\omega^{t})+\triangledown F(\omega^t)^T\left\{ -\gamma_{t+1}B \frac{\mathcal{c}^t}{d}\triangledown F(\omega^t) -\gamma_{t+1}\sum_{i=1}^{B}\expect[\big]{v^{t,i}|\mathcal{F}^t} \right\} +\frac{L}{2}\E\left[\left\| \omega^{t+1}-\omega^t\right\|^2|\mathcal{F}^t\right]\\
&=F(\omega^t)-\gamma_{t+1} \frac{\mathcal{c}^tB}{d} \left\| \triangledown F(\omega^t)\right\|^2-\gamma_{t+1} \triangledown F(\omega^t)^T \sum_{i=1}^{B} \E\left[v^{t,i}|\mathcal{F}^t\right]+\frac{L}{2}\E\left[\left\| \omega^{t+1}-\omega^t\right\|^2|\mathcal{F}^t\right]\\
&\leq F(\omega^t)-\gamma_{t+1} \frac{\mathcal{c}^tB}{d} \left\| \triangledown F(\omega^t)\right\|^2+\hat{\mathcal{O}}(\gamma_{t+1}^2)\leq  F(\omega^t)-\gamma_{t+1} \frac{\mathcal{c}^tB}{d} \left\| \triangledown F(\omega^t)\right\|^2+C_1\gamma_{t+1}^2,
\end{split}
\end{equation}
where $C_1$ is a positive constant and we use (\ref{eq:pro1-6}), (\ref{eq:pro1-7}) and (\ref{eq:pro1-8}). Subtracting the optimal objective function $F(\omega^{*})$ to the both sides of (\ref{eq:pro1-10}) and using the fact that $\mathcal{c}^t\geq 1$ imply that
\begin{equation}
\label{eq:pro1-11}
\E\left[F(\omega^{t+1})-F(\omega^{*})|\mathcal{F}^t\right]\leq F(\omega^t)-F(\omega^{*})-
\gamma_{t+1} \frac{B}{d} \left\| \triangledown F(\omega^t)\right\|^2+C_1\gamma_{t+1}^2.
\end{equation}

We proceed to find a lower bound of $\left\|\triangledown F(\omega^t)\right\|^2$ in terms of $F(\omega^t)-F(\omega^{*})$. Assumption \hyperref[as:2]{2} implies that, for any $y,z\in \mathbb{R}^m$
\begin{align}
\label{eq:pro1-12}
F(y)\geq F(z)+\triangledown F(z)^T(y-z)+\frac{\xi}{2}\left\|y-z\right\|^2.
\end{align}
For fixed $z$, the right hand side of (\ref{eq:pro1-12}) is a quadratic function of $y$ and it gets its minimum at $\hat{y}=z-\frac{1}{\xi}\triangledown F(z)$. Therefore
\begin{align}
\label{eq:pro1-13}
F(y)\geq F(z)+\triangledown F(z)^T(\hat{y}-z)+\frac{\xi}{2}\left\|\hat{y}-z\right\|^2=F(z)-\frac{1}{2\xi}\left\|\triangledown F(z)\right\|^2,
\end{align}
for any $y,z\in \mathbb{R}^{d}$. Setting $y=\omega^{*}$ and $z=\omega^t$ in (\ref{eq:pro1-13}) gives
\begin{align}
\label{eq:pro1-14}
\left\|\triangledown F(\omega^t)\right\|^2\geq 2\xi\left(F(\omega^t)-F(\omega^{*})\right).
\end{align}
Substituting the lower bound in  (\ref{eq:pro1-14}) by the norm of gradient square $\left\|\triangledown F(\omega^t)\right\|^2$ in (\ref{eq:pro1-11}) yields the proposition in (\ref{eq:pro1-0}).
\end{proof}
Proposition \hyperref[pro1]{1} represents a supermartingale relationship for the sequence of the loss function errors $F(\omega^t)-F(\omega^*)$. In the following theorem, by employing the supermartingale convergence argument, we show that if the sequence of learning rates satisfy the standard stochastic approximation diminishing learning rate rule (non-summable and squared summable), the sequence of loss function errors $F(\omega^t)-F(\omega^*)$ converges to 0 almost
surely. Combining with strong convexity of $F(\omega)$ in Assumption \hyperref[as:2]{2}, this result implies that $\left\|\omega^t-\omega^* \right\|$ converges to 0 almost surely.
\subsubsection*{Proof of Theorem \ref{thm:5}}
\label{proof:thm5}
\begin{proof}
We use the relationship in (\ref{eq:pro1-0}) to build a supermartingale sequence. First, let us define
\begin{align}
\label{eq:thm1-1}
&\alpha^t:=F(\omega^t)-F(\omega^{*})+\sum_{u=t}^{\infty}C_1\gamma_{u+1}^2,\\
\label{eq:thm1-2}
&\beta^t:=\frac{2\xi B}{d}\gamma_{t+1}\left(F(\omega^t)-F(\omega^{*})\right).
\end{align}
Note that $\alpha^t$ is well-defined since $\sum_{u=t}^{\infty}\gamma_{u+1}^2<\sum_{u=1}^{\infty}\gamma_u^2<\infty$ . The definition of $\alpha^t$ and $\beta^t$ in (\ref{eq:thm1-1}) and (\ref{eq:thm1-2}), and the inequality in (\ref{eq:pro1-0}) imply the expected value of $\alpha^{t+1}$ given $\mathcal{F}^t$ is
\begin{align}
\label{eq:thm1-3}
\E\left[\alpha^{t+1}|\mathcal{F}^t\right]\leq \alpha^t-\beta^t.
\end{align}  
Since $\alpha^t $ and $\beta^t$ are nonnegative and due to (\ref{eq:thm1-3}), they satisfy the conditions of the supermartingale convergence theorem. Thus, we conclude that
\begin{align}
\label{eq:thm1-4}
&\mathrm{(i)}\quad \alpha^t \mathrm{\,\,converges\,\, to\,\, a\,\, limit\,\, a.s.,\, and }\\
\label{eq:thm1-5}
&\mathrm{(ii)}\quad \sum_{t=1}^{\infty}\beta^t<\infty. \quad \mathrm{a.s.}
\end{align}

Property (\ref{eq:thm1-5}) yields
\begin{align*}
\sum_{t=0}^{\infty}\frac{2\mathcal{c}^t\xi B}{d}\gamma_{t+1}\left(F(\omega^t)-F(\omega^{*})\right)<\infty.  \quad \mathrm{a.s.}
\end{align*}
Since $\sum_{t=0}^{\infty}\gamma_{t+1}=\infty$, there exists a subsequence of $F(\omega^t)-F(\omega^{*})$ which converges to 0, i.e.
\begin{align}
\label{eq:thm1-6}
\liminf_{t\to\infty} F(\omega^t)-F(\omega^*)=0.\quad \mathrm{a.s.}
\end{align}
Since $\sum_{u=t}^{\infty}C_1\gamma_{u+1}^2$ is deterministic and due to (\ref{eq:thm1-4}), $F(\omega^t)-F(\omega^*)$ converges to a limit almost surely. In association with (\ref{eq:thm1-6}) we conclude
\begin{align}
\label{eq:thm1-7}
\lim\limits_{t\to\infty}F(\omega^t)-F(\omega^*)=0.\quad \mathrm{a.s.}
\end{align}

We proceed to show the almost convergence of $\left\|\omega^t-\omega^*\right\|^2$.
Using (\ref{eq:pro1-12}) again and setting $y=\omega^t$ and $z=\omega^*$ implies
\begin{align}
\label{eq:thm1-8}
F(\omega^t)\geq F(\omega^*)+\triangledown F(\omega^*)^T(\omega^t-\omega^*)+\frac{\xi}{2}\left\|\omega^t-\omega^*\right\|^2.
\end{align}
Since the gradient of the optimal solution is $0$, i.e.$\triangledown F(\omega^*)=0$, (\ref{eq:thm1-8}) can be rearranged as
\begin{align*}
F(\omega^t)-F(\omega^*) \geq \frac{\xi}{2}\left\|\omega^t-\omega^*\right\|^2.
\end{align*}
Observing that the upper bound of $\left\|\omega^t-\omega^*\right\|^2$ converges to 0 almost surely by (\ref{eq:thm1-7}), we conclude that the sequence $\left\|\omega^t-\omega^*\right\|^2$ converges to zero almost surely. Hence, the claim in (\ref{eq:thm5-0}) is valid.
\end{proof}
\subsubsection*{Proof of Theorem \ref{thm:6}}
\label{proof:thm6}
\begin{proof}
Replacing $\gamma_{t+1}$ by $\frac{1}{t+1}$ and computing the expected value of (\ref{eq:pro1-0}) given $\mathcal{F}^0$ by using the law of iterated expectation we obtain
\begin{align}
\label{eq:thm2-3}
\expect[\big]{F(\omega^{t+1})-F(\omega^*)}\leq \left( 1-\frac{2 \xi B}{(t+1)d}\right)\expect[\big]{F(\omega^{t})-F(\omega^*)} +\frac{C_1}{(t+1)^2}.
\end{align}
Let us define
\begin{align*}
&a_t:=\expect[\big]{F(\omega^{t+1})-F(\omega^*)}\\
&\lambda:=\frac{2\xi B}{d}\\
&\beta:=C_1.
\end{align*}
Note that $\beta$ is positive. Based on the relationship in (\ref{eq:thm2-3}), we obtain
\begin{align}
\label{eq:thm2-4}
a_{t+1}\leq \left( 1-\frac{\lambda}{t+1}\right)a_t+\frac{\beta}{(t+1)^2}.
\end{align}
for all times $t\geq 0$. Now, we proceed to show
\begin{align}
\label{eq:thm2-5}
a_t\leq \frac{Q}{t+1},
\end{align}
where $Q=\max\left\{a_0,2a_1,\cdots,([\lambda]+1)a_{[\lambda]},\left([\lambda]+2\right) a_{[\lambda]+1}, \frac{\beta}{\lambda-1} \right\}$. The definition of $Q$ implies that the relationship in (\ref{eq:thm2-5}) holds for $t=1,2,\cdots,[\lambda]$. The remaining cases are shown by induction.
 
When $t=[\lambda]+1$, the definition of $Q$ implies
\begin{align*}
a_{[\lambda]+1}\leq \frac{Q}{[\lambda]+2}.
\end{align*} 
 
When $t=k-1$, we assume that the relationship in (\ref{eq:thm2-5}) holds. Considering the case when $t=k$ and using (\ref{eq:thm2-4}) implies
\begin{equation*}
\begin{split}
a_{k+1}\leq \left(1-\frac{\lambda}{k+1}\right)a_k+\frac{\beta}{(k+1)^2}\leq \left(1-\frac{\lambda}{k+1}\right)\frac{Q}{k+1}+\frac{\beta}{(k+1)^2}.
\end{split}
\end{equation*}
In order to satisfy (\ref{eq:thm2-5}), we require
\begin{equation*}
\begin{split}
\left(1-\frac{\lambda}{k+1}\right)\frac{Q}{k+1}+\frac{\beta}{(k+1)^2}\leq \frac{Q}{k+2}.
\end{split}
\end{equation*}
Elementary algebraic manipulation shows that this is equivalent to
\begin{equation*}
\begin{split}
 \beta (k+2)\leq Q\left[ \lambda (k+2)-(k+1) \right]
\end{split}
\end{equation*}
and in turn
\begin{align*}
\frac{\beta(k+2)}{\lambda (k+2)-(k+1)}=\frac{\beta}{\lambda-\frac{k+1}{k+2}}\leq Q,
\end{align*}
where we require $\lambda \geq 1$. The definition of $Q$, i.e. $Q\geq \frac{\beta}{\lambda-1}$ and the relationship that $\lambda-\frac{k+1}{k+2}>\lambda-1$ imply that
\begin{align*}
\frac{\beta}{\lambda-\frac{k+1}{k+2}}<\frac{\beta}{\lambda-1} \leq Q,
\end{align*}
and thus (\ref{eq:thm2-5}) holds for $t=k$. Thus, if $B\geq\frac{d}{2 \xi}$, for any time $t\geq 0$, the result in (\ref{eq:thm6-0}) holds where the constant $Q$ is defined based on (\ref{eq:thm6-1}).
\end{proof}
\noindent
{\bf Corollary 1} {\it If Assumptions \hyperref[as:1]{1}, \hyperref[as:2]{2} and \hyperref[as:3]{3} hold true and the sequence of learning rates are non-summable $\sum_{t=1}^{\infty}\gamma_{t}=\infty$ and square summable $\sum_{t=1}^{\infty}\gamma_{t}^2<\infty$, then the sequence of parameters $\omega^t$ generated by $\mathrm{RADiSA}$ converges almost surely to the optimal solution $\omega^*$, that is
\begin{align}
\lim\limits_{t\to\infty} \left\|\omega^t-\omega^*\right\|^2=0\quad\mathrm{a.s.}
\label{eq:cor1-0}
\end{align}
Moreover, if learning rate is defined as $\gamma_t:=\frac{1}{t}$ for $t=1,2,\cdots$ and the batch size is chosen such that $B\geq \frac{1}{2\xi}$, then the expected loss function errors $\expect[\big]{F(\omega^t)-F(\omega^*)}$ of $\mathrm{RADiSA}$ converges to 0 at least with a sublinear convergence rate of order $\mathcal{O}(1/t)$, i.e.
\begin{align}
\expect[\big]{F(\omega^t)-F(\omega^*)}\leq \frac{Q}{1+t},
\label{eq:cor1-1}
\end{align} 
where constant $Q$ is defined in (\ref{eq:thm6-1}) with some positive constant $C_1'$ taking the place of $C_1$ and $\mathcal{c}^t=d$.
}
\begin{proof}
RADiSA is a special case of SODDA with $\mathcal{c}^t=d$, $\mathcal{d}^t=N$.
\end{proof}
\subsection*{D\tab Constant Learning Rate without Feature Sampling}
\begin{pro}
\label{pro2}
If Assumptions \hyperref[as:1]{1}-\hyperref[as:4]{4} hold true, and the learning rate is constant $\gamma_{t}=\gamma $ such that $BL\gamma QP\leq 1$ and $\gamma\leq 1$, and the sequences $(\mathcal{c}^t,\mathcal{d}^t)_{t=0}^{\infty}$ satisfy the same conditions as in Theorem \ref{thm:5}, then the loss function error sequence $F(\omega^t)-F(\omega^*)$ generated by SODDA satisfies
\begin{align}
\E\left[\left.F(\omega^{t+1})-F(\omega^*)\right|\mathcal{F}^t   \right]\leq \left( 1-\frac{2\xi B}{d}\gamma \right)\left[ F(\omega^t)-F(\omega^*)\right]+C_2B^4\gamma^2,
\label{eq:pro2-0}
\end{align}
where $C_2$ is a positive constant.
\end{pro}
\begin{proof}
For $i=1,2,\cdots,B$, the expected value of $\left\|v^{t,i}\right\|$ given all the preceding information is
\begin{align}
&\E\left[\left\|v^{t,1}\right\|\vert\mathcal{F}^t,\mathcal{B}^t,\mathcal{C}^t,\mathcal{D}^t,\pi\right]=0\label{eq:pro2-1}\\
&\E\left[\left\|v^{t,i}\right\|\vert\mathcal{F}^t,\mathcal{B}^t,\mathcal{C}^t,\mathcal{D}^t,\pi,j_{11}^{(1)},j_{12}^{(1)},\cdots,j_{QP}^{(1)},j_{11}^{(2)},j_{12}^{(2)},\cdots,j_{QP}^{(2)},\cdots,j_{11}^{(i-2)},j_{12}^{(i-2)},\cdots,j_{QP}^{(i-2)}\right]\nonumber\\
&\quad=\sum_{j_{QP}^{(i-1)}=1}^n\cdots\sum_{j_{11}^{(i-1)}=1}^n \frac{1}{n^{QP}} 
\left \|
\begin{pmatrix}
\triangledown_{\omega_{11}}f_{j_{11}^{(i-1)}}^{\pi_1^{-1}(1)}\left( x_{j_{11}^{(i-1)}}^{\pi_1^{-1}(1),1,1}\bar{\omega}_{11}^{t,i-1} \right)-\triangledown_{\omega_{11}}f_{j_{11}^{(i-1)}}^{\pi_1^{-1}(1)}\left( x_{j_{11}^{(i-1)}}^{\pi_1^{-1}(1),1,1}\omega^t_{11} \right)\\
\vdots\\
\triangledown_{\omega_{1P}}f_{j_{1P}^{(i-1)}}^{\pi_1^{-1}(P)}\left( x_{j_{1P}^{(i-1)}}^{\pi_1^{-1}(P),1,P}\bar{\omega}_{1P}^{t,i-1} \right)-\triangledown_{\omega_{1P}}f_{j_{1P}^{(i-1)}}^{\pi_1^{-1}(P)}\left( x_{j_{1P}^{(i-1)}}^{\pi_1^{-1}(P),1,P}\omega^t_{1P} \right)\\
\vdots\\
\triangledown_{\omega_{QP}}f_{j_{QP}^{(i-1)}}^{\pi_Q^{-1}(P)}\left( x_{j_{QP}^{(i-1)}}^{\pi_Q^{-1}(P),Q,P}\bar{\omega}_{QP}^{t,i-1} \right)-\triangledown_{\omega_{QP}}f_{j_{QP}^{(i-1)}}^{\pi_Q^{-1}(P)}\left( x_{j_{QP}^{(i-1)}}^{\pi_Q^{-1}(P),Q,P}\omega^t_{QP} \right)
\end{pmatrix}
\right \|,\label{eq:pro2-2}
\end{align}  
for $i=2,3,\cdots,B$.

We prove a bound of the expected value of $\sum_{i=1}^{B} \left\|v^{t,i}\right\|$ and $\sum_{i=1}^{B} \left\|v^{t,i}\right\|^2$ given $\mathcal{F}^t$ by induction.
\begin{claim}
\label{cl:4}
For any $t$, if $BL\gamma QP \leq 1$ and $\gamma\leq 1$, we have
\begin{align}
&\sum_{i=1}^{B}\E\left[\left.\left\| v^{t,i} \right\|\right|\mathcal{F}^t   \right]=\hat{\mathcal{O}}(B^3\gamma)
\label{eq:cl4-0}\\
&\sum_{i=1}^{B}\E\left[\left.\left\| v^{t,i} \right\|^2\right|\mathcal{F}^t   \right]=\hat{\mathcal{O}}(B^4\gamma^2)+\hat{\mathcal{O}}(B^7\gamma^4).
\label{eq:cl4-1}
\end{align}
\end{claim}
\begin{proof}
By using (\ref{eq:pro2-1}) we have
\begin{align}
\E\left[\left.\left\|v^{t,1}\right\|\right|\mathcal{F}^t\right]=\E\left[\left.\E\left[\left\|v^{t,1}\right\|\vert\mathcal{F}^t,\mathcal{B}^t,\mathcal{C}^t,\mathcal{D}^t,\pi\right]\right|\mathcal{F}^t\right]=0.
\label{eq:cl4-2}
\end{align}
For $i=2,3,\cdots,B$, using (\ref{eq:pro2-2}) gives
\begin{align}
&\E\left[\left\|v^{t,i}\right\||\mathcal{F}^t,\mathcal{B}^t,\mathcal{C}^t,\mathcal{D}^t,\pi,j_{11}^{(1)},j_{12}^{(1)},\cdots,j_{QP}^{(1)},j_{11}^{(2)},j_{12}^{(2)},\cdots,j_{QP}^{(2)},\cdots,j_{11}^{(i-2)},j_{12}^{(i-2)},\cdots,j_{QP}^{(i-2)} \right]\nonumber\\
&\leq\sum_{j_{QP}^{(i-1)}=1}^n\cdots\sum_{j_{11}^{(i-1)}=1}^n\sum_{q=1}^Q\sum_{p=1}^P \frac{1}{n^{QP}}\left\|\triangledown_{\omega_{qp}}f_{j_{qp}^{(k-1)}}^{\pi_q^{-1}(p)}\left( x_{j_{qp}^{(k-1)}}^{\pi_q^{-1}(p),q,p}\bar{\omega}_{qp}^{t,k-1} \right)\right.\nonumber\\
&\left.\quad\quad\quad\quad\quad\quad\quad\quad\quad-\triangledown_{\omega_{qp}}f_{j_{qp}^{(k-1)}}^{\pi_q^{-1}(p)}\left( x_{j_{qp}^{(k-1)}}^{\pi_q^{-1}(p),q,p}\omega^t_{qp} \right)  \right\|\nonumber\\
&=\sum_{q=1}^Q\sum_{p=1}^P \left( \frac{1}{n}\sum_{j_{q,\pi_q(p)}^{(i-1)}=1}^n\left\|\triangledown_{\omega_{qp}}f_{j_{qp}^{(k-1)}}^{\pi_q^{-1}(p)}\left( x_{j_{qp}^{(k-1)}}^{\pi_q^{-1}(p),q,p}\bar{\omega}_{qp}^{t,k-1} \right)-\triangledown_{\omega_{qp}}f_{j_{qp}^{(k-1)}}^{\pi_q^{-1}(p)}\left( x_{j_{qp}^{(k-1)}}^{\pi_q^{-1}(p),q,p}\omega^t_{qp} \right)  \right\| \right)\nonumber\\
&\leq \sum_{q=1}^Q\sum_{p=1}^P\left( \frac{1}{n}\cdot nL\left\|\bar{\omega}_{qp}^{t,i-1}-\bar{\omega}_{qp}^{t,0} \right\| \right)=\sum_{q=1}^Q\sum_{p=1}^P\left(  L\left\|\bar{\omega}_{qp}^{t,i-1}-\bar{\omega}_{qp}^{t,0} \right\| \right)\nonumber\\
& = \sum_{q=1}^Q\sum_{p=1}^PL\left\|\gamma \left[ (i-1)\mu^t_{qp}+v_{qp}^{t,1}+\cdots+v_{qp}^{t,i-1} \right] \right\|. 
\label{eq:cl4-3}
\end{align}
By using the law of iterated expectation and (\ref{eq:cl4-3}) we get
\begin{equation}
\begin{split}
\label{eq:cl4-4}
&\E\left[\left\|v^{t,i}\right\||\mathcal{F}^t\right]=\E\left[\E\left[\left\|v^{t,i}\right\||\mathcal{F}^t,,\mathcal{B}^t,\mathcal{C}^t,\mathcal{D}^t,\pi,j_{11}^{(1)}\cdots,j_{QP}^{(i-2)} \right]|\mathcal{F}^t\right]\\
&\leq \E\left[   \left.\sum_{q=1}^Q\sum_{p=1}^PL\left\|\gamma \left[ (i-1)\mu^t_{qp}+\sum_{j=1}^{i-1}v_{qp}^{t,j} \right]  \right\|  \right|\mathcal{F}^t\right] \\
&\leq L\gamma \sum_{q=1}^Q\sum_{p=1}^P\left( \E\left[\left. \left\| (i-1) \mu^t_{qp} \right\|\right|\mathcal{F}^t\right]+ \sum_{j=1}^{i-1}\E\left[ \left\| v_{qp}^{t,j}\right\||\mathcal{F}^t\right]\right)
\\
&\leq L\gamma QP\left((i-1)\E\left[\left. \left\|  \mu^t \right\|\right|\mathcal{F}^t\right]+ \sum_{j=1}^{i-1}\E\left[ \left\| v^{t,j}\right\||\mathcal{F}^t\right]\right).
\end{split}
\end{equation}
Let us define
\begin{align*}
&a_i:=\E\left[\left.\left\|v^{t,i}\right\|\right|\mathcal{F}^t\right]\\
&\nu=L\gamma QP\\
&D_1:=\E\left[\left.\left\|\mu^t\right\|\right|\mathcal{F}^t\right].
\end{align*}
Then the recursive formula becomes
\begin{align}
&a_1=0\nonumber\\
&a_i\leq \nu \left((i-1)D_1+\sum_{j=1}^{i-1}a_j  \right),
\label{eq:cl4-5}
\end{align}
for $i=2,3,\cdots,B$. Let us define $\bar{a}_i$ as
\begin{align}
\label{eq:cl4-6}
\bar{a}_i=
\left\{\begin{matrix}
0,&i=1\\ 
\nu\left((i-1)D_1+\sum_{j=1}^{i-1}\bar{a}_j  \right),&i\neq 1.
\end{matrix}\right.
\end{align}

Now, let us show that $a_i\leq\bar{a}_i$ for $i=1,2,\cdots,B$ by induction. When $i=1$, applying the definitions of $a_i$ and $\bar{a}_i$ yields $a_1=\bar{a}_1$. Assume that when $i=1,2,\cdots,k-1$, $a_i\leq \bar{a}_i$ holds true. Now, consider $\bar{a}_k$. Since $\nu,D_1,a_i\geq 0$ for any $i$, by using (\ref{eq:cl4-5}) and (\ref{eq:cl4-6}) we have
\begin{align*}
\bar{a}_k=\nu \left((k-1)D_1+\sum_{j=1}^{k-1}\bar{a}_j  \right)\geq \nu \left((k-1)D_1+\sum_{j=1}^{k-1}a_j  \right)\geq a_k.
\end{align*}

Therefore, $S_l\leq \bar{S}_l$, where we define $S_l=\sum_{i=1}^l a_i$ and $\bar{S}_l=\sum_{i=1}^l \bar{a}_i$. Summing up all the recursive equations for $\bar{a}_i$ in (\ref{eq:cl4-6}) up to $l$ implies
\begin{align}
\label{eq:cl4-7}
\bar{S}_l=\frac{l(l-1)}{2}\nu D_1+\nu \left(\bar{S}_1+\cdots+\bar{S}_{l-1}  \right)
\end{align} 
and
\begin{align*}
\bar{S}_{l+1}-\bar{S}_l=l\nu D_1+\nu \bar{S}_l,
\end{align*}
which in turn yields
\begin{align*}
\frac{\bar{S}_{l+1}}{\left(1+\nu \right)^{l+1}}-\frac{\bar{S}_l}{\left(1+\nu\right)^l}=\frac{l\nu D_1}{(1+\nu)^{l+1}}.
\end{align*}
By summing up all increments for $l=1,\cdots,B-1$, we obtain
\begin{align*}
\frac{\bar{S}_{B}}{\left(1+\nu\right)^{B}}=\frac{\bar{S}_{B}}{\left(1+\nu\right)^{B}}-\frac{\bar{S}_1}{\left(1+\nu\right)}=\frac{\nu D_1}{(1+\nu)}\sum_{l=1}^{B}\frac{l}{(1+\nu)^{l}}=\frac{D_1\left[(1+\nu)^{B}-1-\nu B\right]}{\nu (1+\nu)^{B}},
\end{align*}
which in turn yields
\begin{align*}
\sum_{i=1}^{B}\E\left[\left.\left\|v^{t,i}\right\|\right|\mathcal{F}^t\right]=\sum_{i=1}^{B}a_i=S_{B}\leq\bar{S}_{B}=\frac{D_1\left[(1+\nu)^{B}-1-\nu B\right]}{\nu}.
\end{align*}
Since $\binom{B}{l}=\frac{B!}{l!(B-l)!}\leq B^l$, we obtain
\begin{align*}
(1+\nu)^{B}=\sum_{l=0}^{B}\binom{B}{l}\nu^l\leq \sum_{l=0}^{B} (B\nu)^l,
\end{align*}
which in turn yields
\begin{align}
\label{eq:cl4-9}
\sum_{i=1}^{B}\E\left[\left.\left\|v^{t,i}\right\|\right|\mathcal{F}^t\right]\leq \frac{D_1\sum_{l=2}^{B}(B\nu)^l}{\nu}.
\end{align}
Substituting the definitions of $\nu$ and $C$ back into (\ref{eq:cl4-9}) gives
\begin{align}
\label{eq:cl4-10}
\sum_{i=1}^{B}\E\left[\left.\left\|v^{t,i}\right\|\right|\mathcal{F}^t\right]\leq \frac{\E\left[\left.\left\|\mu^t\right\|\right|\mathcal{F}^t\right]\sum_{l=2}^{B} \left(BL\gamma QP\right)^l}{L\gamma QP}.
\end{align}
By using the conditional Jensen's inequality and (\ref{eq:cl3-02}) we get
\begin{align}
\label{eq:cl4-11}
\E\left[\left.\left\|\mu^t\right\| \right|\mathcal{F}^t \right]=\E\left[\left.\sqrt{\left\|\mu^t\right\|^2 }\right|\mathcal{F}^t \right]\leq \sqrt{\E\left[\left.\left\|\mu^t\right\|^2 \right|\mathcal{F}^t \right]}=\sqrt{\hat{\mathcal{O}}(1)}=\hat{\mathcal{O}}(1).
\end{align}
The last equality holds due to the property that $\gamma\leq 1$. Moreover, since $BL\gamma QP\leq1$, we have
\begin{align}
\label{eq:cl4-12}
\sum_{l=2}^{B}\left(BL\gamma QP\right)^l=B\cdot\hat{\mathcal{O}}(B^2\gamma^2)=\hat{\mathcal{O}}(B^3\gamma^2).
\end{align}
Substituting (\ref{eq:cl4-11}) and (\ref{eq:cl4-12}) in (\ref{eq:cl4-10}) implies the claim in (\ref{eq:cl4-0}).

Now, let us proceed to find an upper bound for $\sum_{i=1}^{B}\E\left[\left.\left\| v^{t,i} \right\|^2\right|\mathcal{F}^t\right]$. From (\ref{eq:cl3-3}), we have
\begin{align*}
&\E\left[\left.\left\| v^{t,1} \right\|^2\right|\mathcal{F}^t\right]=0\\
&\E\left[\left.\left\| v^{t,i} \right\|^2\right|\mathcal{F}^t\right]\leq iL^2\gamma^2QP\left((i-1)^2\E\left[\left.\left\|\mu^t\right\|^2\right|\mathcal{F}^t \right]+\sum_{j=1}^{i-1}\E\left[\left.\left\| v^{t,j} \right\|^2\right|\mathcal{F}^t\right]  \right).
\end{align*}
Let us define 
\begin{align*}
&b_i:=\E\left[\left.\left\| v^{t,i} \right\|^2\right|\mathcal{F}^t\right]\\
&\theta:=L^2\gamma^2 QP\\
&D_2:=\E\left[\left.\left\| \mu^t \right\|^2\right|\mathcal{F}^t\right].
\end{align*}
Then the recursive formula becomes
\begin{align}
&b_1=0\nonumber\\
\label{eq:cl4-13}
&b_i\leq i\theta\left((i-1)^2D_2+\sum_{j=1}^{i-1}b_j  \right),
\end{align}
for $i=2,3,\cdots,B$. Let us define $\bar{b}_i$ as
\begin{align}
\label{eq:cl4-14}
\bar{b}_i=
\left\{\begin{matrix}
0,&i=1\\ 
i\theta\left((i-1)^2D_2+\sum_{j=1}^{i-1}\bar{b}_j\right),&i\neq 1.
\end{matrix}\right.
\end{align}
As before we derive $b_i\leq\bar{b}_i$ for $i=1,2,\cdots,B$. Therefore, $\mathcal{S}_l\leq \bar{\mathcal{S}}_l$, where we define $\mathcal{S}_l=\sum_{i=1}^l b_i$ and $\bar{\mathcal{S}}_l=\sum_{i=1}^l\bar{b}_i$. Summing up all the recursive equations for $\bar{b}_i$ in (\ref{eq:cl4-14}) up to $l$ implies
\begin{align}
\label{eq:cl4-15}
\bar{\mathcal{S}}_l=\theta D_2 \sum_{i=1}^{l-1}(i+1)i^2+\theta \sum_{i=1}^{l-1} (i+1)\bar{\mathcal{S}}_i,
\end{align}
and
\begin{align*}
\bar{\mathcal{S}}_{l+1}-\bar{\mathcal{S}}_l=\theta D_2 (l+1)l^2+\theta (l+1)\bar{\mathcal{S}}_l,
\end{align*}
which in turn yields
\begin{align*}
\frac{\bar{\mathcal{S}}_{l+1}}{\Pi_{i=1}^{l+1}\left(1+i\theta\right)}-\frac{\bar{\mathcal{S}}_{l}}{\Pi_{i=1}^{l}\left(1+i\theta\right)}=\frac{\theta D_2 (l+1)l^2}{\Pi_{i=1}^{l+1}\left(1+i\theta\right)}.
\end{align*}
By summing up all increments for $l=1,2,\cdots,B-1$, we obtain
\begin{align*}
\frac{\bar{\mathcal{S}}_{B}}{\Pi_{i=1}^{B}\left(1+i\theta\right)}=\frac{\bar{\mathcal{S}}_{B}}{\Pi_{i=1}^{B}\left(1+i\theta\right)}-\frac{\bar{\mathcal{S}}_1}{\left(1+\theta\right)}=\theta D_2\left[\sum_{l=1}^{B-1}\frac{(l+1)l^2}{\Pi_{i=1}^{l+1}(1+i\theta)}\right],
\end{align*}
which in turn yields
\begin{align}
\label{eq:cl4-17}
\sum_{i=1}^{B}\E\left[\left.\left\|v^{t,i}\right\|^2\right|\mathcal{F}^t\right]&=\sum_{i=1}^{B}b_i=\mathcal{S}_l\leq\bar{\mathcal{S}}_{B}=\theta D_2\left[\sum_{l=1}^{B-1}\left((l+1)l^2\Pi_{i=l+2}^{B}(1+i\theta) \right)\right]\nonumber\\
&\leq \theta D_2B(B-1)^2\left[\sum_{l=1}^{B-1}\left(\Pi_{i=l+2}^{B}(1+i\theta) \right)\right],
\end{align}
where we denote $\Pi_{i=B}^{B+1}(1+i\theta)=1$.
Since
\begin{align*}
\sum_{l=1}^{B-1}\left(\Pi_{i=l+2}^{B}(1+i\theta) \right)\leq \sum_{l=1}^{B-1}\left(\Pi_{i=1}^{B}(1+i\theta) \right)\leq \sum_{l=1}^{B-1}\left(1+B\theta \right)^{B}=(B-1)\left(1+B\theta \right)^{B},
\end{align*}
it in turn yields
\begin{align}
\sum_{i=1}^{B}\E\left[\left.\left\|v^{t,i}\right\|^2\right|\mathcal{F}^t\right]&\leq \theta D_2B(B-1)^3(1+B\theta)^{B}.
\label{eq:cl4-18}
\end{align}
Substituting the definitions of $\theta$ and $D_2$ back into (\ref{eq:cl4-18}) gives
\begin{equation}
\begin{split}
\label{eq:cl4-19}
\sum_{i=1}^{B}\E\left[\left.\left\|v^{t,i}\right\|^2\right|\mathcal{F}^t\right]\leq L^2\gamma^2QP\E\left[\left.\left\|\mu^t\right\|^2 \right|\mathcal{F}^t \right]B^4(BL^2\gamma^2QP+1)^{B}.
\end{split}
\end{equation}
Since  $BL\gamma QP\leq 1$ and $QP\geq 1$, we conclude
\begin{align*}
B^2L^2\gamma^2 QP\leq (BL\gamma QP)^2\leq 1,
\end{align*}
which in turn yields
\begin{align}
(1+BL^2\gamma^2 QP)^{B}&=1+\sum_{i=1}^{B}\binom{B}{i}(BL^2\gamma^2 QP)^i\leq 1+\sum_{i=1}^{B}B^i(BL^2\gamma^2 QP)^i\nonumber\\
&=1+\sum_{i=1}^{B}(B^2L^2\gamma^2 QP)^i=1+\sum_{i=1}^{B}\hat{\mathcal{O}}(B^2L^2\gamma^2 QP)\nonumber\\
&=1+\hat{\mathcal{O}}(B^3L^2\gamma^2 QP).
\label{eq:cl4-20}
\end{align}
Combining (\ref{eq:cl3-02}) and (\ref{eq:cl4-20}) gives
\begin{align*}
\sum_{i=1}^{B}\E\left[\left.\left\|v^{t,i}\right\|^2\right|\mathcal{F}^t\right]=\hat{\mathcal{O}}(B^4\gamma^2)(1+\hat{\mathcal{O}}(B^3L^2\gamma^2 QP))=\hat{\mathcal{O}}(B^4\gamma^2)+\hat{\mathcal{O}}(B^7\gamma^4).
\end{align*}
This completes the proof of the claim.
\end{proof}
By using (\ref{eq:lem2-0}), (\ref{eq:pro1-6}), (\ref{eq:pro1-7}), (\ref{eq:pro1-8}) and Lipschitz continuity of $\triangledown F(\omega)$ we have
\begin{align*}
&\E\left[F(\omega^{t+1})|\mathcal{F}^t\right]\leq F(\omega^t)+\triangledown F(\omega^t)^T\E\left[\left(\omega^{t+1}-\omega^t\right)|\mathcal{F}^t\right]+\frac{L}{2}\E \left[\left\| \omega^{t+1}-\omega^t\right\|^2|\mathcal{F}^t\right]\\
&=F(\omega^{t})+\triangledown F(\omega^t)^T\left\{ -\gamma B\frac{\mathcal{c}^t}{d}\triangledown F(\omega^t) -\gamma \sum_{i=1}^{B}\expect[\big]{v^{t,i}|\mathcal{F}^t} \right\} \\
&\quad\quad\quad\quad +\frac{L}{2}\gamma^2(B+1)\left\{B^2\E\left[\left\| \mu^t\right\|^2|\mathcal{F}^t\right]+\sum_{i=1}^{B}\E\left[\left\| v^{t,i}\right\|^2|\mathcal{F}^t\right]\right\}\\
&\leq F(\omega^t)-\gamma \frac{ B}{d} \left\| \triangledown F(\omega^t)\right\|^2+\gamma \left\|\triangledown F(\omega^t)\right\| \sum_{i=1}^{B} \E\left[\left\|v^{t,i}\right\||\mathcal{F}^t\right]\\
&\quad\quad\quad\quad+\frac{L}{2}\gamma^2(B+1)\left\{B^2\E\left[\left\| \mu^t\right\|^2|\mathcal{F}^t\right]+\sum_{i=1}^{B}\E\left[\left\| v^{t,i}\right\|^2|\mathcal{F}^t\right]\right\}\\
&\leq F(\omega^t)-\gamma  \frac{B}{d} \left\| \triangledown F(\omega^t)\right\|^2+\hat{\mathcal{O}}(B^3\gamma^2)+\hat{\mathcal{O}}(B\gamma^2)\left\{\hat{\mathcal{O}}(B^2)+\hat{\mathcal{O}}(B^4\gamma^2) +\hat{\mathcal{O}}(B^7\gamma^4) \right\}\\
&=F(\omega^t)-\gamma  \frac{B}{d} \left\| \triangledown F(\omega^t)\right\|^2+\hat{\mathcal{O}}(B^3\gamma^2)+\hat{\mathcal{O}}(B^5\gamma^4)+\hat{\mathcal{O}}(B^8\gamma^6).
\end{align*}   
 Since $LQP\geq 1$ and $BL\gamma QP\leq 1$, the above equation becomes
\begin{align}
\label{eq:cl4-21}
\E\left[F(\omega^{t+1})|\mathcal{F}^t\right]\leq F(\omega^t)-\gamma  \frac{B}{d} \left\| \triangledown F(\omega^t)\right\|^2+\hat{\mathcal{O}}(B^4\gamma^2).
\end{align}
Subtracting $F(\omega^*)$ from both sides of (\ref{eq:cl4-21}) and applying (\ref{eq:pro1-14}) yields the claim in  (\ref{eq:pro2-0}), where $C_2$ is a positive constant. 
\end{proof}
\subsubsection*{Proof of Theorem \ref{thm:7}}
\label{proof:thm7}
\begin{proof}
We use the relationship in (\ref{eq:pro2-0}) to construct a supermartingale sequence. Define the stochastic processes $\alpha^t$ and $\beta^t$ as 
\begin{align}
\label{eq:thm3-2}
&\alpha^t:=\left[F(\omega^t)-F(\omega^*)\right]\times \mathds{1}_{\left\{\min_{u\leq t}F(\omega^u)-F(\omega^*)>\frac{C_2dB^3\gamma}{2\xi}\right\}}\\
&\beta^t:=\frac{2\xi B}{d}\gamma\left[F(\omega^t)-F(\omega^*)-\frac{C_2dB^3\gamma}{2\xi}\right]\times\mathds{1}_{\left\{\min_{u\leq t}F(\omega^u)-F(\omega^*)>\frac{C_2dB^3\gamma}{2\xi}\right\}}.
\label{eq:thm3-3}
\end{align}
The process $\alpha^t$ tracks the optimality gap $F(\omega^t)-F(\omega^*)$ until the gap becomes smaller than $\frac{C_2dB^3\gamma}{2\xi}$ for the first time. Notice that the stochastic process $\alpha^t$ is never negative. Likewise, the same properties hold for $\beta^t$. Based on the relationship in (\ref{eq:pro2-0}) and the definitions of stochastic processes $\alpha^t$ and $\beta^t$ in (\ref{eq:thm3-2}) and (\ref{eq:thm3-3}), we obtain that for all $t\geq 0$
\begin{align}
\label{eq:thm3-4}
\E\left[\left.\alpha^{t+1} \right|\mathcal{F}^t  \right]\leq \alpha^t-\beta^t.
\end{align}
Given the relationship in (\ref{eq:thm3-4}) and non-negativity of stochastic processes $\alpha^t$ and $\beta^t$ we obtain that $\alpha^t$ is supermartingale. The supermartingale convergence theorem yields
\begin{align}
&\mathrm{(i)}\quad \alpha^t \mathrm{\,\,converges\,\, to\,\, a\,\, limit\,\, a.s.,\, and }\nonumber\\
\label{eq:thm3-5}
&\mathrm{(ii)}\quad \sum_{t=1}^{\infty}\beta^t<\infty. \quad \mathrm{a.s.}
\end{align}
Property (\ref{eq:thm3-5}) implies that the sequence $\beta^t$ is converging to 0 almost surely, i.e.,
\begin{align}
\label{eq:thm3-6}
\lim\limits_{t\to\infty}\beta^t=0\quad\mathrm{a.s.}
\end{align}
Based on the definition of $\beta^t$ in (\ref{eq:thm3-3}), the limit in (\ref{eq:thm3-6}) is true if one of the following events holds:
\begin{align*}
&\mathrm{(i)}\quad  \mathrm{\,\,the\,\, indicator\,\, function\,\, is\,\,0 \, \, after \,\,large\,\, }t,\\
&\mathrm{(ii)}\quad \lim\limits_{t\to\infty}F(\omega^t)-F(\omega^*)- \frac{C_2dB^3\gamma}{2\xi}=0. 
\end{align*}
From either one of these two events we conclude that
\begin{align}
\label{eq:thm3-7}
\liminf_{t\to\infty}F(\omega^t)-F(\omega^*)\leq \frac{C_2dB^3\gamma}{2\xi}\quad\mathrm{a.s.}
\end{align}
Therefore, the claim in (\ref{eq:thm7-0}) is valid. The result in (\ref{eq:thm3-7}) shows that the loss function value sequence $F(\omega^t)$ almost sure converges to a neighborhood of the optimal loss function value $F(\omega^*)$.

We proceed to prove the result in (\ref{eq:thm7-1}). We compute the expected value of (\ref{eq:pro2-0}) given $\mathcal{F}^0$ to obtain
\begin{align}
\label{eq:thm3-8}
\E\left[F(\omega^{t+1})-F(\omega^*)\right]\leq \left( 1-\frac{2\xi B}{d}\gamma \right)\E\left[ F(\omega^t)-F(\omega^*)\right]+C_2B^4\gamma^2.
\end{align}
Rewriting the relationship in (\ref{eq:thm3-8}) for step $t-1$ gives
\begin{align}
\label{eq:thm3-9}
\E\left[F(\omega^{t})-F(\omega^*)\right]\leq \left( 1-\frac{2\xi B}{d}\gamma \right)\E\left[ F(\omega^{t-1})-F(\omega^*)\right]+C_2B^4\gamma^2.
\end{align}
Substituting the upper bound in (\ref{eq:thm3-9}) for the expectation of $F(\omega^t)-F(\omega^*)$ in (\ref{eq:thm3-8}) implies
\begin{align}
\label{eq:thm3-10}
\E\left[F(\omega^{t+1})-F(\omega^*)\right]\leq \left( 1-\frac{2\xi B}{d}\gamma \right)^2\E\left[ F(\omega^{t-1})-F(\omega^*)\right]+C_2B^4\gamma^2\left(1+\left( 1-\frac{2\xi B}{d}\gamma \right)\right).
\end{align}
By recursively applying steps (\ref{eq:thm3-9}) and (\ref{eq:thm3-10}) we can bound the expectation of $F(\omega^{t+1})-F(\omega^*)$ in terms of the initial loss function error $F(\omega^0)-F(\omega^*)$ as
\begin{align}
\label{eq:thm3-11}
\E\left[F(\omega^{t+1})-F(\omega^*)\right]\leq \left( 1-\frac{2\xi B}{d}\gamma \right)^{t+1}\left[ F(\omega^{0})-F(\omega^*)\right]+C_2B^4\gamma^2\sum_{u=0}^t \left( 1-\frac{2\xi B}{d}\gamma \right)^u.
\end{align}
Substituting $t$ by $t-1$ and simplifying the sum in the right-hand side of (\ref{eq:thm3-11}) yields
\begin{align}
\label{eq:thm3-12}
\E\left[F(\omega^{t})-F(\omega^*)\right]\leq \left( 1-\frac{2\xi B}{d}\gamma \right)^{t}\left[ F(\omega^{0})-F(\omega^*)\right]+\frac{C_2dB^3\gamma}{2\xi}\left[1-\left( 1-\frac{2\xi B}{d}\gamma \right)^t\right].
\end{align}
Since $\gamma< \frac{d}{2\xi B}$, the term $1-\left( 1-\frac{2\xi B}{d}\gamma \right)^t$ in the right-hand side of (\ref{eq:thm3-12}) is strictly smaller than 1 and the claim in (\ref{eq:thm7-1}) follows.
\end{proof}
\subsection*{E\tab Counter Example without Assumption \hyperref[as:5]{5}}
\subsubsection*{Proof of Theorem \ref{thm:8}}
\label{proof:thm8}
\begin{proof}
We consider the setting where there are two samples $[A|b] = \left[ \begin{matrix}
a_{11} &a_{12} \\ 
 a_{21}&a_{22} 
\end{matrix}\left| \begin{matrix}
b_1\\ 
b_2
\end{matrix}\right. \right ]$, which are split into four partitions, and the parameter vector is specified as $\omega = [\omega_1,\omega_2]$. Then, applying MSE and linear regression yields 
\begin{align}
    F([\omega_1,\omega_2])&= \frac{1}{2}\left\|A\omega-b\right\|_2^2.
    \label{eq:thm8-0}
\end{align}
Consequently, the gradient is
\begin{align*}
    \bigtriangledown F = A^{T}A\omega-A^Tb,
\end{align*}
and the Hessian $H=A^TA$. 
Note that Assumption \hyperref[as:2]{2} holds. For Assumption \hyperref[as:3]{3}, we can select $L = \max_{i,j}\left|H_{ij}\right|$. Notice that
\begin{align}
\left\|A^TA\right\|=\left\|A\right\|^2\geq\left(\frac{1}{\sqrt{2}}\left\|A\right\|_F\right)^2=\frac{a_{11}^2+a_{12}^2+a_{21}^2+a_{22}^2}{2}.
\label{eq:thm8-1}
\end{align}
Let us consider the inner loop in steps~\ref{inner_start}-\ref{inner_end}. The approximate individual loss functions using each sample are
\begin{align*}
    f_1(\omega_1)=\frac{1}{2}(\omega_1a_{11}-b_1)^2, \quad f_1(\omega_2)=\frac{1}{2}(\omega_2a_{12}-b_1)^2,\\ f_2(\omega_1)=\frac{1}{2}(\omega_1a_{21}-b_2)^2, \quad f_2(\omega_2)=\frac{1}{2}(\omega_2a_{22}-b_2)^2,
\end{align*}
which in turn yields
\begin{align*}
    \bigtriangledown_{\omega_1}f_1(\omega_1a_{11})=a_{11}^2\omega_1-a_{11}b_1, \quad
    \bigtriangledown_{\omega_2}f_1(\omega_2a_{12})=a_{12}^2\omega_2-a_{12}b_1,\\
    \bigtriangledown_{\omega_1}f_2(\omega_1a_{21})=a_{21}^2\omega_1-a_{21}b_2, \quad
    \bigtriangledown_{\omega_2}f_2(\omega_2a_{22})=a_{22}^2\omega_2-a_{22}b_2.
\end{align*}
The update formulas in the algorithm for the first sample and $\omega_1$ read
\begin{align*}
    &\bar{\omega}^{(i+1)}_1=\bar{\omega}^{(i)}_1-\gamma_{t+1}\left(a_{11}^2\bar{\omega}^{(i)}_1-a_{11}^2\omega^t_1+\mu^t_1 \right)\\
    &\bar{\omega}^{(i+1)}_1=\left(1-a_{11}^2\gamma_{t+1} \right)\bar{\omega}^{(i)}_1+\gamma_{t+1}\left(a_{11}^2\omega^t_1-\mu^t_1\right)\\
    &\bar{\omega}^{(i+1)}_1-\left(\omega^t_1-\frac{\mu^t_1}{a_{11}^2}\right)=\left(1-a_{11}^2\gamma_{t+1}\right)\left(\bar{\omega}^{(i)}_1-\left(\omega^t_1-\frac{\mu^t_1}{a_{11}^2}\right)\right).
\end{align*}
Therefore,
\begin{align*}
    \bar{\omega}^{(i+1)}_1=\left(1-a_{11}^2\gamma_{t+1}\right)^{i+1}\left(\bar{\omega}^{(0)}_1-\left(\omega^t_1-\frac{\mu^t_1}{a_{11}^2}\right)\right)+\left(\omega^t_1-\frac{\mu^t_1}{a_{11}^2}\right).
\end{align*}
Since $\bar{\omega}^{(0)}_1 = \omega^t_1$ due to step~\ref{transfer}, we obtain
\begin{align*}
    \bar{\omega}^{(i+1)}_1 = \left(1-a_{11}^2\gamma_{t+1}\right)^{i+1}\frac{\mu^t_1}{a_{11}^2}+\left(\omega^t_1-\frac{\mu^t_1}{a_{11}^2}\right).
\end{align*}
If $\gamma_t\leq \frac{1}{\min\left\{a_{11}^2,a_{12}^2,a_{21}^2,a_{22}^2 \right\}}$, then $a_{11}^2\gamma_{t}<1$, which in turn yields 
\begin{align*}
    \lim\limits_{i\to\infty}\bar{\omega}^{(i+1)}_1 =\omega^t_1-\frac{\mu^t_1}{a_{11}^2}.
\end{align*}
Thus, if the number of iterations $B$ for the inner loop is big enough, then, approximately, $\omega^{t+1}_1=\omega^t_1-\frac{\mu^t_1}{a_{11}^2}$. Similarly, when using the same data point to update $\omega_2$, we obtain
\begin{align*}
    \omega^{t+1}_2=\omega^t_2-\frac{\mu^t_2}{a_{12}^2}.
\end{align*}
When using $([a_{21},a_{22}],b_2)$ to update $\omega_1$ and $\omega_2$, we obtain
\begin{align*}
    \omega^{t+1}_1=\omega^t_1-\frac{\mu^t_1}{a_{21}^2}, \quad \omega^{t+1}_2=\omega^t_2-\frac{\mu^t_2}{a_{22}^2}.
\end{align*}
Therefore, the inner loop mimics gradient descent with constant learning rate. Then, the explicit update formula for $\omega^t$ is
\begin{align}
    \omega^{t+1} = \omega^{t}-\eta\bigtriangledown F(\omega^t)=(I-\eta A^TA)\omega^t+\eta A^Tb,
    \label{eq:thm8-goal}
\end{align}
where the second equality holds due to the definition of $\bigtriangledown F$, and $\eta=\begin{bmatrix}
 \frac{1}{a_{11}^2}& 0 \\ 
0 & \frac{1}{a_{12}^2}
\end{bmatrix}$ when using the first sample and $\eta=\begin{bmatrix}
 \frac{1}{a_{21}^2}& 0\\ 
0 & \frac{1}{a_{22}^2}
\end{bmatrix}$ when using the second sample. Thus, if $\left|a_{11}\right|=\left|a_{12}\right|=\min\left\{\left|a_{11}\right|,\left|a_{12}\right|,\left|a_{21}\right|,\left|a_{22}\right|\right\}$ and $\max\left\{\left|a_{21}\right|,\left|a_{22}\right|\right\}> \left|a_{11}\right|$, then,
when using $([a_{11},a_{12}],b_1)$ to update $\omega_1$, the corresponding $\left\|I-\eta A^TA\right\|>1$ since $\left\|\eta A^TA\right\|\geq \frac{\left\|A^TA\right\|}{\left\|\eta^{-1}\right\|}> \left|\frac{a_{11}^2+a_{12}^2+a_{21}^2+a_{22}^2}{2a_{11}^2}\right|=2$, which in turn yields the divergence of the algorithm. Similarly, the same conclusion applies to $([a_{21}, a_{22}],b_2)$ when $\left|a_{21}\right|=\left|a_{22}\right|=\min\left\{\left|a_{11}\right|,\left|a_{12}\right|,\left|a_{21}\right|,\left|a_{22}\right|\right\}$ and $\max\left\{\left|a_{11}\right|,\left|a_{12}\right|\right\}> \left|a_{21}\right|$.
Some possible values of $A$ and $b$ are in Table \ref{counter example}.
\begin{table}[H]
\centering
\begin{tabular}{|l|l|l|l|l|l|l|l|}
\hline
 $a_{11}$&$a_{12}$  & $b_{1}$ &$a_{21}$  &$a_{22}$  &$b_2$  &optimal $\omega^*$  &$\omega^{100}$  \\ \hline
 1 & 1 & 1 & 2 & 3 & 0 &$[3,-1]$  &$[3.651 \times 10^{55}, -6.811\times 10^{56}]$  \\ \hline
  2 & 1 & 1 & 1 & 1 & 0 &$[1,-1]$  &$[54.606,-29.148]$  \\ \hline
1 & 2 & 1 & 1 & 3 & 0 &$[3,-1]$  &$[-4.414 \times 10^{11}, -8.765\times 10^{11}]$  \\ \hline
 1&2  &1  &2  &3  &1  &$[-27,17]$  &$[4.973\times 10^{29},-3,455\times 10^{30}]$  \\ \hline
 1& 4 & 1 & 2 &3  &0  &$[-\frac{3}{5},\frac{2}{5}]$  &$[-177.419,2976.815]$  \\ \hline
\end{tabular}
\caption{Counter Examples}
\label{counter example}
\end{table}
\end{proof}

\subsection*{F\tab Diminishing Learning Rate Convergence with Feature Sampling}
We assume that $\omega^{*}$ is the unique optimal solution to \hyperref[obj]{(1)}, and also that $\left\|\omega^*\right\|\leq \frac{M_2}{2}$. By using Assumption \hyperref[as:5]{5}, we conclude that the distance between any $\omega\in\Omega$ and $\omega^{*}$ is bounded, i.e.
\begin{align}
\left\|\omega^t-w^*\right\|\leq M_2.
\label{eq:bd w}
\end{align}
The second moment of $\omega^t$ is also bounded for all $t$, i.e. 
\begin{align}
\left\|\omega^t\right\|^2\leq \frac{M_2^2}{4},
\label{eq:bd w2}
\end{align}
for any $t$. Let us define $\mu^t=\frac{1}{\mathcal{d}^t}\sum_{j\in\mathcal{D}^t}\bar{\triangledown}_{\omega_{\mathcal{C}^t}}f_j(x_j\omega^t)+e^t$, where $e^t$ is defined as
\begin{align}
\label{eq:def_e}
e^t&:=\frac{1}{\mathcal{d}^t}\sum_{j\in\mathcal{D}^t}\bar{\triangledown}_{\omega_{\mathcal{C}^t}}f_j(x_j^{\mathcal{B}^t}\omega_{\mathcal{B}^t}^t)-\frac{1}{\mathcal{d}^t}\sum_{j\in\mathcal{D}^t}\bar{\triangledown}_{\omega_{\mathcal{C}^t}}f_j(x_j\omega^t).
\end{align}
\begin{lem}
\label{lem:3}
If Assumptions \hyperref[as:3]{3}, \hyperref[as:4]{4} and \hyperref[as:5]{5} hold true, then $\left\|\triangledown F(\omega^t)\right\|$ and $\sum_{j=1}^N\left\|\triangledown f_j(x_j\omega^t)\right\|^2$ for any $t$ satisfy
\begin{align}
\label{eq:lem3-0}
\left\|\triangledown F(\omega^t)\right\|&\leq M_2L,\\
\label{eq:lem3-1}
\sum_{j=1}^N\left\|\triangledown f_j(x_j\omega^t)\right\|^2&\leq(N-1)G^2+NM_2^2L^2.
\end{align}
\end{lem}
\begin{proof}
Using the fact that $\omega^{*}$ is the optimal solution and Assumptions \hyperref[as:3]{3} and \hyperref[as:5]{5} we obtain
\begin{align*}
\left\|\triangledown F(\omega^t)\right\|=\left\|\triangledown F(\omega^t)-\triangledown F(\omega^{*})\right\|\leq L\left\|\omega^t-\omega^*\right\|\leq LM_2.
\end{align*}
The last inequality holds due to (\ref{eq:bd w}). Assumption \hyperref[as:4]{4} implies that, for any $\omega^t$ we have
\begin{align*}
\frac{1}{N-1}\sum_{j=1}^N\left(\left\|\triangledown f_j(x_j\omega^t)\right\|^2-\left\|\triangledown F(\omega^t) \right\|^2  \right)\leq G^2,
\end{align*}
which in turn yields
\begin{align*}
\frac{1}{N-1}\sum_{j=1}^N\left\|\triangledown f_j(x_j\omega^t)\right\|^2\leq G^2+\frac{N}{N-1}\left\|\triangledown F(\omega^t)\right\|^2 ,
\end{align*}
and
\begin{align}
\sum_{j=1}^N\left\|\triangledown f_j(x_jw^t)\right\|^2\leq (N-1)G^2+N\left\|\triangledown F(\omega^t)\right\|^2.
\label{eq:lem3-2}
\end{align}
Inserting (\ref{eq:lem3-0}) into (\ref{eq:lem3-2}) gives (\ref{eq:lem3-1}).
\end{proof}
\begin{claim}
\label{cl:e}
If Assumption \hyperref[as:3]{3} and \hyperref[as:5]{5} hold true and for some constant $\eta\geq 0$ and the learning rate $\gamma_{t+1}$, we have
\begin{align}
\mathcal{b}^t\in \left[ \max\left\{\mathcal{c}^t,\frac{d}{1+\frac{4d\eta\gamma_{t+1}^2}{\mathcal{c}^tM_2^2L^2}}\right\},d\right]
\label{eq:cl1-0}
\end{align}
for every $t$, then the expected value of $\left\|e^t\right\|^2$ conditioned on $\mathcal{F}^t$ generated by SODDA is bounded by $\eta\gamma_{t+1}^2$, that is
\begin{align*}
\E\left[ \left\|e^t\right\|^2 \vert \mathcal{F}^t\right]\leq \eta\gamma_{t+1}^2,
\end{align*}
where $\eta$ is a constant unrelated to $B$.
\end{claim}
\begin{proof}
 By using (\ref{eq:def_e}) we obtain
\begin{align}
 \E\left[ \left\|e^t\right\|^2 \vert \mathcal{F}^t\right]&=\E\left[ \left.\left\|\frac{1}{\mathcal{d}^t}\sum_{j\in\mathcal{D}^t}\bar{\triangledown}_{\omega_{\mathcal{C}^t}}f_j(x_j^{\mathcal{B}^t}\omega_{\mathcal{B}^t}^t)-\frac{1}{\mathcal{d}^t}\sum_{j\in\mathcal{D}^t}\bar{\triangledown}_{\omega_{\mathcal{C}^t}}f_j(x_j\omega^t)\right\|^2 \right| \mathcal{F}^t\right]\nonumber\\
&=\frac{1}{(\mathcal{d}^t)^2}\E\left[ \left.\left\|\sum_{j\in\mathcal{D}^t}\left(\bar{\triangledown}_{\omega_{\mathcal{C}^t}}f_j(x_j^{\mathcal{B}^t}\omega_{\mathcal{B}^t}^t)-\bar{\triangledown}_{\omega_{\mathcal{C}^t}}f_j(x_j\omega^t)\right)\right\|^2\right|\mathcal{F}^t\right]\nonumber\\
&\leq \frac{1}{\mathcal{d}^t}\E\left[ \left.\sum_{j\in\mathcal{D}^t}\left\|\bar{\triangledown}_{\omega_{\mathcal{C}^t}}f_j(x_j^{\mathcal{B}^t}\omega_{\mathcal{B}^t}^t)-\bar{\triangledown}_{\omega_{\mathcal{C}^t}}f_j(x_j\omega^t)\right\|^2\right|\mathcal{F}^t\right]\nonumber\\
&=\frac{1}{\mathcal{d}^t}\E\left[ \left.\sum_{j\in\mathcal{D}^t}\E\left[ \left.\left\|\bar{\triangledown}_{\omega_{\mathcal{C}^t}}f_j(x_j^{\mathcal{B}^t}\omega_{\mathcal{B}^t}^t)-\bar{\triangledown}_{\omega_{\mathcal{C}^t}}f_j(x_j\omega^t)\right\|^2\right|\mathcal{F}^t,\mathcal{B}^t,\mathcal{D}^t\right]\right|\mathcal{F}^t\right].
\label{eq:cl1-1}
\end{align}
Applying Lemma \ref{lem:1} with $w_1=1$, $w_2=0$,$\Phi=\left\{\left(\bar{\triangledown}_{\omega_{\mathcal{B}^t}}f_j(x_j^{\mathcal{B}^t}\omega_{\mathcal{B}^t}^t)-\bar{\triangledown}_{\omega_{\mathcal{B}^t}}f_j(x_j\omega^t)\right)_i\right\}_{i=1}^{\mathcal{b}^t}$, $g(z)=z^2$, $\mathcal{H}=\sigma(\mathcal{F}^t,\mathcal{B}^t,\mathcal{D}^t)$ and $\mathcal{B}=\mathcal{C}^t$ to (\ref{eq:cl1-1}) yields
\begin{align}
\E\left[ \left\|e^t\right\|^2 \vert \mathcal{F}^t\right]&\leq \frac{\mathcal{c}^t}{\mathcal{b}^t\mathcal{d}^t}\E\left[ \left.\sum_{j\in\mathcal{D}^t}\left\|\bar{\triangledown}_{\omega_{\mathcal{B}^t}}f_j(x_j^{\mathcal{B}^t}\omega_{\mathcal{B}^t}^t)-\bar{\triangledown}_{\omega_{\mathcal{B}^t}}f_j(x_j\omega^t)\right\|^2\right|\mathcal{F}^t\right]\nonumber\\
&= \frac{\mathcal{c}^t}{\mathcal{b}^t\mathcal{d}^t}\E\left[ \left.\sum_{j\in\mathcal{D}^t}\left\|\bar{\triangledown}_{\omega_{\mathcal{B}^t}}f_j(x_j\omega_{(\mathcal{B}^t,0)}^t)-\bar{\triangledown}_{\omega_{\mathcal{B}^t}}f_j(x_j\omega^t)\right\|^2\right|\mathcal{F}^t\right]\nonumber\\
&\leq \frac{L^2\mathcal{c}^t}{\mathcal{b}^t\mathcal{d}^t}  \E\left[ \left.\sum_{j\in\mathcal{D}^t}\left\|\omega_{(\mathcal{B}^t,0)}^t-\omega^t\right\|^2\right|\mathcal{F}^t\right]\nonumber\\
&=\frac{L^2\mathcal{c}^t}{\mathcal{b}^t\mathcal{d}^t}\E\left[\left.\E\left[\left.\sum_{j\in\mathcal{D}^t}\left\|\omega_{[d]\setminus\mathcal{B}^t}^t\right\|^2\right|\mathcal{F}^t,\mathcal{B}^t\right]\right|\mathcal{F}^t\right]\nonumber\\
&=\frac{L^2\mathcal{c}^t}{\mathcal{b}^t\mathcal{d}^t}\E\left[\left.\E\left[\left.\mathcal{d}^t\left\|\omega_{[d]\setminus\mathcal{B}^t}^t\right\|^2\right|\mathcal{F}^t,\mathcal{B}^t\right]\right|\mathcal{F}^t\right]\nonumber\\
&=\frac{L^2\mathcal{c}^t}{\mathcal{b}^t}\E\left[\left.\left\|\omega_{[d]\setminus\mathcal{B}^t}^t\right\|^2\right|\mathcal{F}^t\right]\nonumber.
\end{align}
The second inequality uses Assumption \hyperref[as:3]{3} and we use $[d]=\left\{1,\cdots,d\right\}$. Now, let us use Lemma \ref{lem:1} again with $w_1=0$, $w_2=1$, $\Phi=\left\{\left(\omega^t\right)_i\right\}_{i=1}^{d}$, $g(z)=z^2$, $\mathcal{H}=\mathcal{F}^t$ and $\mathcal{B}=\mathcal{B}^t$ to get
\begin{align*}
\E\left[ \left\|e^t\right\|^2 \vert \mathcal{F}^t\right]&\leq \frac{L^2\mathcal{c}^t}{\mathcal{b}^t} \frac{d-\mathcal{b}^t}{d}\left\|\omega^t\right\|^2\\
&\leq \frac{L^2\mathcal{c}^t(d-\mathcal{b}^t)}{d\mathcal{b}^t}\frac{M_2^2}{4}.
\end{align*} 
The last inequality uses (\ref{eq:bd w2}). In order to bound the expected value of $\left\|e^t\right\|^2$ by $\eta\gamma_{t+1}^2$, we require
\begin{align*}
\frac{L^2\mathcal{c}^t(d-\mathcal{b}^t)}{d\mathcal{b}^t}\frac{M_2^2}{4}&\leq \eta\gamma_{t+1}^2,
\end{align*}
which is equivalent to
\begin{align*}
\mathcal{b}^t\geq \frac{d}{1+\frac{4d\eta\gamma_{t+1}^2}{\mathcal{c}^tM_2^2L^2}}.
\end{align*}
Meanwhile, in order to make $\triangledown_{\omega_{\mathcal{C}^t}} f_j(x_j^{\mathcal{B}^t}\omega_{\mathcal{B}^t}^t)$ well defined, we require $\mathcal{b}^t\geq \mathcal{c}^t$.
\end{proof}
\noindent Next, similar to Proposition \ref{pro1},  we present the following proposition. 

\begin{pro}
\label{pro3}
If Assumptions \hyperref[as:2]{2}-\hyperref[as:5]{5} hold true, and the sequence of learning rates satisfies $\gamma_t\leq 1$ for all $t$, and the sequences $(\mathcal{b}^t,\mathcal{c}^t,\mathcal{d}^t)_{t=0}^{\infty}$ are selected so that (\ref{eq:cl1-0}) for some constant $\eta\geq 0$, $\mathcal{c}^t\leq d$ and $\mathcal{d}^t\leq N$, then the loss function error sequence $F(\omega^t)-F(\omega^*)$ generated by SODDA satisfies
\begin{align}
\E\left[F(\omega^{t+1})-F(\omega^*)|\mathcal{F}^t\right]\leq (1-\frac{2\xi B}{d}\gamma_{t+1})[F(\omega^t)-F(\omega^*)]+C_3\gamma_{t+1}^2,
\label{eq:pro3-0}
\end{align}
where $C_3$ is a positive constant.
\end{pro} 
\begin{proof}
Since the proof is very similar to the proof of Proposition \ref{pro1}, we point out the differences rather than present all the details.
\begin{claim}
\label{cl5}
For any $t$ we have
\begin{align}
\label{eq:cl5-1}
&\E\left[\left.\left\|\frac{1}{\mathcal{d}^t}\sum_{j\in\mathcal{D}^t}\bar{\triangledown}_{\omega_{\mathcal{C}^t}}f_j(x_j\omega^t)\right\|^2\right|\mathcal{F}^t\right]\leq\frac{\mathcal{c}^t}{Nd}\left[(N-1)G^2+NM_2^2L^2  \right].
\end{align}
\end{claim}
\begin{proof}
Applying the law of iterated expectation and Lemma \ref{lem:1} with $w_1=1$, $w_2=0$, $\Phi=\left\{ \bar{\triangledown}_{\omega_{\mathcal{C}^t}}f_j(x_j\omega^t)\right\}_{j=1}^N$, $g(z)=\left\|z\right\|^2$, $\mathcal{H}=\sigma(\mathcal{F}^t,\mathcal{c}^t)$ and $\mathcal{B}=\mathcal{D}^t$ give
\begin{align}
&\E\left[\left.\left\|\frac{1}{\mathcal{d}^t}\sum_{j\in\mathcal{D}^t}\bar{\triangledown}_{\omega_{\mathcal{C}^t}}f_j(x_j\omega^t)\right\|^2\right|\mathcal{F}^t\right]\leq \frac{1}{\mathcal{d}^t}\E\left[\sum_{j\in\mathcal{D}^t}\left.\left\|\bar{\triangledown}_{\omega_{\mathcal{C}^t}}f_j(x_j\omega^t)\right\|^2\right|\mathcal{F}^t  \right]\nonumber\\
&\quad=\frac{1}{\mathcal{d}^t}\E\left[\left.\E\left[\left.\sum_{j\in\mathcal{D}^t}\left\|\bar{\triangledown}_{\omega_{\mathcal{C}^t}}f_j(x_j\omega^t)\right\|^2\right|\mathcal{F}^t,\mathcal{C}^t\right]\right|\mathcal{F}^t  \right]
=\frac{1}{\mathcal{d}^t}\cdot \frac{\mathcal{d}^t}{N}\E\left[\sum_{j=1}^N\left.\left\|\bar{\triangledown}_{\omega_{\mathcal{C}^t}}f_j(x_j\omega^t)\right\|^2\right|\mathcal{F}^t  \right]\nonumber\\
&\quad=\frac{1}{N}\sum_{j=1}^N\E\left[\left.\left\|\bar{\triangledown}_{\omega_{\mathcal{C}^t}}f_j(x_j\omega^t)\right\|^2\right|\mathcal{F}^t  \right]\nonumber,
\end{align}
which in turn yields
\begin{align}
\E\left[\left.\left\|\frac{1}{\mathcal{d}^t}\sum_{j\in\mathcal{D}^t}\bar{\triangledown}_{\omega_{\mathcal{C}^t}}f_j(x_j\omega^t)\right\|^2\right|\mathcal{F}^t\right]\leq\frac{1}{N}\sum_{j=1}^N\frac{\mathcal{c}^t}{d}\E\left[\left.\left\|\triangledown f_j(x_j\omega^t)\right\|^2\right|\mathcal{F}^t  \right]=\frac{\mathcal{c}^t}{Nd}\sum_{j=1}^N\left\|\triangledown f_j(x_j\omega^t)\right\|^2,
\label{eq:cl5-3}
\end{align}
where we apply Lemma \ref{lem:1} for each $j$ again with  $w_1=1$, $w_2=0$, $\Phi=\left\{\left( \triangledown f_j(x_j\omega^t)\right)_{i}\right\}_{i=1}^{d}$, $g(z)=z^2$, $\mathcal{H}=\mathcal{F}^t$ and $\mathcal{B}=\mathcal{C}^t$. By inserting (\ref{eq:lem3-1}) from  Lemma \ref{lem:3} into (\ref{eq:cl5-3}) the claim in (\ref{eq:cl5-1}) follows.
\end{proof}
From Claim \ref{cl:e} we conclude
\begin{align}
\label{eq:pro3-1}
\E\left[ \left\|e^t\right\|^2 \vert \mathcal{F}^t\right]=\hat{\mathcal{O}}(\gamma_{t+1}^2).
\end{align}
By using the conditional Jensen's inequality and (\ref{eq:pro3-1}) we get
\begin{align}
\E\left[ \left\|e^t\right\| \vert \mathcal{F}^t\right]=\E\left[ \sqrt{\left\|e^t\right\|^2} \vert \mathcal{F}^t\right]\leq \sqrt{\E\left[ \left\|e^t\right\|^2 \vert \mathcal{F}^t\right]}\leq \sqrt{\eta}\gamma_{t+1}=\hat{\mathcal{O}}(\gamma_{t+1}).
\label{eq:pro3-2}
\end{align}
Consequently, applying the definition of $\mu^t$ yields
\begin{align}
\E\left[\left\|\mu^t\right\|^2|\mathcal{F}^t  \right]&\leq 2\left\{\E\left[\left. \left\|  e^t \right\|^2\right|\mathcal{F}^t\right] +\E\left[\left.\left\|\frac{1}{\mathcal{d}^t}\sum_{j\in\mathcal{D}^t}\bar{\triangledown}_{\omega_{\mathcal{C}^t}}f_j(x_j\omega^t)\right\|^2\right|\mathcal{F}^t\right]\right\}\nonumber\\
 &=\hat{\mathcal{O}}(\gamma_{t+1}^2)+\hat{\mathcal{O}}(1).
\label{eq:pro3-3} 
\end{align}
The second equality holds due to  (\ref{eq:cl5-1}) in Claim \ref{cl5} and (\ref{eq:pro3-1}). Then, the conclusion of (\ref{eq:cl3-3}) holds since
\begin{equation}
\begin{split}
\label{eq:pro3-4}
&\E\left[\left\|v^{t,k}\right\|^2|\mathcal{F}^t\right]=\E\left[\E\left[\left\|v^{t,k}\right\|^2|\mathcal{F}^t,,\mathcal{B}^t,\mathcal{C}^t,\mathcal{D}^t,\pi,j_{11}^{(1)}\cdots,j_{QP}^{(k-2)} \right]|\mathcal{F}^t\right]\\
&\leq \E\left[   \left.\sum_{q=1}^Q\sum_{p=1}^PL^2\left\|\gamma_{t+1}\left[ (k-1)\mu^t_{qp}+\sum_{i=1}^{k-1}v_{qp}^{t,i} \right]  \right\|^2  \right|\mathcal{F}^t\right] \\
&\leq kL^2\gamma_{t+1}^2\sum_{q=1}^Q\sum_{p=1}^P\left( \E\left[\left. \left\| (k-1) \mu^t_{qp} \right\|^2\right|\mathcal{F}^t\right]+ \sum_{i=1}^{k-1}\E\left[ \left\| v_{qp}^{t,i}\right\|^2|\mathcal{F}^t\right]\right)
\\
&\leq kL^2\gamma_{t+1}^2QP\left((k-1)^2\E\left[\left. \left\|  \mu^t \right\|^2\right|\mathcal{F}^t\right]+ \sum_{i=1}^{k-1}\E\left[ \left\| v^{t,i}\right\|^2|\mathcal{F}^t\right]\right)
\\
&=kL^2\gamma_{t+1}^2QP\left[2(k-1)^2\left(\hat{\mathcal{O}}(\gamma_{t+1}^2)+\hat{\mathcal{O}}(1)\right)+\hat{\mathcal{O}}(\gamma_{t+1}^2)\right]=\hat{\mathcal{O}}(\gamma_{t+1}^2).
\end{split}
\end{equation}
Thus, we maintain the same claims as those in Claim \ref{cl:3}. Then, (\ref{eq:pro1-6}) is revised to be
\begin{equation}
\begin{split}
\E\left[\omega^{t+1}-\omega^t|\mathcal{F}^t\right]&=-\gamma_{t+1}\E\left[B\mu^t+v^{t,1}+\cdots+v^{t,B}|\mathcal{F}^t\right]\\
&=-\gamma_{t+1}B\left( \expect[\big]{e^t|\mathcal{F}^t}+\frac{\mathcal{c}^t}{d}\triangledown F(\omega^t) \right)-\gamma_{t+1}\sum_{i=1}^{B}\expect[\big]{v^{t,i}|\mathcal{F}^t},
\label{eq:pro3-6}
\end{split}
\end{equation}
by using (\ref{eq:cl2-0}) in Claim \ref{cl2}.
Moreover, the expected value of the squared norm $\left\| \omega^{t+1}-\omega^t \right\|^2$ given $\mathcal{F}^t$ is
\begin{equation}
\begin{split}
\label{eq:pro3-7}
&\E\left[ \left\|\omega^{t+1}-\omega^t\right\|^2|\mathcal{F}^t\right]=\E\left[ \left\|\gamma_{t+1}\left[ B\mu^t+v^{t,1}+v^{t,2}+\cdots+v^{t,B} \right] \right\|^2|\mathcal{F}^t\right]\\
&\quad\leq \gamma_{t+1}^2(B+1) \left\{ B^2\E\left[ \left\| \mu^t\right\|^2|\mathcal{F}^t\right]+\sum_{i=1}^{B}\E\left[ \left\| v^{t,i}\right\|^2|\mathcal{F}^t\right]\right\}\\
&\quad=\hat{\mathcal{O}}(\gamma_{t+1}^2)\left\{\hat{\mathcal{O}}(\gamma_{t+1}^2) +\hat{\mathcal{O}}(1)+\hat{\mathcal{O}}(\gamma_{t+1}^2)  \right\}=\hat{\mathcal{O}}(\gamma_{t+1}^2),
\end{split}
\end{equation}
due to Claim \ref{cl:e}, (\ref{eq:cl5-1}), (\ref{eq:pro3-3}) and (\ref{eq:pro3-4}), which is identical to (\ref{eq:pro1-7}). Similarly, applying (\ref{eq:lem3-0}) and (\ref{eq:pro3-2}) yields
\begin{align}
-\gamma_{t+1}B\triangledown F(\omega^t)^T\E\left[ e^t|\mathcal{F}^t \right]&\leq \gamma_{t+1}B\left\|\triangledown F(\omega^t)\right\|\E\left[ \left\|e^t\right\||\mathcal{F}^t \right]\nonumber\\
&=\hat{\mathcal{O}}(\gamma_{t+1})\cdot\hat{\mathcal{O}}(\gamma_{t+1})=\hat{\mathcal{O}}(\gamma_{t+1}^2).
\label{eq:pro3-9}
\end{align}
Therefore, (\ref{eq:pro1-10}) remains the same as follows
\begin{equation*}
\begin{split}
&\E\left[F(\omega^{t+1})|\mathcal{F}^t\right]\leq F(\omega^t)+\triangledown F(\omega^t)^T\E\left[\left(\omega^{t+1}-\omega^t\right)|\mathcal{F}^t\right]+\frac{L}{2}\E \left[\left\| \omega^{t+1}-\omega^t\right\|^2|\mathcal{F}^t\right]\\
&=F(\omega^{t})+\triangledown F(\omega^t)^T\left\{ -\gamma_{t+1}B\left( \expect[\big]{e^t|\mathcal{F}^t}+\frac{\mathcal{c}^t}{d}\triangledown F(\omega^t) \right)-\gamma_{t+1}\sum_{i=1}^{B}\expect[\big]{v^{t,i}|\mathcal{F}^t} \right\} \\
&\quad\quad\quad\quad +\frac{L}{2}\E\left[\left\| \omega^{t+1}-\omega^t\right\|^2|\mathcal{F}^t\right]\\
&=F(\omega^t)-\gamma_{t+1} \frac{\mathcal{c}^tB}{d} \left\| \triangledown F(\omega^t)\right\|^2-\gamma_{t+1}B\triangledown F(\omega^t)^T\E\left[e^t|\mathcal{F}^t\right]-\gamma_{t+1} \triangledown F(\omega^t)^T \sum_{i=1}^{B} \E\left[v^{t,i}|\mathcal{F}^t\right]\\
&\quad\quad\quad\quad+\frac{L}{2}\E\left[\left\| \omega^{t+1}-\omega^t\right\|^2|\mathcal{F}^t\right]\\
&\leq F(\omega^t)-\gamma_{t+1} \frac{\mathcal{c}^tB}{d} \left\| \triangledown F(\omega^t)\right\|^2+\hat{\mathcal{O}}(\gamma_{t+1}^2)\leq  F(\omega^t)-\gamma_{t+1} \frac{\mathcal{c}^tB}{d} \left\| \triangledown F(\omega^t)\right\|^2+C_3\gamma_{t+1}^2,
\end{split}
\end{equation*}
where $C_3$ is a positive constant and we use (\ref{eq:pro1-8}), (\ref{eq:pro3-6}), (\ref{eq:pro3-7}) and (\ref{eq:pro3-9}). The remaining steps are identical with those in Proposition \ref{pro1}.
\end{proof}
The proof of Theorem \ref{thm:1} is the same as the proof of Theorem \ref{thm:5}, and the proof of Theorem \ref{thm:2} is the same as the proof of Theorem \ref{thm:6}.
\label{proof:thm1}
\label{proof:thm2}
\subsection*{G\tab Constant Learning Rate with Feature Sampling}
\begin{pro}
If Assumptions \hyperref[as:2]{2}-\hyperref[as:5]{5} hold true, and the learning rate is constant $\gamma_{t}=\gamma $ such that $BL\gamma QP\leq 1$ and $\gamma\leq 1$, and the sequences $(\mathcal{b}^t,\mathcal{c}^t,\mathcal{d}^t)_{t=0}^{\infty}$ satisfy the same conditions as in Theorem \ref{thm:1}, then the loss function error sequence $F(\omega^t)-F(\omega^*)$ generated by SODDA satisfies
\begin{align}
\E\left[\left.F(\omega^{t+1})-F(\omega^*)\right|\mathcal{F}^t   \right]\leq \left( 1-\frac{2\xi B}{d}\gamma \right)\left[ F(\omega^t)-F(\omega^*)\right]+C_4B^4\gamma^2,
\label{eq:pro4-0}
\end{align}
where $C_4$ is a positive constant.
\end{pro}
\begin{proof}
The proof is the same as the Proof of Proposition \ref{pro2} since
\begin{align}
\label{eq:cl4-11}
\E\left[\left.\left\|\mu^t\right\| \right|\mathcal{F}^t \right]=\E\left[\left.\sqrt{\left\|\mu^t\right\|^2 }\right|\mathcal{F}^t \right]\leq \sqrt{\E\left[\left.\left\|\mu^t\right\|^2 \right|\mathcal{F}^t \right]}=\sqrt{\hat{\mathcal{O}}(1)+\hat{\mathcal{O}}(\gamma^2)}=\hat{\mathcal{O}}(1),
\end{align}
due to (\ref{eq:pro3-3}).
\end{proof}
The proof of Theorem \ref{thm:3} is the same as the proof of Theorem \ref{thm:7}.
\label{proof:thm3}

\subsection*{H\tab Convergence to Optimality of Constant Learning Rate with Feature Sampling}
\subsubsection*{Proof of Theorem \ref{thm:4}}
\label{proof:thm4}
\begin{proof}
Given $\mathcal{b}^t=d$ and $\mathcal{d}^t=N$, applying Claim \ref{cl2} implies  
\begin{align}
\label{eq:thm4-1}
&\E\left[\left.\frac{1}{N}\sum_{j=1}^N\bar{\triangledown}_{\mathcal{C}^t}f_j(x_j\omega^t)\right|\mathcal{F}^t \right]=\frac{\mathcal{c}^t}{N}\triangledown F(\omega^t),\\
&\E\left[\left.\left\|\frac{1}{N}\sum_{j=1}^N\bar{\triangledown}_{\mathcal{C}^t}f_j(x_j\omega^t)\right\|^2\right|\mathcal{F}^t \right]=\frac{1}{\binom{d}{\mathcal{c}^t}}\sum_{\mathcal{C}^t}\left\|\frac{1}{N}\sum_{j=1}^N\bar{\triangledown}_{\mathcal{C}^t}f_j(x_j\omega^t)\right\|^2\nonumber\\
&\tab=\frac{1}{\binom{d}{\mathcal{c}^t}}\binom{d-1}{\mathcal{c}^t-1}\left\|\frac{1}{N}\sum_{j=1}^N\bar{\triangledown}_{\mathcal{C}^t}f_j(x_j\omega^t)\right\|^2=\frac{\mathcal{c}^t}{d}\left\|\triangledown F(\omega^t)\right\|^2,
\label{eq:thm4-2}
\end{align}
which in turn gives an upper bound to $\E\left[\left.\left\|\frac{1}{N}\sum_{j=1}^N\bar{\triangledown}_{\mathcal{C}^t}f_j(x_j\omega^t)\right\|\right|\mathcal{F}^t \right]$; that is
\begin{align}
\label{eq:thm4-3}
\E\left[\left.\left\|\frac{1}{N}\sum_{j=1}^N\bar{\triangledown}_{\mathcal{C}^t}f_j(x_j\omega^t)\right\|\right|\mathcal{F}^t \right]\leq \sqrt{\E\left[\left.\left\|\frac{1}{N}\sum_{j=1}^N\bar{\triangledown}_{\mathcal{C}^t}f_j(x_j\omega^t)\right\|\right|^2\mathcal{F}^t \right]}=\sqrt{\frac{\mathcal{c}^t}{d}}\left\|\triangledown F(\omega^t)\right\|.
\end{align}
On the one hand, given $BL\gamma QP\leq 1$, we find that
\begin{align}
\label{eq:thm4-4}
\sum_{l=2}^{B}(BL\gamma QP)\leq (B-1)(BL\gamma QP)^2=(B-1)B^2L^2\gamma^2 Q^2P^2.
\end{align}
By combining this expression with (\ref{eq:cl4-10}), we conclude that 
\begin{align}
\label{eq:thm4-5}
\E\left[\left.\left\|\sum_{i=1}^{B} v^{t,i} \right\| \right|\mathcal{F}^t \right]\leq \sum_{i=1}^{B}\E\left[\left.\left\|v^{t,i} \right\| \right|\mathcal{F}^t \right]\leq \E\left[\left.\left\|\mu^t \right\|\right|\mathcal{F}^t \right](B-1)B^2L\gamma QP.
\end{align}
On the other hand, given $\gamma\leq 1$,  from expression (\ref{eq:cl4-20}), we find that
\begin{equation}
\begin{split}
\label{eq:thm4-6}
(1+BL^2\gamma^2QP)^{B}\leq 1+\sum_{i=1}^{B}(B^2L^2\gamma^2QP)^i=1+B^3L^2QP.
\end{split}
\end{equation}
By applying (\ref{eq:thm4-6}) to (\ref{eq:cl4-19}), we deduce that
\begin{align}
\label{eq:thm4-7}
\E\left[\left.\left\|\sum_{i=1}^{B} v^{t,i}\right\|^2 \right|\mathcal{F}^t \right]\leq B\sum_{i=1}^{B}\E\left[\left.\left\| v^{t,i}\right\|^2 \right|\mathcal{F}^t \right]\leq B^5(1+B^3L^2QP)L^2\gamma^2QP\E\left[\left.\left\|\mu^t\right\|^2 \right|\mathcal{F}^t \right].
\end{align}
Second, let us evaluate the error after each iteration. By summing up all increments in iteration $t$, we obtain
\begin{equation}
\begin{split}
\label{eq:thm4-8}
\E&\left[\left.\left\|\omega^{t+1}-\omega^* \right\|^2\right|\mathcal{F}^t\right]=\E\left[\left. \left\|\omega^t-\gamma\left(B\mu^t+\sum_{i=1}^{B}v^{t,i} \right)-\omega^* \right\|^2\right|\mathcal{F}^t \right]\\
&=\left\|\omega^t-\omega^*\right\|^2-2\left \langle\E\left[\left.\gamma\left(B\mu^t+\sum_{i=1}^{B}v^{t,i} \right) \right|\mathcal{F}^t \right],\omega^t-\omega^*  \right \rangle+\E\left[\left.\left\|\gamma\left(B\mu^t+\sum_{i=1}^{B}v^{t,i} \right) \right\|^2\right|\mathcal{F}^t \right]\\
&\leq \left\|\omega^t-\omega^*\right\|^2-2\gamma B\left \langle\E\left[\left.\mu^t \right|\mathcal{F}^t \right],\omega^t-\omega^*  \right \rangle+2\gamma \E\left[\left.\left\|\sum_{i=1}^{B}v^{t,i}\right\|  \right|\mathcal{F}^t \right]\left\|\omega^t-\omega^* \right\|\\
&\tab +2\gamma^2B^2\E\left[\left.\left\|\mu^t\right\|^2\right|\mathcal{F}^t \right]+ 2\gamma^2\E\left[\left.\left\|\sum_{i=1}^{B}v^{t,i}\right\|^2\right|\mathcal{F}^t \right].
\end{split}
\end{equation}
Based on (\ref{eq:thm4-1}), (\ref{eq:thm4-2}), (\ref{eq:thm4-3}), (\ref{eq:thm4-5}) and (\ref{eq:thm4-7}), (\ref{eq:thm4-8}) can be further simplified as
\begin{equation}
\begin{split}
\label{eq:thm4-9}
\E&\left[\left.\left\|\omega^{t+1}-\omega^* \right\|^2\right|\mathcal{F}^t\right]\leq \left\|\omega^t-\omega^*\right\|^2-\frac{2\gamma B\mathcal{c}^t}{d}\left\langle\triangledown F(\omega^t),\omega^t-\omega^*  \right \rangle+\frac{2\gamma^2B^2\mathcal{c}^t}{d}\left\|\triangledown F(\omega^t)\right\|^2\\
& +2(B-1)B^2L\gamma^2 QP\E\left[\left.\left\|\mu^t \right\|\right|\mathcal{F}^t \right]\left\|\omega^t-\omega^*\right\|+2B^5(1+B^3L^2QP)L^2\gamma^4QP\E\left[\left.\left\|\mu^t\right\|^2 \right|\mathcal{F}^t \right]\\
&\leq \left\|\omega^t-\omega^*\right\|^2-\frac{2\gamma B\mathcal{c}^t}{d}\left\langle\triangledown F(\omega^t),\omega^t-\omega^*  \right \rangle+\frac{2\gamma^2B^2\mathcal{c}^t}{d}\left\|\triangledown F(\omega^t)\right\|^2\\
& +2(B-1)B^2L\gamma^2 QP \sqrt{\frac{\mathcal{c}^t}{d}}\left\| \triangledown F(\omega^t)\right\|\left\|\omega^t-\omega^*\right\|+2B^5(1+B^3L^2QP)L^2\gamma^4QP\frac{\mathcal{c}^t}{d}\left\|\triangledown F(\omega^t)\right\|^2.
\end{split}
\end{equation}
Recalling the Lipschitz continuity of the gradient of the objective function stated in Assumption \hyperref[as:3]{3}, we have that [\cite{nesterov2013introductory},Theorem 2.1.5]
\begin{align}
\label{eq:thm4-10}
\frac{1}{L}\left\|\triangledown F(\omega^t)\right\|^2=\frac{1}{L}\left\|\triangledown F(\omega^t)-\triangledown F(\omega^*)\right\|^2\leq \left \langle\triangledown F(\omega^t)- \triangledown F(\omega^*),\omega^t-\omega^* \right \rangle.
\end{align}
Because $F(\omega)$ is strongly convex by Assumption \hyperref[as:2]{2}, we have
\begin{align}
\label{eq:thm4-11}
\frac{1}{\xi}\left\|\triangledown F(\omega^t)\right\|=\frac{1}{\xi}\left\|\triangledown F(\omega^t)-\triangledown F(\omega^*)\right\|\geq \left\|\omega^t-\omega^*\right\|.
\end{align}
By using (\ref{eq:thm4-10}) and (\ref{eq:thm4-11}), \ref{eq:thm4-9} can be reformulated as 
\begin{equation}
\begin{split}
\label{eq:thm4-12}
\E&\left[\left.\left\|\omega^{t+1}-\omega^* \right\|^2\right|\mathcal{F}^t\right]\leq \left\|\omega^t-\omega^*\right\|^2-\frac{2\gamma B\mathcal{c}^t}{Ld}\left\|\triangledown F(\omega^t)\right\|^2+\frac{2\gamma^2B^2\mathcal{c}^t}{d}\left\|\triangledown F(\omega^t)\right\|^2\\
&\tab+\frac{2(B-1)B^2L\gamma^2QP}{\xi}\sqrt{\frac{\mathcal{c}^t}{d}}\left\|\triangledown F(\omega^t)\right\|^2+2B^5(1+B^3L^2QP)L^2\gamma^4QP\frac{\mathcal{c}^t}{d}\left\|\triangledown F(\omega^t)\right\|^2\\
&=\left\|\omega^t-\omega^*\right\|^2+\left(-\frac{2\gamma B\mathcal{c}^t}{Ld} +\frac{2\gamma^2B^2\mathcal{c}^t}{d}+\frac{2(B-1)B^2L\gamma^2QP}{\xi}\sqrt{\frac{\mathcal{c}^t}{d}}\right.\\
&\tab\left.+2B^5(1+B^3L^2QP)L^2\gamma^4QP\frac{\mathcal{c}^t}{d}\right)\left\|\triangledown F(\omega^t)\right\|^2\\
&=\left\|\omega^t-\omega^*\right\|^2+A(t)\left\|\triangledown F(\omega^t)\right\|^2,
\end{split}
\end{equation}
with 
\begin{align*}
A(t)=-\frac{2\gamma B\mathcal{c}^t}{Ld} +\frac{2\gamma^2B^2\mathcal{c}^t}{d}+\frac{2(B-1)B^2L\gamma^2QP}{\xi}\sqrt{\frac{\mathcal{c}^t}{d}}+2B^5(1+B^3L^2QP)L^2\gamma^4QP\frac{\mathcal{c}^t}{d}.
\end{align*}
Therefore, $\E\left[\left.\left\|\omega^{t+1}-\omega^* \right\|^2\right|\mathcal{F}^t\right]\leq \left\|\omega^t-\omega^*\right\|^2$ as long as $A(t)\leq 0$ for all $t$, $BL\gamma QP\leq 1$ and $\gamma\leq 1$.

In view of Assumption \hyperref[as:3]{3} we have
\begin{equation}
\begin{split}
\label{eq:thm4-13}
\E&\left[\left.F(\omega^{t+1}) \right|\mathcal{F}^t \right]\leq \E\left[\left.F(\omega^{t})+\left \langle \triangledown F(\omega^t),\omega^{t+1}-\omega^t \right \rangle +\frac{L}{2}\left\|\omega^{t+1}-\omega^t \right\|^2\right|\mathcal{F}^t \right]\\
&=F(\omega^t)+\E\left[\left.\left \langle \triangledown F(\omega^t),-\gamma B\mu^t-\gamma \sum_{i=1}^tv^{t,i} \right \rangle \right|\mathcal{F}^t \right]+\frac{L}{2}\E\left[\left.\left\|\gamma B\mu^t+\gamma \sum_{i=1}^{B}v^{t,i} \right\|^2 \right|\mathcal{F}^t \right]\\
&\leq F(\omega^t)-\gamma B\left \langle \triangledown F(\omega^t),\E\left[\left.\mu^t\right|\mathcal{F}^t \right] \right \rangle +\gamma\left\|\triangledown F(\omega^t)\right\|\E\left[\left.\left\|\sum_{i=1}^{B}v^{t,i} \right\|\right|\mathcal{F}^t\right]+L\gamma^2 B^2\E\left[\left.\left\|\mu^t\right\|^2 \right|\mathcal{F}^t \right]\\
&\tab +L\gamma^2\E\left[\left.\left\| \sum_{i=1}^{B}v^{t,i} \right\|^2 \right|\mathcal{F}^t \right].
\end{split}
\end{equation}
Substituting (\ref{eq:thm4-1}), (\ref{eq:thm4-2}), (\ref{eq:thm4-5}) and (\ref{eq:thm4-7}) implies
\begin{equation}
\begin{split}
\label{eq:thm4-14}
\E&\left[\left.F(\omega^{t+1}) \right|\mathcal{F}^t \right]\leq F(\omega^t)-\frac{\gamma B\mathcal{c}^t}{d}\left\|\triangledown F(\omega^t)\right\|^2+(B-1)B^2L\gamma^2 QP\sqrt{\frac{\mathcal{c}^t}{d}}\left\|\triangledown F(\omega^t)\right\|^2\\
&\tab +\frac{L\gamma^2 B^2 \mathcal{c}^t}{d}\left\|\triangledown F(\omega^t)\right\|^2+B^5(1+B^3L^2QP)L^3\gamma^4QP\frac{\mathcal{c}^t}{d}\left\|\triangledown F(\omega^t)\right\|^2\\
&=F(\omega^t)+\left(-\frac{\gamma B\mathcal{c}^t}{d}+(B-1)B^2L\gamma^2 QP\sqrt{\frac{\mathcal{c}^t}{d}}+\frac{L\gamma^2 B^2 \mathcal{c}^t}{d}\right.\\
&\tab\left.+B^5(1+B^3L^2QP)L^3\gamma^4QP\frac{\mathcal{c}^t}{d} \right)\left\|\triangledown F(\omega^t)\right\|^2\\
&=F(\omega^t)+B(t)\left\|\triangledown F(\omega^t)\right\|^2.
\end{split}
\end{equation}
A similar requirement is needed in (\ref{eq:thm4-14}) as that in (\ref{eq:thm4-12}), i.e. $B(t)<0$ for all $t$.

Let us denote $\bigtriangleup_t=F(\omega^t)-F(\omega^*)$. Then if $A(t)\leq 0$ for all $t$, we obtain
\begin{align*}
\bigtriangleup_t=F(\omega^t)-F(\omega^*)\leq \left\langle\triangledown F(\omega^t),\omega^t-\omega^* \right\rangle\leq \left\|\omega^0-\omega^*\right\|\left|\triangledown F(\omega^t)\right\|.
\end{align*}
Thus, if $B(t)<0$ for all $t$ and by using the law of iterated expectation, (\ref{eq:thm4-14}) becomes
\begin{align*}
\E\left[\bigtriangleup_{t+1}\right]\leq \E\left[\bigtriangleup_k\right]+\min_tB(t)\E\left[\left\|\triangledown F(\omega^t)\right\|^2\right]\leq \E\left[\bigtriangleup_k\right]+\frac{\min_tB(t)}{\left\|\omega^0-\omega^*\right\|^2}\E\left[\bigtriangleup_k^2\right],
\end{align*}
which in turn yields
\begin{align*}
\frac{1}{\E\left[\bigtriangleup_{t+1}\right]}\geq \frac{1}{\E\left[\bigtriangleup_{t}\right]}-\frac{\min_tB(t)}{\left\|\omega^0-\omega^*\right\|^2}\frac{\E\left[\bigtriangleup_{t}\right]}{\E\left[\bigtriangleup_{t+1}\right]}\geq \frac{1}{\E\left[\bigtriangleup_{t}\right]}-\frac{\min_tB(t)}{\left\|\omega^0-\omega^*\right\|^2}.
\end{align*}
Summing up these inequalities, we get
\begin{align*}
\frac{1}{\E\left[\bigtriangleup_{t+1}\right]}\geq \frac{1}{\bigtriangleup_{0}}-\frac{\min_tB(t)}{\left\|\omega^0-\omega^*\right\|^2}(t+1),
\end{align*}
which in turn yields
\begin{equation}
\label{eq:thm4-15}
\lim\limits_{t\to\infty}\E\left[\bigtriangleup_{t+1}\right]=0. 
\end{equation}
Hence, (\ref{eq:thm4-0}) follows from (\ref{eq:thm4-15}) and similar reasoning as Theorem \ref{thm:1}, provided $A(t)\leq 0$, $B(t)<0$ for all $t$ and $BL\gamma QP\leq 1$, i.e.
\begin{align}
\label{eq:thm4-16}
&\frac{2\gamma B\mathcal{c}^t}{Ld} \geq \frac{2\gamma^2B^2\mathcal{c}^t}{d}+\frac{2(B-1)B^2L\gamma^2QP}{\xi}\sqrt{\frac{\mathcal{c}^t}{d}}+2B^5(1+B^3L^2QP)L^2\gamma^4QP\frac{\mathcal{c}^t}{d}\\
\label{eq:thm4-17}
&\frac{\gamma B\mathcal{c}^t}{d}>(B-1)B^2L\gamma^2 QP\sqrt{\frac{\mathcal{c}^t}{d}}+\frac{L\gamma^2 B^2 \mathcal{c}^t}{d}+B^5(1+B^3L^2QP)L^3\gamma^4QP\frac{\mathcal{c}^t}{d}\\
\label{eq:thm4-18}
&BL\gamma QP\leq 1\\
\label{eq:thm4-18-2}
&\gamma\leq 1.
\end{align}
By multiplying (\ref{eq:thm4-16}) and (\ref{eq:thm4-17}) by $\frac{1}{2\gamma B}$ and $\frac{1}{\gamma B}$,  respectively, we have that
\begin{align}
\label{eq:thm4-19}
&\frac{\mathcal{c}^t}{Ld} \geq \frac{\gamma B\mathcal{c}^t}{d}+\frac{(B-1)BL\gamma QP}{\xi}\sqrt{\frac{\mathcal{c}^t}{d}}+B^4(1+B^3L^2QP)L^2\gamma^3QP\frac{\mathcal{c}^t}{d},\\
\label{eq:thm4-20}
&\frac{\mathcal{c}^t}{d}>(B-1)BL\gamma QP\sqrt{\frac{\mathcal{c}^t}{d}}+\frac{L\gamma B \mathcal{c}^t}{d}+B^4(1+B^3L^2QP)L^3\gamma^3QP\frac{\mathcal{c}^t}{d}.
\end{align}
Note that if we are able to find a constant learning rate $\gamma$ satisfying 
\begin{align}
\label{eq:thm4-21}
&\bar{A}_1=\frac{\min_t\mathcal{c}^t}{Ld} \geq \gamma \left[\left(B+\frac{(B-1)BL QP}{\xi}\right)+B^4(1+B^3L^2QP)L^2\gamma^2QP\right]=\bar{B}_1\gamma+\bar{C}_1\gamma^3\\
\label{eq:thm4-22}
&\bar{A}_2=\frac{\min_t\mathcal{c}^t}{d}>\gamma\left[\left((B-1)BL QP+LB\right) +B^4(1+B^3L^2QP)L^3\gamma^2QP\right]=\bar{B}_2\gamma+\bar{C}_2\gamma^3,
\end{align}
with
\begin{align*}
&\bar{A}_1=\frac{\min_t\mathcal{c}^t}{Ld} \\
&\bar{B}_1=B+\frac{(B-1)BL QP}{\xi}\\
&\bar{C}_1=B^4(1+B^3L^2QP)L^2QP\\
&\bar{A}_2=\frac{\min_t\mathcal{c}^t}{d}\\
&\bar{B}_2=(B-1)BL QP+LB\\
&\bar{C}_2=B^4(1+B^3L^2QP)L^3QP,
\end{align*}
then the same constant learning rate $\gamma$ is also valid for (\ref{eq:thm4-19}) and (\ref{eq:thm4-20}). Observing that the right-hand sides of both (\ref{eq:thm4-21}) and (\ref{eq:thm4-22}) have the same form, i.e. they are both cubic equations with $0$ being the only real root. Solving (\ref{eq:thm4-21}) and (\ref{eq:thm4-22}) show that
\begin{align*}
\gamma\in \left(0,\min\left\{\gamma_1,\gamma_2\right\}\right),
\end{align*} 
where
\begin{align*}
&\gamma_1=-2\sqrt{\frac{\bar{B}_1}{3\bar{C}_1}}\sinh\left(\frac{1}{3}\arcsinh\left(-\frac{3\bar{A}_1}{2\bar{B}_1}\sqrt{\frac{3\bar{C}_1}{\bar{B}_1}} \right) \right)\\
&\gamma_2=-2\sqrt{\frac{\bar{B}_2}{3\bar{C}_2}}\sinh\left(\frac{1}{3}\arcsinh\left(-\frac{3\bar{A}_2}{2\bar{B}_2}\sqrt{\frac{3\bar{C}_2}{\bar{B}_2}} \right) \right).
\end{align*} 
Combining (\ref{eq:thm4-18}), (\ref{eq:thm4-18-2}) with the above equation, finally, the constant learning rate is required to be 
\begin{align*}
\gamma\in\left(0,\min\left\{1,\frac{1}{BLQP},\gamma_1,\gamma_2\right\}\right).
\end{align*}
\end{proof}

\end{document}